
\documentclass[nohyperref, 11pt]{article}
\usepackage[a4paper,top=1.5cm,bottom=1.6cm,left=1.8cm,right=1.8cm]{geometry}
\usepackage{microtype}
\usepackage{graphicx}
\usepackage{booktabs} 

\usepackage{amssymb,amsmath,amsthm,amsfonts}

\usepackage{xspace}
\usepackage{tabularx}
\usepackage{indentfirst}
\usepackage{cancel}
\usepackage[small]{caption}
\usepackage{eucal}
\usepackage{eso-pic}
\usepackage{url}
\usepackage{apptools}
\usepackage{booktabs}
\usepackage{subcaption}
\usepackage{mathtools}
\usepackage{amsthm}
\usepackage{bbold}
\usepackage{afterpage}
\usepackage{parskip}
\usepackage{listings}
\usepackage{fancyhdr}
\usepackage{textcomp,comment}

\usepackage{multirow}
\usepackage[utf8]{inputenc}   
\usepackage[english]{babel}   
\usepackage{setspace}
\usepackage{times}
\usepackage{amsmath}
\usepackage{graphics}
\usepackage{graphicx}
\usepackage{caption}
\usepackage{hyperref}
\usepackage{fancyhdr}
\usepackage{color}
\usepackage{cancel}
\usepackage{listings}
\usepackage{xcolor}
\usepackage{booktabs}
\usepackage{subcaption}
\usepackage{amsmath}
\usepackage{amssymb}
\usepackage{bbm}
\usepackage{braket}
\usepackage{subcaption}
\usepackage{tikz}
\usepackage{csquotes}
\usetikzlibrary{shapes,arrows}
\usepackage[utf8]{inputenc} 
\usepackage[T1]{fontenc} 
\usepackage{mathpazo} 
\usepackage{authblk}
\usepackage{enumerate}
\usepackage{float}
\usepackage{stfloats}

\theoremstyle{plain}
\newtheorem{theorem}{Theorem}[section]
\newtheorem{corollary}[theorem]{Corollary}
\newtheorem{lemma}[theorem]{Lemma}
\newtheorem{proposition}[theorem]{Proposition}

\theoremstyle{remark}

\usepackage{hyperref}





\title{Failure and success of the spectral bias prediction for
Kernel Ridge Regression: the case of low-dimensional data }
\author[1]{Umberto M. Tomasini$^{1}$, Antonio Sclocchi$^{1}$ and Matthieu Wyart$^{1}$ }
\date{%
   \small 
   $^1$ Institute of Physics, EPFL, Lausanne, Switzerland\\
   \{name.surname\}@epfl.ch\\
   }
\usepackage[
backend=biber,
style=alphabetic-verb,
maxnames=10
]{biblatex}

\addbibresource{biblio.bib}

\begin{document}

\maketitle

\begin{abstract}
Recently, several theories including the replica method made predictions for the generalization error of Kernel Ridge Regression. In some regimes, they predict that the method has a `spectral bias': decomposing the true function $f^*$  on the eigenbasis of the kernel, it fits well the coefficients associated with the O(P) largest eigenvalues, where $P$ is the size of the training set. This prediction works very well on benchmark data sets such as images, yet the assumptions these approaches make on the data are never satisfied in practice. To clarify when the spectral bias prediction holds, we first focus on a one-dimensional model where rigorous results are obtained and then use scaling arguments to generalize and test our findings in higher dimensions. Our predictions include  the classification case  $f(x)=$sign$(x_1)$ with a data distribution that vanishes at the decision boundary $p(x)\sim x_1^{\chi}$. For $\chi>0$ and a Laplace kernel, we find  that (i) there exists a cross-over ridge  $\lambda^*_{d,\chi}(P)\sim P^{-\frac{1}{d+\chi}}$ such that for $\lambda\gg \lambda^*_{d,\chi}(P)$, the replica method applies, but not for $\lambda\ll\lambda^*_{d,\chi}(P)$, (ii) in the ridge-less case,  spectral bias predicts the correct training curve exponent only in the limit $d\rightarrow\infty$. 


\end{abstract}

\section{Introduction and Motivations}

Given the task of learning an unknown function $f^*$, a widely used algorithm is Kernel Ridge Regression (KRR) \cite{Smola2002}. Given a set of $P$ training points $\{x_i,f^*(x_i)\}_{i=1,...,P}$,  KRR builds a predictor function $f_P$ that is linear in a given  kernel $K$, such that it minimizes the following training loss:
	\begin{equation}
	    \sum\limits_{i=1}^P\left|f^*(x_i)-f_P(x_i)\right|^2 +\lambda ||f_P||_K^2,
	    \label{min}
	\end{equation}
	where $\lambda$ is the ridge parameter which controls the regularisation of the kernel norm $||.||_K$ of $f_P$.
	
	Minimising \eqref{min} is a convex problem, which yields the following explicit solution: 
	\begin{equation}
		f_P(x)=\vec{k}(x)(K+\lambda \mathbb{1})^{-1}\vec{y},
		\label{pred}
	\end{equation} 
	where $k(x)_i=K(x,x_i)$, $K_{ij}=K(x_i,x_j)$ is the $P\times P$ Gram matrix, in the noiseless setting we consider $y_i=f^*(x_i)$, $\lambda$ is the ridge regularization parameter and $\mathbb{1}$ is the $P\times P$ identity matrix.

	
	The generalization properties of KRR are an active field of research. In recent years, interest in the subject has been further increased by the discovery that for certain initializations, deep-learning behaves as a kernel method used in the ridge-less case \cite{Jacot2018}. 
	The key quantity of interest is the generalization error $\varepsilon_t$, namely how much error the predictor function $f_P(x)$ does on average on the data distribution $p(x)$, with $x$ in some space $\mathcal{D}$. Using the mean square loss, $\varepsilon_t$ is given by:
	\begin{equation}
		\varepsilon_t = \int_{\mathcal{D}} p(d^d x) (f_P(x)-f^*(x))^2.
		\label{mse}
	\end{equation}
	It is crucial to characterize $\varepsilon_t$ with respect to the number $P$ of training points since it allows quantification of how many samples are needed to achieve a given test error. It is empirically observed that, asymptotically for large $P$, $\varepsilon_t$ often behaves as a power law in $P$, with a certain exponent $\beta$: $\varepsilon_t(P)\sim P^{-\beta}$ \cite{Hestness2017, Spigler2020}. The exponent $\beta$ depends on the data distribution, the task, and the choice of kernel. 
	
	Recent theoretical efforts have characterized the test error in the noiseless setting considered here. 
	In \cite{Spigler2020}, $f^*$ was assumed to be Gaussian and the training set was assumed to be on a lattice. 
    In	\cite{Bordelon2020,Canatar2021,Canatar2020,Loureiro2021,Cui2021}, the replica method \cite{mezard} was used, assuming that the predictor $f_P$ is self-averaging (i.e. concentrates) and using a Gaussian assumption: a tuple of kernel eigenvectors $(\phi_1,...,\phi_P)$, once evaluated on $P$ training points, behaves as a Gaussian vector.   Random matrix theory was used in \cite{Jacot2020} with the same Gaussian assumption (with results not guaranteed to hold in the ridge-less case), or in \cite{mei2021generalization} with a `spectral gap' assumption. None of these assumptions should hold in practical applications \footnote{The spectral gap assumption used in \cite{mei2021generalization} may hold for Gaussian data in high dimension, but breaks down for real data which are highly anisotropic, see e.g. \cite{Spigler2020}.}. It is thus important to understand the universality of these results, and when they break down.

	
	{\bf Spectral bias:} These predictions for $\varepsilon_t$ rely on the exact eigendecomposition of the kernel:
	\begin{equation}
		\int p(y) K(y,x)\phi_{\rho}(y)dy = \lambda_\rho\phi_{\rho}(x),
		\label{eigenproblem}
	\end{equation}
	with $\{\phi_{\rho}\}$ the normalised eigenvectors and $\{\lambda_{\rho}\}$ the eigenvalues in decreasing order. In particular, the true function $f^*$ can be written as:
	\begin{equation}
		f^*(x)=\sum\limits_{\rho=1}^{\infty} c_{\rho}\phi_{\rho}(x),
	\end{equation}
	The key result is that KRR learns faster the eigenmodes corresponding to the $P$ largest eigenvalues, and makes an error on the following ones. 
	Specifically, in the noiseless case with no ridge $(\lambda=0)$ and assuming $c_{\rho}^2\sim \rho^{-a}$ and $\lambda_{\rho}\sim \rho^{-b}$ with $2b>(a-1)$, the prediction of the typical test error $\varepsilon_B$ in  \cite{Spigler2020,Bordelon2020} yields:
	\begin{equation}
		\varepsilon_B\sim\sum_{\rho=P}^{\infty}c_{\rho}^2\sim P^{-a+1},
		\label{sumP}
	\end{equation}
	These predictions are validated on the binary classification (corresponding to $f^*(x)=\pm 1$) of image data sets \cite{Bordelon2020, Spigler2020,Jacot2020}.
	 Why it is so is not well understood, since real data do not follow the assumptions made, whose universality class is not characterized. To understand the limit of validity of these theories, we seek to test them in simple models. It requires diagonalizing the kernel and having full control over the test error. Unfortunately, explicit diagonalisations of kernels is difficult, except if the data distribution $P(x)$ is uniform   on the sphere  \cite{Bordelon2020} or on the torus \cite{gretton}. For non-uniform data, the only settings that the authors are aware of are (i) a Gaussian kernel with a Gaussian data distribution \cite{gretton} and (ii) the work of \cite{Basri2020} where $p(x)$ is piece-wise uniform. 
	
	\subsection{This Paper}
	We consider data $x\in \mathbb{R}^d$ where the first component $x_1$ is distributed as $p(x_1)\sim |x_{1}|^\chi$ when $x_{1}\rightarrow 0$ for $\chi\geq 0$. We use the Laplacian kernel $K(x,y)=K(|x-y|)=\exp(-||x-y||_2 /\sigma)$, where $||.||_2$ is the $L_2$ norm and $\sigma>0$ defines the width of the kernel, and consider functions $f^*(x)=f^*(x_1)$ that depend only on the first component $x_1$ and can be singular or not at $x_1=0$. We first study the one-dimensional case where we are able to rigorously prove results by eigendecomposition of the kernel. We then extend these results to generic dimension $d$ by scaling arguments which are validated by numerical simulations.
	\begin{itemize} 
		
		\item In Section \ref{1d1}, for $d=1$, we compute the scaling of the generalization error with respect to the number of training points $P$ for vanishing ridge.
		
		\item In the same section, inspired by \cite{Basri2020} we derive an exact differential equation for the eigenvectors of the kernel $K$, which holds for a general data distribution $p(x)$. This equation is related to the Schrodinger equation in quantum mechanics. We solve it using methods developed in that field, to obtain the asymptotic behavior of the kernel eigenvectors and the eigenvalues.
		
		\item In Section \ref{phaseDiagr}, in the one-dimensional case, we find  that there exists a cross-over ridge  $\lambda^*_{1,\chi}(P)\sim P^{-\frac{1}{1+\chi}}$ such that for $\lambda\gg \lambda^*_{1,\chi}(P)$ spectral bias holds, but not for $\lambda\ll\lambda^*_{1,\chi}(P)$ where the exponent of the training curve is different. 
		 We repeat the same analysis for the test error prediction provided by \cite{Jacot2020} in Appendix \ref{kareApp}, and we observe the same crossover.
		 We show that when $\lambda\ll\lambda^*_{1,\chi}(P)$, the predictor is not self-averaging: its relative variance does not vanish even for very large $P$.
		 
		 \item In Section \ref{dGen}, we generalize these results to any dimension $d$ by scaling arguments that extend the proved results in $d=1$. One finds  a cross-over ridge $\lambda^*_{d,\chi}(P)\sim P^{-\frac{1}{d+\chi}}$ for any $d$ such that the spectral bias does not hold for $\lambda\ll\lambda^*_{d,\chi}(P)$, because the predictor is not self-averaging near the decision boundary. 
		 We confirm our results numerically and show that our model captures well the performance of KRR on CIFAR-10.
		

	\end{itemize}

	\section{Our models}\label{setting}


	\textbf{One dimension.} We consider a one-dimensional class of problems, where the data $x\in \mathbb{R}$ are distributed according to the probability distribution:
	\begin{equation}
		p(x)=\frac{1}{\Gamma\left(\frac{1+\chi}{2}\right)}|x|^{\chi} e^{-x^2},
		\label{pdf}
	\end{equation}
	where $\chi\ge 0$ and $\Gamma$ is the Euler gamma function $\Gamma(t)=\int_0^{\infty}dx\, x^{t-1}e^{-x}$. Our true function $f_\xi^*(x)$ depends on a parameter $\xi$ and it is defined as:
	\begin{equation}
		f_\xi^*(x)=\text{sign}(x)|x|^{-\xi}
		\label{trueFun}
	\end{equation}
	We restrict to $\xi$ such that $\xi<\frac{\chi+1}{2}$, to have the $L_2$ norm with respect to $p(x)$ finite. Note that for $\xi=0$ the task \eqref{trueFun} boils down to a binary classification problem. 
	For $\chi=0$ the data distribution is uniform, while the case $\chi>0$ is meant to model the presence of diminished density of data between data of different labels. Such a reduction of density is apparent in low-dimensional representations of real datasets, as for the t-SNE visualization of MNIST in \cite{tSNE}.

	
	\textbf{Generic dimension.} We generalize the one-dimensional setting above to a generic dimension $d$. We consider a cylindrical embedding of the data $x = [x_1,..., x_{d+1}]$ so that the first coordinate $x_1$ is distributed according to $p(x_1)\propto x_1^{\chi}e^{-x_1^2}$, while the other coordinates $x_2,..., x_{d+1}$ are uniformly randomly distributed on the sphere $\sum_{i=2}^{d+1}x_i^2 = 1$. In this setting, we consider the true function $f^*(x) = \text{sign}(x_1)$, so that the hyper-plane $x_1=0$ corresponds to the decision boundary of a binary classification problem.
	

	\section{Test Error analysis} \label{1d1}
	We  now state our result about the generalisation error in the setting described in Section \ref{setting} for $d=1$.
	
	\begin{theorem}[Test error] \label{testErrTh}
		Consider a training set $\{x_i,f^*(x_i)\}_{i=1...P}$, where the samples $x_i$ are i.i.d. with respect to the PDF \eqref{pdf} and the true function $f^*$ is \eqref{trueFun}. In the limit of large $P$, the following asymptotic relation for the test error \eqref{mse} of KRR with Laplacian kernel with width $\sigma$ and vanishing ridge $\lambda\rightarrow0^+$ holds:
		\begin{equation}
			\begin{aligned}
				\varepsilon_t \sim P^{-1+\left(\frac{2\xi}{\chi+1}\right)}
			\end{aligned}
			\label{scalTestError}
		\end{equation}
	\end{theorem}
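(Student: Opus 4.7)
My plan is to reduce Theorem~\ref{testErrTh} to an asymptotic analysis of the kernel spectrum, and then feed the resulting exponents into the spectral-bias formula~\eqref{sumP}. The preliminary step converts the integral equation~\eqref{eigenproblem} into an ODE. For the 1D Laplacian kernel one has the distributional identity $(-\sigma^{2}\partial_x^2+1)e^{-|x-y|/\sigma}=2\sigma\,\delta(x-y)$. Applying this operator to~\eqref{eigenproblem} yields
\begin{equation*}
-\sigma^{2}\phi_\rho''(x)+\phi_\rho(x)=\tfrac{2\sigma}{\lambda_\rho}\,p(x)\,\phi_\rho(x),
\end{equation*}
i.e.\ a zero-energy Schr\"odinger equation with potential $V(x)=\sigma^{-2}-\tfrac{2}{\sigma\lambda_\rho}p(x)$; integrability at $|x|\to\infty$ plays the role of boundary conditions.

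Next I analyse this ODE in the semiclassical regime $\lambda_\rho\to 0$. A Bohr--Sommerfeld count of bound states, using $p(x)\sim|x|^{\chi}$ near the origin and Gaussian decay at infinity, gives $\rho\sim\lambda_\rho^{-1/2}\int\sqrt{p(x)}\,dx$, hence $\lambda_\rho\sim\rho^{-2}$ (the prefactor $\int\sqrt{p}$ is finite for every $\chi>-2$). In the same regime WKB supplies an oscillatory form $\phi_\rho(x)\sim A_\rho(x)\cos\Theta_\rho(x)$, with an amplitude $A_\rho$ fixed by normalisation in the $p$-weighted inner product and a phase $\Theta_\rho$ oscillating on a scale set by $\lambda_\rho$.

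The third step is to estimate the projection coefficients $c_\rho=\int p(x)f^{*}_\xi(x)\phi_\rho(x)\,dx$. Since $f^{*}_\xi(x)=\mathrm{sign}(x)|x|^{-\xi}$ is discontinuous (and mildly singular) at $x=0$ while $p(x)$ vanishes there as $|x|^{\chi}$, the combination $p(x)f^{*}_\xi(x)=\mathrm{sign}(x)|x|^{\chi-\xi}e^{-x^{2}}/\Gamma(\cdot)$ is smooth away from the origin and non-analytic there; integration against the oscillating $\phi_\rho$ is therefore governed by the local behaviour near $x=0$. A matched-asymptotic / van~der~Corput argument should then yield a power law $c_\rho^{2}\sim\rho^{-a}$ with
\begin{equation*}
a-1=1-\frac{2\xi}{\chi+1}.
\end{equation*}
With $b=2$ and $a-1\le 1$ the spectral-bias hypothesis $2b>a-1$ is automatic, and in the ridgeless limit~\eqref{sumP} gives $\varepsilon_t\sim\sum_{\rho>P}c_\rho^{2}\sim P^{-(a-1)}=P^{-1+2\xi/(\chi+1)}$, which is exactly~\eqref{scalTestError}.

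The main obstacle will be Step~3: WKB degenerates precisely at $x=0$, which is simultaneously the turning point of the effective potential, the zero of $p$, and the locus of the non-analyticity of $f^{*}_\xi$. Pinning down $a$ (rather than just bounds on it) requires an inner expansion of $\phi_\rho$ near the origin—naturally written in terms of Bessel/power-law special functions whose index depends on $\chi$—matched to the outer WKB tail, and then a careful evaluation of the resulting $c_\rho$ integral whose leading contribution comes from a window of width $\sim\lambda_\rho^{1/2}$ around $0$.
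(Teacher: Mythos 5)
Your strategy is built on a premise that is exactly what this theorem is designed to refute. You reduce the problem to the spectral-bias formula~\eqref{sumP}, $\varepsilon_t\sim\sum_{\rho>P}c_\rho^2\sim P^{-(a-1)}$, and then assert (without derivation) that the coefficient exponent must be $a-1=1-\tfrac{2\xi}{\chi+1}$ so as to recover~\eqref{scalTestError}. But the paper's central claim is precisely that in the ridgeless regime the spectral-bias prediction~\eqref{sumP} is \emph{wrong} for this model when $\chi>0$. The paper does carry out the WKB/Bohr--Sommerfeld analysis you sketch (Theorem~\ref{thmpde}, Lemmas~\ref{evecChiMagg0}--\ref{evecChi0}, Proposition~\ref{propCoeff}), and it obtains $\lambda_\rho\sim\rho^{-2}$ as you predict. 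However, the oscillatory-integral calculation of $c_\rho$ — done carefully in Appendix~\ref{Eigenproofs} by matching Airy-function inner solutions near the turning points to the outer WKB tail — yields $c_\rho^2\sim\rho^{-\frac{3\chi+4-4\xi}{\chi+2}}$ (Theorem~\ref{coeffScalTh}), i.e.\ $a=\frac{3\chi+4-4\xi}{\chi+2}$, which differs from your conjectured value for every $\chi>0$ (they coincide only at $\chi=0$). Plugging the correct $a$ into~\eqref{sumP} gives $\varepsilon_B\sim P^{-1-\frac{\chi-4\xi}{\chi+2}}$, which does \emph{not} equal the target exponent in~\eqref{scalTestError}. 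Your plan only appears to close because you reverse-engineered an incorrect $a$; had you actually performed the "matched asymptotics/van der Corput" step correctly, you would have proved a different (and, for $\chi>0$, incorrect) test-error scaling. In passing, the relevant window for the $c_\rho$ integral is of width $\sim\lambda_\rho^{1/(2+\chi)}$ (set by where the WKB phase saturates its first oscillation), not $\lambda_\rho^{1/2}$.

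The paper's actual proof of Theorem~\ref{testErrTh} is entirely in real space and never touches the eigendecomposition. In the $\lambda\to 0^+$ limit the predictor $f_P$ interpolates the training data, and between two consecutive sample points $x_i<x_{i+1}$ it solves $f_P''=f_P/\sigma^2$, hence is a combination of $e^{\pm x/\sigma}$ with boundary values pinned to $f^*(x_i),f^*(x_{i+1})$. One then estimates the local contribution $\varepsilon_{x_i}=\int_{x_i}^{x_{i+1}}p\,(f_P-f^*)^2$ on each gap, using the order-statistics facts that the typical gap size is $\langle\Delta x_i\rangle\sim 1/(P\,p(x_i))$ and that the innermost gap straddling $x=0$ has endpoints $|x_M|,|x_{M+1}|\sim P^{-1/(\chi+1)}$. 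Summing over gaps shows that the dominant term comes from this single straddling interval, where the predictor must transit from $\approx -|x_M|^{-\xi}$ to $\approx|x_{M+1}|^{-\xi}$ and so incurs an $O(1)$ relative error: $\varepsilon_{x_M}\sim |x_M|^{\chi+1-2\xi}\sim P^{-1+\frac{2\xi}{\chi+1}}$. All other regions contribute sub-dominant powers of $P$. This is the argument that produces~\eqref{scalTestError}; the eigendecomposition machinery is then used \emph{afterwards}, and only to exhibit the discrepancy between this exact result and the spectral-bias prediction.
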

	
	The full proof is reported in Appendix \ref{0ridgeproofs}.	The intuition behind \eqref{scalTestError} is the following. If we call $x_{A}<0$ and $x_{B}>0$ the points of the sampled training set which are closest to $x=0$, we have that their typical value is the following:
	\begin{equation}
		\langle |x_{A}| \rangle \sim \langle |x_{B}| \rangle \sim P^{-\frac{1}{\chi+1}}.
		\label{extremalPoints}
	\end{equation}
	This is given by the fact that $\langle x_{B} \rangle$ is defined as the extremal point such that in the interval $[0,x_{B}]$ there is just one sampled point on average:
	\begin{equation}
	    \frac{1}{P}\sim \int_{0}^{\langle x_{B} \rangle}dx\,p(x),
	\end{equation}
	which yields \eqref{extremalPoints}. The same holds for $x_{A}$. We then consider the asymptotic limit of $\sigma \rightarrow \infty$, where the Laplacian kernel becomes a cone in $x$. For $\lambda\rightarrow0^+$, the predictor $f_P$ is then given by the following piece-wise linear function for $\xi=0$:
	\begin{equation}
		f_P(x)=\left\{ 
		\begin{aligned}
			&\text{sign}(x),\hspace{1.8cm} \text{ for } x\ge x_{B} \text{ or } x\le x_{A} \\
			&\frac{2 x}{x_B-x_A}  - \frac{x_A + x_B}{x_B-x_A}, \text{ for } x_A < x< x_{B}
		\end{aligned}
		\right.
	\label{piecewise}
	\end{equation}
	A representation of \eqref{piecewise} is given by the blue line in Fig. \ref{predictors}. For $\xi>0$, the predictor for $x_A<x<x_B$ will be as in Eq. \eqref{piecewise}, and it will approximate $f^*_\xi$ with a piece-wise function otherwise. The leading contribution to the test error in the asymptotic limit of large $P$ is given by the interval $[x_{A},x_{B}]$:
	\begin{equation}
	\begin{aligned}
	 		\varepsilon_t \sim \int_{x_{A}}^{x_{B}}dx\, p(x)(f_P(x)-f^*_\xi (x))^2\sim\\
	 		\sim\int_0^{x_{B}}x^{\chi-2\xi}dx \sim P^{-1+\left(\frac{2\xi}{\chi+1}\right)},   
	\end{aligned}
	\end{equation}
	in accordance to \eqref{scalTestError}. Considering a generic finite $\sigma$, \eqref{scalTestError} still holds, as we prove and numerically test in Appendix \ref{0ridgeproofs}. 
	
	

	
		\begin{figure}
		\centering
		\includegraphics[width=.7\linewidth]{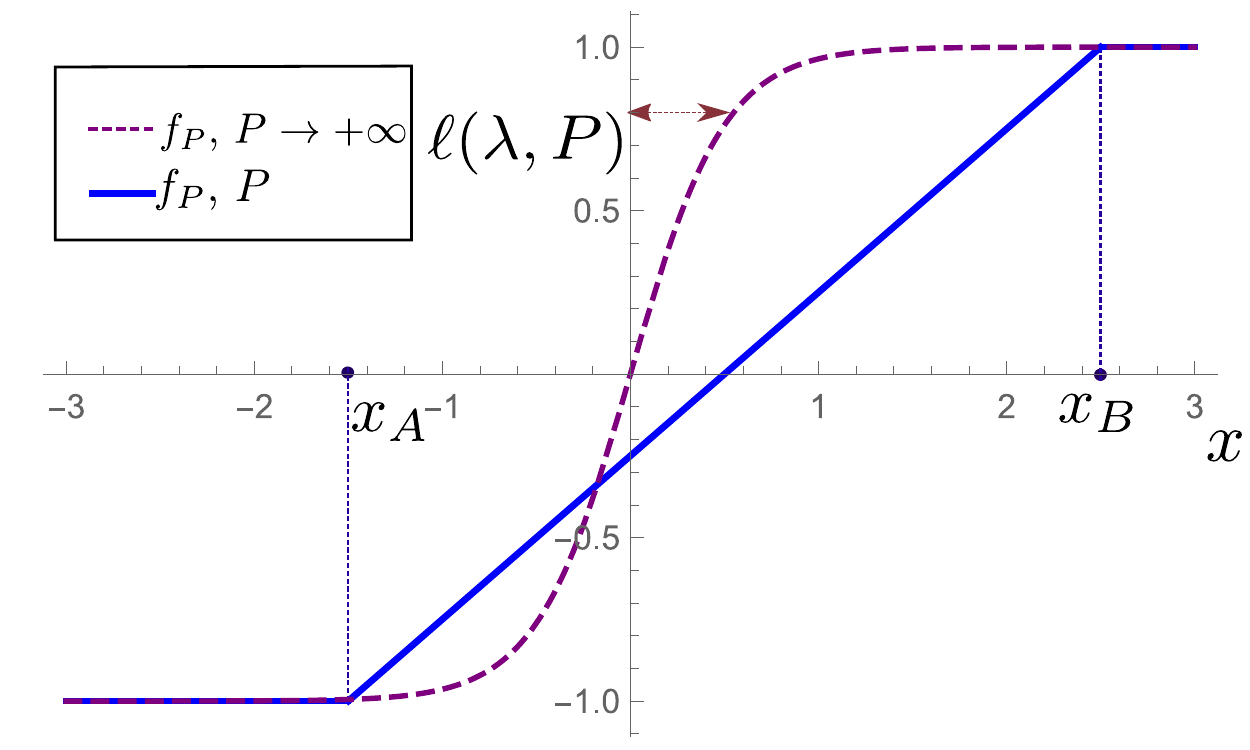}

		\caption{Representations of KRR predictors $f_P$ \eqref{pred} for $\xi=0$ and fixed  $\lambda/P$. The blue line is the predictor $f_P$ for finite $P$, in the case of the extremal point $x_{B}\sim P^{-1/(\chi+1)}$ \eqref{extremalPoints} being much larger than the characteristic scale $\ell(\lambda,P)\sim \left(\frac{\lambda\sigma}{P}\right)^{\frac{1}{(2+\chi)}}$ of the predictor \eqref{ell}. In the limit $P\rightarrow\infty$, the predictor $f_P$ is represented by the dashed purple line.}
		\label{predictors}
	\end{figure}

	\subsection{Eigendecomposition of the kernel}
	\label{sec:kernel_decomposition}
	To effectively test the spectral bias prediction for the KRR test error \eqref{sumP} in our context, we need to solve the eigenproblem \eqref{eigenproblem} for the Laplacian kernel with width $\sigma$ and the probability distribution \eqref{pdf}. All the proofs and more detailed statements of what follows are provided in Appendix \ref{Eigenproofs}, except for Thm. \ref{thmpde}.
	
	We first show a general result regarding the problem of finding the eigenvectors $\phi_{\rho}$ of the Laplacian kernel using a generic $p(x)$, which is recast in solving a differential equation. We will then use this result in the particular context of \eqref{pdf}.
	\begin{theorem}\label{thmpde}
		Let $K$ be the Laplacian kernel with width $\sigma$. Consider a one-dimensional input space $x\in \mathbb{R}$. Then the eigenvectors $\phi_{\rho}$ of the kernel, defined in \eqref{eigenproblem}, solve the following differential equation for $\lambda_{\rho}\neq 0$:
		\begin{equation}
			\partial_x^2\phi_{\rho}(x)=\left(-2\frac{p(x)}{\lambda_{\rho} \sigma} +\frac{1}{\sigma^2}\right)\phi_{\rho}(x).
			\label{pde}
		\end{equation}
	\end{theorem}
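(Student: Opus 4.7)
The plan is to turn the integral eigenvalue equation into a differential equation by twice differentiating with respect to $x$. The key is that the Laplacian kernel $e^{-|x-y|/\sigma}$ can be split along $y<x$ and $y>x$ into two smooth exponential pieces, each of which satisfies a trivial ODE away from $y=x$. The nontrivial content will come from the boundary terms at $y=x$ produced by Leibniz's rule and from the fact that the kernel is only continuous, not smooth, at that point.

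First I would write the eigenvalue equation explicitly as
\begin{equation*}
\lambda_\rho \phi_\rho(x) \;=\; e^{-x/\sigma}\!\!\int_{-\infty}^{x}\! p(y)\,e^{y/\sigma}\phi_\rho(y)\,dy \;+\; e^{x/\sigma}\!\!\int_{x}^{\infty}\! p(y)\,e^{-y/\sigma}\phi_\rho(y)\,dy ,
\end{equation*}
and then differentiate once with respect to $x$. The two boundary terms generated by Leibniz's rule are $\pm\, p(x)\phi_\rho(x)$ and they cancel, leaving
\begin{equation*}
\lambda_\rho \phi_\rho'(x) \;=\; -\tfrac{1}{\sigma}\, e^{-x/\sigma}\!\!\int_{-\infty}^{x}\! p(y)\,e^{y/\sigma}\phi_\rho(y)\,dy \;+\; \tfrac{1}{\sigma}\, e^{x/\sigma}\!\!\int_{x}^{\infty}\! p(y)\,e^{-y/\sigma}\phi_\rho(y)\,dy .
\end{equation*}

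Differentiating a second time, the boundary terms now add (rather than cancel) and give $-2p(x)\phi_\rho(x)/\sigma$, while the remaining integral terms are exactly $1/\sigma^2$ times the right-hand side of the original eigenvalue equation, i.e.\ $\lambda_\rho \phi_\rho(x)/\sigma^2$. This yields
\begin{equation*}
\lambda_\rho \phi_\rho''(x) \;=\; \frac{\lambda_\rho}{\sigma^2}\phi_\rho(x) \;-\; \frac{2 p(x)}{\sigma}\phi_\rho(x),
\end{equation*}
and dividing by $\lambda_\rho\neq 0$ produces the claimed equation \eqref{pde}.

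There is no genuine obstacle here; the only subtlety is bookkeeping of the two boundary contributions (one cancellation, one doubling) caused by the non-smoothness of $e^{-|x-y|/\sigma}$ at $y=x$, together with a mild regularity check to justify differentiation under the integral sign. Provided $p$ is locally integrable and $\phi_\rho$ is continuous, the dominated convergence argument needed for each differentiation step is standard, so the derivation is essentially a direct computation. The same manipulations could be phrased more abstractly by noting that $\tfrac{1}{2\sigma}e^{-|x-y|/\sigma}$ is the Green's function of the operator $-\partial_x^2 + \sigma^{-2}$ on $\mathbb{R}$, so that applying this operator to both sides of the eigenvalue equation immediately yields \eqref{pde}; I would keep the direct Leibniz-rule calculation in the main proof for transparency, and mention the Green's function viewpoint as a remark.
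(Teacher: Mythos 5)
Your proof is correct and follows essentially the same route as the paper: differentiate the eigenvalue equation twice, track the Leibniz boundary terms (which cancel after one derivative and double after the second to give $-2p(x)\phi_\rho(x)/\sigma$), and recognize the remaining integrals as $\lambda_\rho\phi_\rho(x)/\sigma^2$. The Green's function remark is a nice extra observation but the core derivation is the same direct computation the authors use.
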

	
	\begin{proof}
		Let's rewrite the eigendecomposition relation as follows, writing explicitly the kernel $K$:
		\begin{equation}
		\begin{aligned}
			\int_{-\infty}^{x} p(y)& \phi_{\rho}(y) e^{-\frac{(x-y)}{\sigma}}dy +\\ &+\int_{x}^{\infty} p(y) \phi_{\rho}(y) e^{-\frac{(y-x)}{\sigma}}dy= \lambda_\rho\phi_{\rho}(x).
		\end{aligned}
			\label{b1}
		\end{equation}
		We derive two times the relation \eqref{b1} with respect to $x$ , following an idea similar to \cite{Basri2020}, getting:
		\begin{equation}
		\begin{aligned}
			-\frac{2}{\sigma}&p(x)\phi_{\rho}(x)+\frac{1}{\sigma^2}\left(\int_{-\infty}^{x} p(y) \phi_{\rho}(y) e^{-\frac{(x-y)}{\sigma}}dy +\right.\\ 
			&+\left.\int_{x}^{\infty} p(y) \phi_{\rho}(y) e^{-\frac{(y-x)}{\sigma}}dy\right) = \lambda_\rho \partial_x^2 \phi_{\rho}(x).
			\end{aligned}
			\label{b2}
		\end{equation}
		Substituting \eqref{b1} into \eqref{b2} and dividing by $\lambda_{\rho}\neq 0$, we get \eqref{pdeApp}.
	\end{proof}
	
	The functional operator entering \eqref{pde} is symmetric  with respect to $x$ because $p(x)=p(-x)$. Thus there is always an eigenbasis for the space of solutions for which the $\phi_{\rho}$ are  either even or odd functions in $x$. From the definition of the $\phi_\rho$ in \eqref{eigenproblem} and from \eqref{pde} in the limit of $|x|\rightarrow\infty$, we get the boundary  condition $\phi_\rho(x)\rightarrow0$ for $|x|\rightarrow\infty$.

	We asymptotically solve the equation \eqref{pde} for $\phi_\rho$, in the limit of small $\lambda_\rho$. We use the so-called Wentzel–Kramers–Brillouin (WKB) method \cite{Ghatak1991}, designed to solve the following differential equation: 
	\begin{equation}
	    \partial_x^2\psi(x)+\Gamma^2(x)\psi(x)=0.
	    \label{wkb1}
	\end{equation}
	This equation is  encountered for example in quantum mechanics: the Schroedinger equation has the same form of \eqref{wkb1}, with $\psi$ being the wave function of a particle and $\Gamma^2(x)=\frac{2m}{\hbar}[E-V(x)]$, with $m$ the mass particle, $\hbar$ the rescaled Planck's constant, $E$ the total energy and $V(x)$ the potential energy function of the system. 
	In the KRR case of \eqref{pde}, $\psi$ is the eigenvector $\phi_\rho$ and $\Gamma^2(x)$ is related to the PDF $p(x)$.
	
	The WKB solution of \eqref{wkb1} is obtained as follows. It is crucial to identify a small parameter $\lambda_0\ll1$ and a function $\tilde{\Gamma}$ finite in the limit $\lambda_0\rightarrow 0^+$ such that we can rewrite $\Gamma^2$:
	\begin{equation}
	    \Gamma^2(x)= \frac{1}{\lambda_0}\tilde{\Gamma}^2(x).
	    \label{wkb2}
	\end{equation}
	The limit of $\lambda_0\rightarrow 0^{+}$ is equivalent to consider the function $\Gamma^2(x)$ as slowly changing in $x$. The role of $\lambda_0$ is played in quantum mechanics by $\hbar$ and in the KRR setting of \eqref{pde} by the eigenvalue $\lambda_\rho$. One then seeks a solution of \eqref{wkb1} of the form:
	\begin{equation}
	    \psi(x)= e^{\frac{iS(x)}{\lambda_0}}, \quad S(x) = S_0(x)+\lambda_0 S_1(x)+\lambda_0^2 S_2(x)+...
	    \label{wkb3}
	\end{equation}
	where the function $S(x)$ is expanded in series of $\lambda_0$. If we substitute the solution \eqref{wkb3} into \eqref{wkb1}, we can get expressions for each function $S_{i}(x)$ for $i$ arbitrarily large. At the first order in $\lambda_0$, we get the following solution:
	\begin{equation}
	    \psi_1(x)= \frac{C}{\left(\tilde{\Gamma}^2(x)\right)^{1/4}}\exp\left( \frac{i}{\sqrt{\lambda_0}} \int^{x}dy\sqrt{\tilde{\Gamma}^2(y)}\right),
	    \label{wkb4}
	\end{equation}
	which is essentially an oscillatory or exponential function multiplied by an amplitude dependent on $x$. The contributes to $S(x)$ from $S_2(x)$ onwards are negligible with respect to the others provided that:
	\begin{equation}
	    \left|\frac{1}{2\Gamma}\partial_x^2\Gamma-\frac{3}{4\Gamma^2}(\partial_x\Gamma)^2\right|\ll \Gamma^2(x),
	\end{equation}
	which holds in the case of \eqref{pde} except in the proximity of the two points $x_1$ and $x_2$ where $\Gamma^2(x)=0$.
	
	In the Appendix \ref{Eigenproofs}, in the Lemmas \ref{evecChiMagg0} and \ref{evecChi0}, we derive at leading order in $\lambda_\rho$ the full form of the eigenvectors $\phi_\rho$ for all $x\in \mathbb{R}$. Close to the points $x_1$ and $x_2$, we linearize the function $\Gamma^2(x)$  to solve analytically the differential equation \eqref{wkb1} using the Airy functions \cite{Florentin1966}. Then we patch together the solution around the  points $x_1$ and $x_2$ and the WKB solution using the Modified Airy Functions (MAF) \cite{Ghatak1991}.
	
	

Once we solve the differential equation \eqref{pde} and we get the eigenvectors $\phi_\rho$ at the leading order in $\lambda_\rho$, we can compute the coefficients $c_\rho$ by projecting the true function \eqref{trueFun} on the eigenvectors. In particular, we are interested in the coefficients $c_\rho$ at the leading order in $\lambda_\rho$.
	
	\begin{proposition}(Coefficients)\label{propCoeff}
		Let $K$ be the Laplacian kernel with width $\sigma$. Let $p(x)$ be \eqref{pdf} and the true function $f^*$ \eqref{trueFun}. Consider a small eigenvalue $\lambda_{\rho}\ll 1$. Let $\phi_\rho$ be the solution of \eqref{pde}. We impose that $\phi_\rho(x)\rightarrow0$ for $|x|\rightarrow\infty$. Then the following holds for the coefficient $|c_\rho|$ defined in \eqref{coeff}, in the limit $\lambda_{\rho}\ll 1$:
		\begin{equation}
			\begin{aligned}
				|c_{\rho}|\sim& \lambda_{\rho}^{\frac{\frac{3}{4}\chi+1-\xi}{\chi+2}}& \text{if }&\phi_{\rho} \text{ is odd}\\
				|c_{\rho}| = & 0 &\text{   if }&\phi_{\rho} \text{ is even}.
			\end{aligned}
			\label{scal1coeff}
		\end{equation}
	\end{proposition}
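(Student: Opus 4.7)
I would begin with a parity argument to handle the even case. By orthonormality of $\{\phi_\rho\}$ with respect to $p$, the coefficient is $c_\rho = \int p(x) f^*(x) \phi_\rho(x)\,dx$. Since $p$ is even while $f^*(x)=\text{sign}(x)|x|^{-\xi}$ is odd, the product $p\cdot f^*$ is odd; hence for even $\phi_\rho$ the integrand is odd and $c_\rho = 0$, proving the second case of the proposition.

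For odd $\phi_\rho$, the plan is to extract the scaling from the behavior of $\phi_\rho$ near the origin, using the explicit form provided by Lemmas \ref{evecChiMagg0} and \ref{evecChi0}. The natural inner length scale is identified by rescaling \eqref{pde}: balancing the potential term $2p(x)/(\lambda_\rho\sigma) \propto x^\chi/\lambda_\rho$ against the second derivative gives $x_* \sim \lambda_\rho^{1/(\chi+2)}$. For $x \ll x_*$, the right-hand side of \eqref{pde} is small enough that $\phi_\rho'' \approx 0$ at leading order, so an odd solution is approximately linear: $\phi_\rho(x) \simeq \phi_\rho'(0)\,x$. Matching this expansion at $x \sim x_*$ to the outer WKB amplitude $\sim 1/\sqrt{\tilde\Gamma(x_*)} \sim x_*^{-\chi/4}$, whose $O(1)$ normalization constant is fixed by $\int p\phi_\rho^2\,dx = 1$ (using that $\int \sqrt{p(x)}\,dx$ is finite), yields
\begin{equation*}
\phi_\rho'(0) \;\sim\; x_*^{-\chi/4-1} \;\sim\; \lambda_\rho^{-(\chi/4+1)/(\chi+2)}.
\end{equation*}

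Using the parity reduction $c_\rho \propto \int_0^\infty x^{\chi-\xi} e^{-x^2}\phi_\rho(x)\,dx$, the inner-region contribution is
\begin{equation*}
\int_0^{x_*} x^{\chi-\xi}\,\phi_\rho'(0)\,x\,dx \;\sim\; \phi_\rho'(0)\,x_*^{\chi-\xi+2} \;\sim\; \lambda_\rho^{(3\chi/4+1-\xi)/(\chi+2)},
\end{equation*}
which is the claimed scaling. To conclude one must show that the outer contribution ($x > x_*$) does not dominate: there $\phi_\rho$ oscillates with phase $S(x)/\sqrt{\lambda_\rho}$ whose derivative $\tilde\Gamma(x)$ does not vanish in the interior of the classically allowed region, so integration by parts against the smooth weight $x^{\chi-\xi} e^{-x^2}$ yields a boundary term at $x \sim x_*$ of the same order as the inner contribution (as required by inner-outer matching) and an exponentially suppressed term from the outer turning point. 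The main obstacle is controlling the subleading WKB corrections and making the inner-outer matching rigorous; this is precisely where the modified Airy representation that underlies Lemmas \ref{evecChiMagg0} and \ref{evecChi0} enters.
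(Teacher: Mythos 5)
Your argument is essentially the paper's, but compressed into a scaling version of it. The even case by parity is identical to the paper's. For the odd case, you also rely on the Airy/WKB structure of the eigenvector: you identify the crossover length $x_* \sim \lambda_\rho^{1/(\chi+2)}$ where the WKB phase becomes $O(1)$, match the amplitude of the near-origin behavior to the WKB envelope $\sim p(x)^{-1/4}$ at $x_*$, and observe that the integral $\int x^{\chi-\xi}\phi_\rho\,dx$ is dominated by the scale $x\sim x_*$. The paper instead carves $[0,\infty)$ into five regions (Airy patch near the inner turning point $x_1\sim\lambda_\rho^{1/\chi}$, a Modified-Airy-Function bridge from $x_1$ to $x_*$, the oscillatory WKB bulk, an Airy patch at the outer turning point $x_2$, and the decaying tail) and estimates each piece; the dominant contributions come from the bridge and from the lower boundary term of the oscillatory integral (treated via an Olver-type integration-by-parts expansion), and both give $\lambda_\rho^{(3\chi/4+1-\xi)/(\chi+2)}$, matching your inner estimate. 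So the two proofs are the same idea at different levels of resolution, and your final exponent is correct.

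Two points to tighten. First, your statement that the outer turning point contributes an ``exponentially suppressed term'' is not quite right: because the density vanishes there (Gaussian tail), the turning point $x_2\sim\sqrt{-\log\lambda_\rho}$ produces an Airy-type boundary contribution that decays only as a power, $\sim\lambda_\rho^{3/4}$; this is still subdominant to $\lambda_\rho^{(3\chi/4+1-\xi)/(\chi+2)}$ whenever $\xi<(\chi+1)/2$, so your conclusion stands, but the mechanism of suppression is a power law, not an exponential. Second, your claim that $\phi_\rho$ is linear on $[0,x_*]$ is more than heuristic: a short Taylor/Gr\"onwall argument (the deviation from linearity accumulates as $\sim\phi_\rho'(0)\,x^{\chi+3}/\lambda_\rho$, which is small relative to $\phi_\rho'(0)\,x$ precisely for $x\ll\lambda_\rho^{1/(\chi+2)}$) justifies it, and you should spell this out rather than appeal vaguely to ``the right-hand side of \eqref{pde} being small'' — for $x$ between $x_1$ and $x_*$ the potential term $2p/(\lambda_\rho\sigma)$ is in fact large, and what is small is the accumulated phase, not the potential. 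With that justification, your matching $\phi_\rho'(0)\sim x_*^{-\chi/4-1}$ is the cleaner route; it reproduces the paper's dominant contribution without having to track the intermediate MAF normalization factors that the paper uses (and which, as you note, are the technical heart of Lemmas \ref{evecChiMagg0}--\ref{evecChi0}). Since your integral is dominated by the upper endpoint $x\sim x_*$, the result is insensitive to the precise shape of $\phi_\rho$ on $(0,x_*)$, which is why the coarse linear model gives the right exponent even before these details are settled.
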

	
	To get the scaling of the coefficients $c_\rho$ with respect to the eigenvalue rank $\rho$, we need to compute the eigenvalues $\lambda_\rho$ at the leading order in $\rho$.
	
	We first find a close formula satisfied by the eigenvalues $\lambda_\rho$, requiring that they are such that the eigenvectors $\phi_\rho$ respect the boundary condition $|\phi_\rho(x)|\rightarrow 0$ for $|x|\rightarrow\infty$. In other words, we find the eigenvalues $\lambda_\rho$ such that any not-decaying exponential contribute in the WKB solution \eqref{wkb4} is identically zero for large $x$.
	
	In particular, for odd $\phi_\rho$ and $\chi>0$ we find the following self-consistent relation satisfied by $\lambda_{\rho}\ll 1$, or equivalently by $\rho\gg 1$:
	\begin{equation}
			\lambda_{\rho} =\left(\frac{\int_{x_1}^{x_{2}}dx\sqrt{2\frac{p(x)}{\sigma}-\frac{\lambda_\rho}{\sigma^2}}}{\arctan(-\gamma_{1}^{-1})+\frac{\rho-1}{2} \pi}\right)^2 +o(\rho^{-2})
			\label{sc1}
		\end{equation}
	where  $\gamma_1 = \text{Ai}(\mu)/\text{Bi}(\mu)$, with $\text{Ai}$ and $\text{Bi}$ the Airy function of the first and second kind \cite{Florentin1966} and $\mu=\left(\frac{\chi (\lambda_\rho\Gamma[\frac{1+\chi}{2}])^{\frac{2}{\chi}}}{2^{\frac{2}{\chi}}\sigma^{2(1+\chi)}}\right)^{1/3}$. Similar relations hold for even $\phi_\rho$ and $\chi=0$, as presented in Appendix \ref{Eigenproofs}. 
	The self-consistent relation \eqref{sc1} yields the following asymptotic scaling for the eigenvalues $\lambda_\rho$ for large $\rho$:
	\begin{equation}
			\lambda_{\rho}\sim \rho^{-2}
			\label{scalEval}
		\end{equation}
	Now that we have the scaling of the eigenvalues $\lambda_\rho$, we can get the scaling of the coefficients $c_\rho$ with respect to their ranks $\rho$.

			\begin{figure*}[h]
		\centering
			\includegraphics[width=1\linewidth]{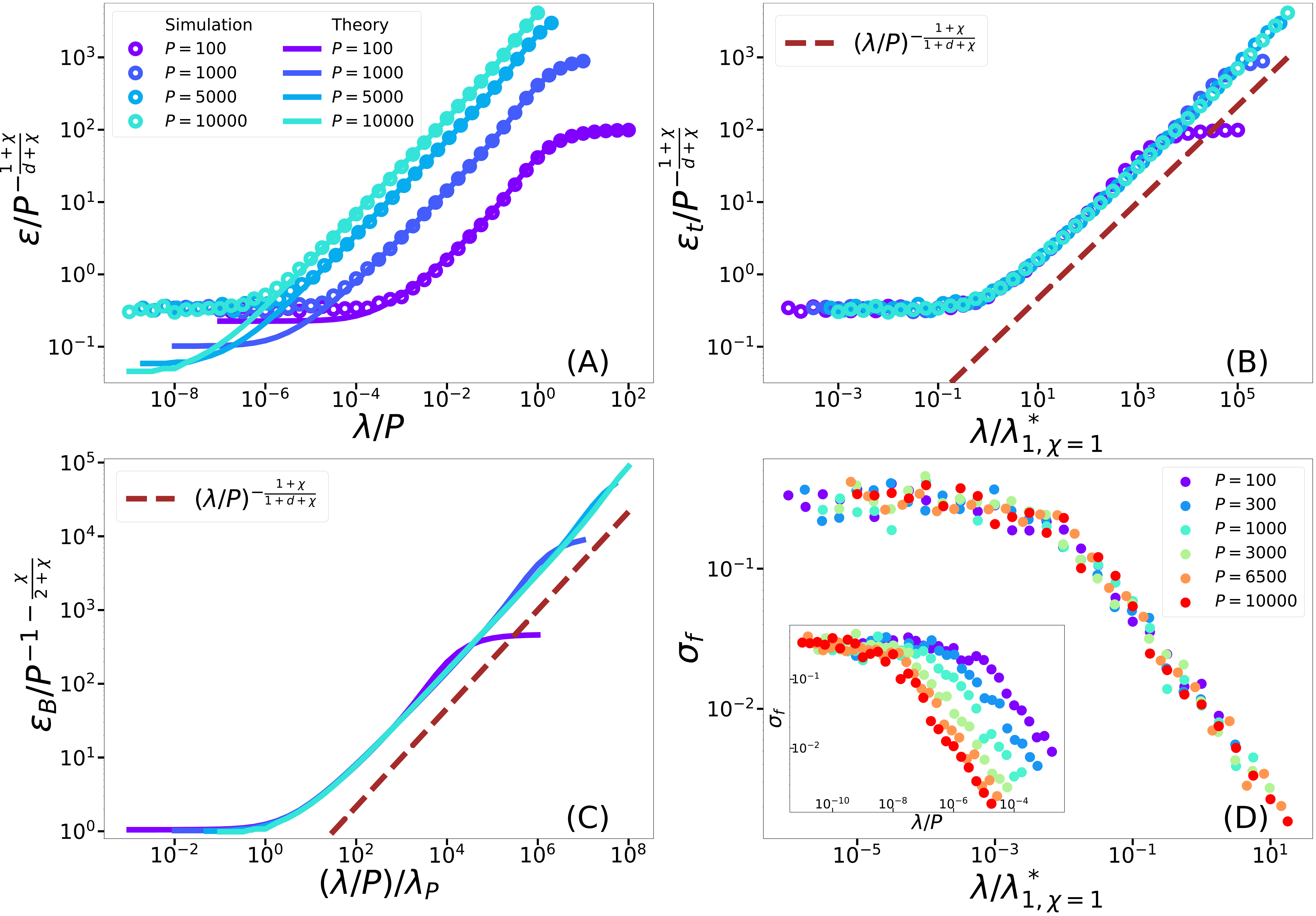}
			
		\caption{ $d=1$, $\chi=1$ and $\xi=0$.
		(A) Open symbols: empirical test error $\varepsilon_t$ (averaging over 200 realisations) rescaled by its ridgeless prediction \eqref{scalTestError}. Full lines: replica prediction $\varepsilon_B$ for fixed training set size $P$ and varying ridge $\lambda/P$. (B): the ridge has been rescaled by $\lambda^*_{1,\chi}$, defined in \eqref{lstar}. Brown line:  asymptotic behavior of $\varepsilon_B$ with $\lambda$ as predicted from Eq. \eqref{predBord}. 
		(C) $\varepsilon_B$, rescaled by its ridgeless prediction \eqref{predBord}, for fixed $P$ and varying rescaled ridge $(\lambda/P)/\lambda_P$, where $\lambda_P$ is defined in the main text. (D):	Inset: Variance of the predictor $\sigma_f$ as defined in \eqref{sigmaf} (averaged over 50 realisations) as a function of the rescaled ridge $\lambda/P$. Main plot: after rescaling the x-axis by $\lambda/\lambda^*_{1,\chi}$, the curves collapse as predicted.
		}
		\label{testError_departure_chi1_bord}
	\end{figure*}
	\begin{theorem} \label{coeffScalTh}
		Let $K$ be the Laplacian kernel with width $\sigma$. Let $p(x)$ be \eqref{pdf} and the true function $f^*$ \eqref{trueFun}. As a consequence of \eqref{scal1coeff} and \eqref{scalEval}, the following aymptotic relation holds for large $\rho$ such that $\phi_{\rho}$ is odd in $x$, for any $\chi\ge 0$:
		\begin{equation}
			c_{\rho}^2\sim \rho^{-\frac{3\chi+4-4\xi}{\chi+2}}.
			\label{coeffScalFinal}
		\end{equation}
	\end{theorem}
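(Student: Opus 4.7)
The plan is to recognize this theorem as essentially a corollary that combines two substantive results already established: Proposition \ref{propCoeff}, which controls the scaling of the coefficients $|c_\rho|$ in terms of the eigenvalue $\lambda_\rho$, and the eigenvalue estimate \eqref{scalEval}, which controls $\lambda_\rho$ in terms of the rank $\rho$. So the proof is a straightforward substitution followed by algebraic simplification; no new analytic input is required.

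Concretely, I would first recall that because $f^*_\xi$ is odd in $x$ and the eigenvectors of \eqref{pde} split into an orthogonal basis of even and odd functions, only odd eigenvectors have nonzero projection onto $f^*_\xi$. So throughout we may assume $\phi_\rho$ is odd, and apply Proposition \ref{propCoeff} to get
\begin{equation}
|c_\rho| \sim \lambda_\rho^{(\frac{3}{4}\chi + 1 - \xi)/(\chi+2)}.
\end{equation}
Squaring and then substituting the eigenvalue scaling $\lambda_\rho \sim \rho^{-2}$ from \eqref{scalEval}, I would compute
\begin{equation}
c_\rho^2 \;\sim\; \lambda_\rho^{(\frac{3}{2}\chi + 2 - 2\xi)/(\chi+2)} \;\sim\; \rho^{-2(\frac{3}{2}\chi + 2 - 2\xi)/(\chi+2)} \;=\; \rho^{-(3\chi + 4 - 4\xi)/(\chi+2)},
\end{equation}
which is exactly the claimed exponent.

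Since the two ingredients are already in place, there is no real obstacle to this step; the only things I would double-check are (i) that the hypothesis $\lambda_\rho \ll 1$ required by Proposition \ref{propCoeff} translates into the regime $\rho \gg 1$ used here (which follows from \eqref{scalEval}), and (ii) that the self-consistent relation \eqref{sc1} genuinely gives the clean leading scaling $\lambda_\rho \sim \rho^{-2}$ — i.e., that the $\lambda_\rho$-dependent terms inside the integral and in $\mu$ are subleading, so that the denominator $\arctan(-\gamma_1^{-1}) + (\rho-1)\pi/2$ drives the $\rho^{-2}$ behavior. Once those two routine checks are made, the theorem follows immediately from the substitution above.
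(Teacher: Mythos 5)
Your proposal is correct and matches the paper's own (implicit) argument: the paper presents Theorem \ref{coeffScalTh} precisely as a direct consequence of Proposition \ref{propCoeff} and the eigenvalue scaling \eqref{scalEval}, and the algebra in your substitution, $c_\rho^2 \sim \lambda_\rho^{(\tfrac{3}{2}\chi+2-2\xi)/(\chi+2)} \sim \rho^{-(3\chi+4-4\xi)/(\chi+2)}$, is the intended computation. Your two side checks (that $\lambda_\rho\ll 1$ corresponds to $\rho\gg 1$ and that the $\lambda_\rho$-dependent corrections in \eqref{sc1} are subleading) are exactly the points one should verify, and both are addressed in the paper's Appendix \ref{Eigenproofs}.
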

	
	Using \eqref{coeffScalFinal}, we are finally able to get the prediction of the test error in the ridgless limit $\lambda\rightarrow 0^{+}$ via the spectral bias theory \eqref{predBord}. This entails summing the coefficients squared $c_\rho^2$ from the $P-$th one onwards:
	\begin{equation}
		\varepsilon_B\sim \sum_{\rho=P}^{\infty}c_{\rho}^2\sim P^{-1-\left(\frac{\chi-4\xi}{\chi+2}\right)}.
		\label{predBord}
	\end{equation}
	Comparing with Thm. \ref{testErrTh}, we thus conclude that the spectral bias prediction \eqref{predBord} is incorrect.
	
	
	
	\section{Role of ridge $\lambda$}\label{phaseDiagr}
	
	
	The replica method \cite{Bordelon2020,Canatar2021} assumes that the predictor is a self-averaging quantity. Approaches based on random matrix theory  only apply under the same condition, which can be guaranteed only for a finite ridge \cite{Jacot2020} (under the Gaussian assumption). In our model in the ridge-less case, the test error is explicitly a function of two data points $x_A$ and $x_B$, and thus cannot be self-averaging. Thus we expect that these methods will work only when the ridge increases past some characteristic value $\lambda^*_{1,\chi}(P)$ to make the test error self-averaging, or equivalently if the training set is larger than some characteristic value $P^*(\lambda)$.
	
	To estimate $P^*(\lambda)$, our strategy is to compute the KRR predictor $f_P$ in the limit of $P\rightarrow\infty$ and $\frac{\lambda}{P}$ finite. This solution will apply for $P\gg P^*(\lambda)$. In the other limit $P\ll P^*(\lambda)$, the KRR predictor must be similar to the case $\lambda=0$ studied above, for which it is piece-wise linear.

	\begin{proposition}
		Let $K$ be the Laplacian kernel with width $\sigma$. The KRR predictor $f_P$ with kernel $K$, in the limit of $P\rightarrow\infty$ and $\frac{\lambda}{P}$ finite, satisfies the following differential equation:
		\begin{equation}
			\sigma^2 \partial_x^2 f_P(x) =\left(\frac{\sigma}{\lambda/P}p(x)+1\right)f_P(x)-\frac{\sigma}{\lambda/P}p(x)f^*(x).
			\label{predPlarge}
		\end{equation}
	\end{proposition}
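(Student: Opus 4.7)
The plan is to start from the dual representation of the KRR predictor, replace the empirical average over training points by its population version in the regime $P\to\infty$ with $\lambda/P$ fixed, and then convert the resulting integral equation into a differential equation using the same trick as in the proof of Theorem~\ref{thmpde}.

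First, I would write $f_P(x)=\sum_{j=1}^{P}\alpha_j K(x,x_j)$ with $\alpha=(K+\lambda\mathbb{1})^{-1}\vec{y}$. From $(K+\lambda\mathbb{1})\alpha=\vec{y}$ evaluated component-wise one obtains $f_P(x_j)+\lambda\alpha_j=f^*(x_j)$, i.e. $\alpha_j=(f^*(x_j)-f_P(x_j))/\lambda$. Substituting back into the expansion of $f_P$ yields the self-consistent formula
\begin{equation}
f_P(x)=\frac{1}{\lambda}\sum_{j=1}^{P}K(x,x_j)\bigl(f^*(x_j)-f_P(x_j)\bigr).
\end{equation}

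In the joint limit $P\to\infty$ at fixed $\lambda/P$, a law-of-large-numbers argument replaces the empirical sum by the population expectation under $p$:
\begin{equation}
\frac{\lambda}{P}f_P(x)=\int K(x,y)\,p(y)\bigl(f^*(y)-f_P(y)\bigr)\,dy.
\end{equation}
This concentration step is where I expect the main obstacle to lie. Because the residual $f^*(y)-f_P(y)$ itself depends on the training sample, justifying the replacement rigorously requires a uniform law of large numbers over a suitable class of functions (or, equivalently, a stability estimate on the KRR fit) rather than a plain Monte Carlo argument; physically one expects $f_P$ to be self-averaging in this regime, which is precisely the assumption underlying the replica/random-matrix approaches that the rest of the paper contrasts with.

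Once the integral equation is in hand, the remainder is a direct computation along the lines of the proof of Theorem~\ref{thmpde}. Splitting the integral at $y=x$ according to the sign of $x-y$ and writing the Laplacian kernel explicitly,
\begin{equation}
\frac{\lambda}{P}f_P(x)=\int_{-\infty}^{x}e^{-(x-y)/\sigma}g(y)\,dy+\int_{x}^{\infty}e^{-(y-x)/\sigma}g(y)\,dy,
\end{equation}
with $g(y):=p(y)(f^*(y)-f_P(y))$, I would differentiate twice in $x$. The boundary contributions at $y=x$ arising from the first derivative cancel between the two integrals, while the second derivative reproduces $\sigma^{-2}$ times the original integrals together with a localized term proportional to $p(x)(f^*(x)-f_P(x))$. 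Using the integral equation to eliminate the surviving integrals in favor of $(\lambda/P)f_P(x)$ and rearranging yields the claimed second-order ODE, with $f^*$ entering as an inhomogeneous source modulated by $p(x)$.
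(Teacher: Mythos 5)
Your route is genuinely different from the paper's. The paper proves this variationally: it invokes the explicit Sobolev form of the Laplace-kernel RKHS norm, $\|u\|_K^2 = \tfrac{1}{\sigma}\bigl(\int u^2 + \sigma^2\int (u')^2\bigr)$, replaces the empirical loss by its population limit $\int p\,(f^*-u)^2$, and then takes the Euler--Lagrange equation of the resulting functional. You instead derive a self-consistent integral equation from the dual representation $\alpha_j = (f^*(x_j)-f_P(x_j))/\lambda$, pass to the population average, and then differentiate twice exactly as in Theorem~\ref{thmpde}. Both approaches face the same implicit concentration step (replacing a training-sample average involving $f_P$ itself by its expectation), which you flag honestly; the paper glosses over this in equally informal terms, so you are not at a lower level of rigor.

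There is, however, a concrete mismatch you should notice if you finish the computation: carrying your twice-differentiation through gives
\begin{equation}
\sigma^2 \partial_x^2 f_P(x) \;=\; \Bigl(\tfrac{2\sigma}{\lambda/P}\,p(x)+1\Bigr) f_P(x) \;-\; \tfrac{2\sigma}{\lambda/P}\,p(x) f^*(x),
\end{equation}
i.e.\ with a factor $2\sigma$ rather than $\sigma$ in front of $p(x)$, because the second derivative of the two one-sided integrals each produces a boundary term $-\tfrac{1}{\sigma}g(x)$ and they add rather than cancel. This is not an error on your part: the discrepancy traces back to the paper's kernel-norm formula being off by a factor of $2$. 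One checks directly that $\|K(x,\cdot)\|_K^2$ must equal $K(x,x)=1$, while $\tfrac{1}{\sigma}\bigl(\int e^{-2|x-y|/\sigma}dy + \sigma^2\int \tfrac{1}{\sigma^2}e^{-2|x-y|/\sigma}dy\bigr) = 2$; the correct prefactor is $\tfrac{1}{2\sigma}$, which when fed into the Euler--Lagrange calculation reproduces exactly your factor of $2$. Since only the scaling of $\ell(\lambda,P)$ in $\lambda/P$ is used downstream, the constant is immaterial for the paper's conclusions, but your dual-representation route has the merit of getting it right without invoking the explicit Sobolev identification at all.
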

    
    The equation \eqref{predPlarge} is obtained by noticing that, for the Laplace kernel in one dimension, the kernel norm $||f_P||_K^2$ corresponds to $||f_P||_K^2 = \frac{1}{\sigma}(\int dt f_P(t)^2 + \sigma^2 \int dt f_P'(t)^2)$. Therefore, minimizing the training loss \eqref{min} by taking the functional derivative with respect to $f_P$ yields the linear differential equation \eqref{predPlarge} for $f_P(x)$ (proof in Appendix \ref{scaling-one-d}).
	
	Considering the $p(x)$ introduced in Section \ref{setting}, the relation \eqref{predPlarge} yields the following characteristic scale for the function $f_P$:
	\begin{equation}
		\ell(\lambda,P)\sim \left(\frac{\lambda\sigma}{P}\right)^{\frac{1}{(2+\chi)}}.
		\label{ell}
	\end{equation}
	
	
 	This scale is obtained by noticing that the homogeneous equation of Eq. \eqref{predPlarge} has the same form as the Schroedinger equation \eqref{wkb1}. Therefore, the WKB expansion for small $\lambda/P$ can be used as discussed in Section \ref{sec:kernel_decomposition}, yielding Eq. \eqref{ell} for the characteristic scale $\ell$ at small $x$. The proof is reported in Appendix \ref{scaling-one-d}.
	
	The function $f_P$ is sketched in Fig.\ref{predictors} for fixed $\frac{\lambda}{P}$ and $P$ finite, and compared with the KRR predictor in the limit $P\rightarrow\infty$ .
	For large $P$, the latter limit must be a good approximation of the KRR predictor \eqref{predPlarge}. However, this approximation will break down when  $P$ is small: in that case, the first data point $x_B$ will be much larger than $\ell(\lambda,P)$, and the solution will be approximately piece-wise linear, as in Fig.\ref{predictors}. The cross-over between the two regimes must occur when $x_B\sim \ell(\lambda,P)$, leading to a characteristic ridge:
    	\begin{equation}
		\lambda^*_{1,\chi}\sim P^{-\frac{1}{1+\chi}}.
		\label{lstar}
	\end{equation}
	\begin{figure*}[h]
		\centering
		\includegraphics[width=1\linewidth]{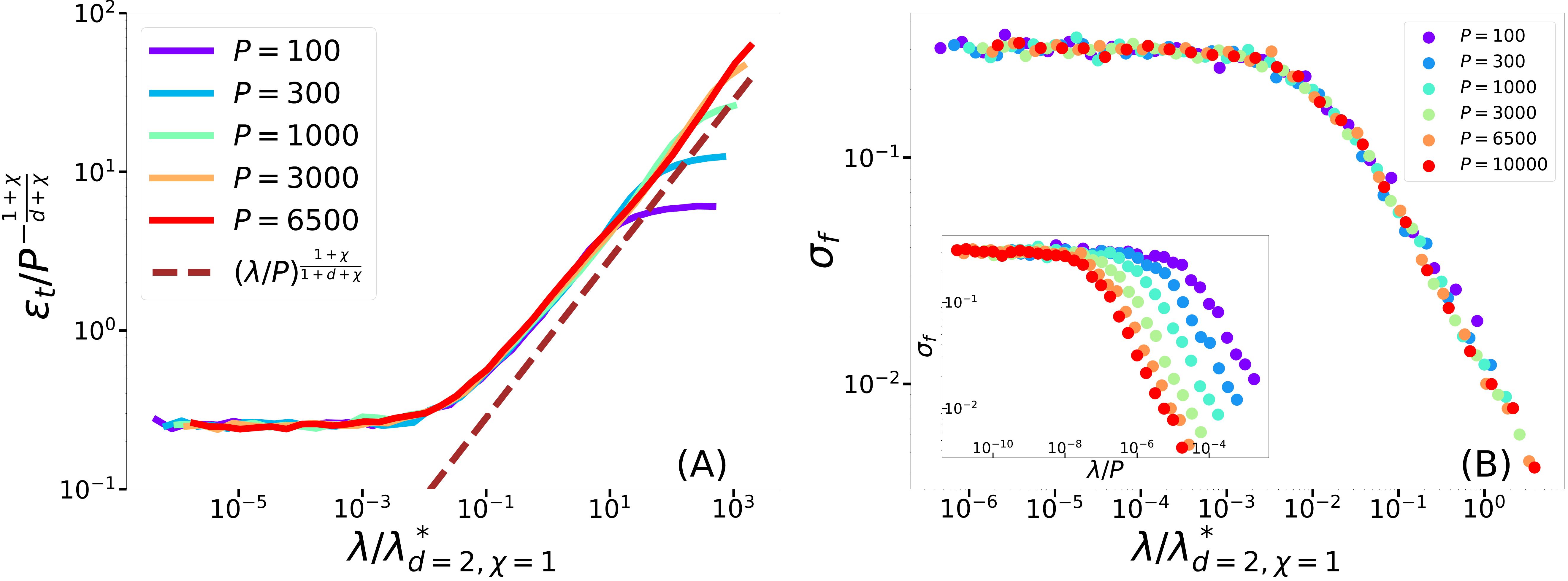}
		\caption{ $d=2$ and $\chi=1$. (A): Empirical test error $\varepsilon_t$ rescaled by its ridgeless prediction \eqref{scalTestError} for fixed training set size $P$ and varying rescaled ridge $\lambda/\lambda^*_{d,\chi}$, with $\lambda^*_{d,\chi}$ defined in Eq. \eqref{lstarD}. Brown line:  predicted scaling of  $\varepsilon_B$ with respect to $\lambda$, as follows from Eq. \eqref{errBlambda}. (B) Inset: $\sigma_f$, defined in \eqref{sigmaf} for fixed $P$ and varying $\lambda/P$. At small ridge, $\sigma_f$ does not decrease with $P$. Main plot: $\sigma_f$ collapses as a function of $\lambda/\lambda^*_{d,\chi}$ as predicted. 
		}
		\label{figd3}
	\end{figure*}
	
	{\bf Numerical test:} To confirm that $\lambda^*_{1,\chi}$  marks the point where replica theory $\varepsilon_B$ breaks down,  we compare it with the empirical test error $\varepsilon_t$ numerically obtained  for $\chi=1$ and $\xi=0$  in Fig. \ref{testError_departure_chi1_bord} (A). 
	For small $\lambda/P$, the prediction $\varepsilon_B$ and the numerical results reach a different plateau, while for large $\lambda/P$ they coincide. Hence there is a crossover in $\lambda$, for fixed $P$, between values of $\lambda$ where the prediction $\varepsilon_B$ works and where it does not.
	 After rescaling $\lambda$ by $P^{-\frac{1}{1+\chi}}$, the empirical curves $\varepsilon_t$ for different $P$ collapse as shown in Fig. \ref{testError_departure_chi1_bord} (B). It is true in particular for the location where $\varepsilon_t$ starts flattening and departs from  $\varepsilon_B$, confirming that replica theory breaks down for  $\lambda\ll \lambda^*_{1,\chi}$.
	In Fig. \ref{testError_departure_chi1_bord} (C), we confirm that in replica theory $\varepsilon_B$ reaches a plateau when $\lambda/P\ll\lambda_P$. In fact, in Appendix \ref{statmec}, we show that $\varepsilon_B$ has small relative changes when the rescaled ridge $\lambda/P$ goes from zero to $\lambda_P$, where $\lambda_P$ is the rank $P$ eigenvalue of the kernel.

	Finally, we confirm that replica theory breaks down when the predictor is not self-averaging near the decision boundary. To do so, we consider the variance of the predictors $f_P$ obtained from different training sets.

	We define $\sigma_f$ as:
	\begin{align}
	    \sigma_f = \frac{1}{N_P} \sum_{i=1}^{N_P} [f_{P,1}(x_i) - f_{P,2}(x_i)]^2
	    \label{sigmaf}
	\end{align}
	where $f_{P,1}$ and $f_{P,2}$ are two different predictors obtained by two different training sets of same size $P$ and $x_{\{i=1,...,N_P\}}$ are the test points where the signs of the two predictors $\text{sign}(f_{P,1}(x_i))$ and $\text{sign}(f_{P,2}(x_i))$ are different.
	In the inset of Fig. \ref{testError_departure_chi1_bord} (D), $\sigma_f$ {\it v.s.} $\lambda/P$ is shown: for small ridges, the variance of the predictors does not decrease for increasing $P$, and the predictor is not self-averaging. We observe in the main plot that the curves collapse if $\lambda$ is rescaled by $\lambda^*_{1,\chi}$ as predicted in Eq. \eqref{lstar}.

   \section{Higher dimension setting and real data}\label{dGen}

    We generalize the previous results to higher dimension $d$ using scaling (non-rigorous) arguments, that make stringent predictions that we test numerically. 
    
    {\bf Ridgless case:} The typical distance $r_{min}$ between training points at the decision boundary can be estimated as the size of the ball in which in average one data point lies. It leads to:
    \begin{align}
        r_{min}\sim P^{-\frac{1}{d+\chi}}
    \end{align}
    In the absence of ridge, $f_P(x)$ will display fluctuations of order one for $|x_1|\sim r_{min}$. Thus the test error must be of order of the probability for a test point to fall within a distance $r_{min}$ from the interface:
    \begin{align}
        \varepsilon_t \sim r_{min}^{1+\chi} \sim P^{-\frac{1+\chi}{d+\chi}}
        \label{errT-scaling}
    \end{align}
    
    {\bf Finite  ridge:}  In the limit $\lambda/P$ fixed and large $P$, the predictor will vary near the decision boundary on some length scale $\ell(\lambda,P)$. In Appendix \ref{scaling-high-d} we argue that:
    \begin{align}
        \ell(\lambda,P) \sim \left(\frac{\lambda}{P}\right)^{\frac{1}{1+d+\chi}}
        \label{ell-d}
    \end{align}
    The test error predicted by the replica method then follows: 
    \begin{align}
        \varepsilon_B \sim \ell(\lambda,P)^{1+\chi}\sim \left(\frac{\lambda}{P}\right)^\frac{1+\chi}{1+d+\chi}
        \label{errBlambda}
    \end{align}
    In Appendix \ref{statmec}, we show that the replica solution has only mild relative changes when the rescaled ridge $\lambda/P$ goes from zero to $\lambda_P$, where $\lambda_P$ is the rank $P$ eigenvalue of the covariant operator. For a Laplace kernel, $\lambda_P\sim P^{-1-\frac{1}{d}} $ \footnote{Using the Fourier variable $q$, we have in that case $\lambda_P\sim q_{max}^{-1-d}$ and $q_{max}\sim P^{1/d}$, see e.g. \cite{Spigler2020}.}.
   Substituting $\lambda/P$ by $\lambda_P$ in Eq. \eqref{errBlambda}, we obtain the spectral bias prediction:
    \begin{align}
        \varepsilon_B \sim \lambda_P^{\frac{1+\chi}{1+d+\chi}}\sim P^{-(1+\frac{1}{d})\frac{1+\chi}{1+d+\chi}}
        \label{errB-scaling}
    \end{align}
    Comparing \eqref{errT-scaling} and \eqref{errB-scaling}, we obtain the following key results:
    (i) for $\chi=0$, the spectral bias predicts the correct asymptotic training curve exponent. (ii)
     For $\chi>0$,  the spectral bias predicts a wrong exponent. However,  the prediction is correct in the limit $d\rightarrow \infty$, and is already excellent at intermediary dimensions (say $d=10$). (iii) The replica prediction breaks down when $\ell(\lambda,P) \sim r_{min}$, which implies a cross-over ridge:
    \begin{align}
        \lambda^*_{d,\chi}\sim P^{-\frac{1}{d+\chi}}
        \label{lstarD}
    \end{align}
    
       	\begin{figure*}[h]
		\centering
		\includegraphics[width=1\linewidth]{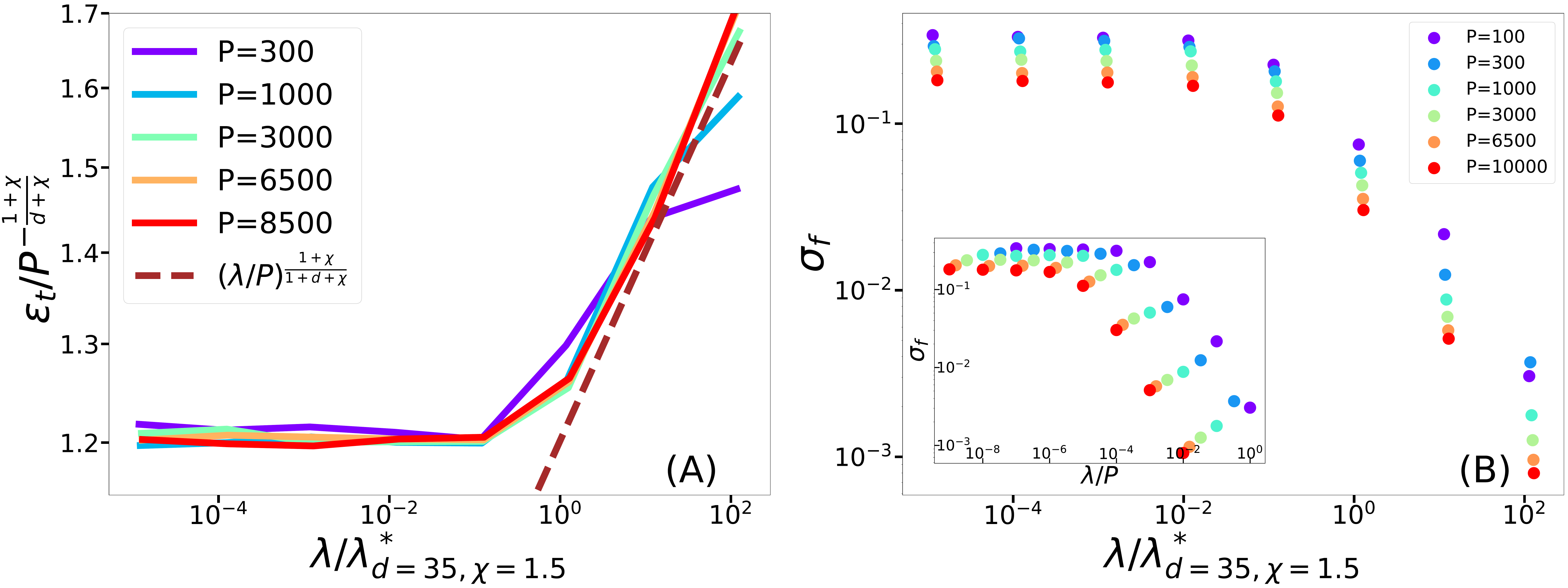}
		\caption{Binary CIFAR10. (A): Empirical test error $\varepsilon_t$ {\it v.s. } ridge. Each quantity is rescaled by our predictions  \eqref{errT-scaling} and \eqref{lstarD} for $d=35$ and $\chi=1.5$. The dashed brown line is the scaling prediction of the test error with respect to $\lambda$ of \eqref{errBlambda}. (B) Inset: variance of the predictor $\sigma_f$ {\it v.s.}   re-scaled ridge $\lambda/P$. Main plot: After rescaling the ridge by $\lambda^*_{d=35,\chi=1.5}$, curves nearly collapse.
		}
		\label{realdata_cifar10}
	\end{figure*}
    
    {\bf Numerical tests:} We consider the case $d=2$  and $\chi=1$. Fig. \ref{figd3} (A) shows the test error {\it v.s.} the ridge, both rescaled by our predictions  Eqs. \eqref{lstarD}, \eqref{errT-scaling}. The collapse is excellent, supporting the validity of both predictions. The prediction of Eq. \eqref{errB-scaling} is also indicated, and still shows an excellent agreement with observation. Fig. \ref{figd3} (B) reveals that once again, the cross-over ridge $\lambda^*_{d,\chi}$ where the replica method breaks down corresponds to a predictor $f_P$ that does not self-average near the decision boundary.


    {\bf Real data:} In Fig. \ref{realdata_cifar10}, we show the same quantities for the binary CIFAR-10 dataset (the 10 classes are grouped in two). The behavior of the test error as a function of the ridge is well-fitted by our model of decision boundaries,  taking $d=35$ (the intrinsic dimension of CIFAR \cite{Spigler2020}) and $\chi=1.5$ as shown in Fig. \ref{realdata_cifar10} (A). Remarkably, as shown in the inset of Fig. \ref{realdata_cifar10} (B), we also find that there exists a ridge-less regime where relative fluctuations of the predictor near  decision boundaries remain large ($\sigma_f>0.1$) for all $P$, from a regime where these fluctuations decay rapidly with increasing $P$. The curves $\sigma_f$ for different $P$ all collapse when the ridge is rescaled by $\lambda^*_{d,\chi}$ as predicted.  Note that in the ridge-less regime, we observe a very slight decay of $\sigma_f$ (twofold) as $P$ increases 100 folds, which signals that the geometry of decision boundaries is presumably more complex than in our model (which assumes, for example, that its properties are invariant when moving along them). 
    A similar behaviour is shown for the binary MNIST dataset in Appendix \ref{additionalFigures}.

	\section{Conclusion}
    We have shown that recent results based on replica or random matrix theory \cite{Bordelon2020, Jacot2020, Loureiro2021} can give excellent results even if data lie in low-dimension if the ridge is large enough.  However, together with other approaches \cite{Spigler2020} in the ridge-less case they lead to a spectral bias prediction. We showed that the latter does not apply for classification if the density of data between classes vanishes, except for $d\rightarrow\infty$. Ultimately,  these methods fail because the predictor is not self-averaging near the decision boundaries.  Quantitatively, however, predictions are already accurate for moderate dimensions.

	
	Finally, it is interesting to note that a vanishing density of data points implies a significant departure from the Gaussian assumption used in these approaches. Following \eqref{wkb4}, in $d=1$ the eigenvectors $\phi_\rho(x)$ are oscillating functions with envelope $\sim |x|^{-\chi/4}$ for small $x$. Thus, the probability distribution $P(\phi_\rho(x) =\phi)$ behaves as a power law $\sim \phi^{-5-\frac{4}{\chi}}$. Moreover, the eigenvectors are not independent for different $\rho$: they all have large values for small $x$, since their envelope is $|x|^{-\chi/4}$ for any $\rho$.
	
	
 \section*{Acknowledgements}
 We thank Francesco Cagnetta, Alessandro Favero, Mario Geiger, Bastien Olivier Marie Göransson, Leonardo Petrini and Lenka Zdeborová for helpful discussions. This work was supported by a grant from the Simons Foundation (\#454953 Matthieu Wyart).

\printbibliography


\newpage
\onecolumn
\appendix

\AtAppendix{\counterwithin{lemma}{section}}
\AtAppendix{\counterwithin{theorem}{section}}
\AtAppendix{\counterwithin{proposition}{section}}
\AtAppendix{\counterwithin{corollary}{section}}
\AtAppendix{\counterwithin{conjecture}{section}}

\section{Statistical mechanics of generalisation: spectral bias}\label{statmec}

	In \cite{Bordelon2020} a general formula for the test error \eqref{mse} has been derived , which requires the exact eigendecomposition of the kernel $K$ \eqref{eigenproblem}. To obtain a prediction $\varepsilon_B$ for the generalization error \eqref{mse}, the authors make two assumptions. First, they assume the test error $\varepsilon_t$ to be a self-averaging quantity with respect to the sampling of the training set. Second, they assume the probability distribution for the values of the eigenvectors $\phi_{\rho}$ over the training to be Gaussian. Given these assumptions, they derive via the replica method the following prediction for the test error:
	\begin{equation}
		\varepsilon_B =\sum\limits_{\rho=1}^{\infty} \frac{c_{\rho}^2}{\lambda_{\rho}^2}\left(\frac{1}{\lambda_{\rho}}+\frac{P}{\lambda+t(P)}\right)^{-2}\left(1-\frac{P\gamma(P)}{(\lambda+t(P))^2}\right)^{-1},
		\label{fullBordelon}
	\end{equation}
	where $\lambda$ is the ridge and:
	\begin{equation}
		t(P)=\sum\limits_{\rho}\left(\frac{1}{\lambda_{\rho}}+\frac{P}{\lambda+t(P)}\right)^{-1}, \qquad \gamma(P) = \sum\limits_{\rho}\left(\frac{1}{\lambda_{\rho}}+\frac{P}{\lambda+t(P)}\right)^{-2}
		\label{Bordelon-tP}
	\end{equation}
	
	It is important to notice that this prediction in the ridge-less case $\lambda=0$ is equivalent, for the scaling at large $P$, to choosing a ridge $\lambda/P$ that is of the same order of magnitude of the smallest eigenvalue $\lambda_P$ of the Gram matrix. To see this from Eq. \eqref{fullBordelon} it is sufficient to show that $t(P)/P \sim \lambda_P$ when $\lambda=0$. In this case, calling $\tilde{t}(P) = t(P)/P$, we can rewrite the definition of $t(P)$ in Eq. \eqref{Bordelon-tP} as
	\begin{align}
	    P = \sum\limits_{\rho} \frac{1}{1+\frac{\tilde{t}(P)}{\lambda_{\rho}}}
	\label{t-tilde}
	\end{align}
	The sum in the right hand side of Eq. \eqref{t-tilde} takes contributions of $O(1)$ for $\lambda_{\rho}\gg \tilde{t}(P)$ and contributions of $O(\lambda_{\rho}/\tilde{t}(P))$ for $\lambda_{\rho}\ll \tilde{t}(P)$. Since $\lambda_{\rho}$ decreases with $\rho$, this suggests that, for large $P$, $\tilde{t}(P)$ should be of the same order of $\lambda_{P}$ to have a sum of order $P$ at the right hand side of Eq. \eqref{t-tilde}. To see it more explicitly, we can consider an eigenvalue spectrum decaying as $\lambda_{\rho}\sim \rho^{-a}$ and we can approximate the sum in Eq. \eqref{t-tilde} with an integral:
	\begin{align}
	    P = \sum\limits_{\rho} \frac{1}{1+\frac{\tilde{t}(P)}{\lambda_{\rho}}} 
	    \sim \int_0^{\infty} d\rho \frac{1}{1+\tilde{t}(P) \rho^a} \propto \tilde{t}(P)^{-\frac{1}{a}}
	\end{align}
	which gives $\tilde{t}(P)\sim P^{-a}$, that is $\tilde{t}(P)\sim \lambda_P$.
	
	We have seen that the spectral bias prediction \eqref{predBord} does not work for vanishing $\lambda$. We may wonder about what happens for larger ridges. We compare the empirical test error $\varepsilon_t$ obtained from the experiments and the full prediction provided by \eqref{fullBordelon} for a large range of ridges $\lambda$. To compute the prediction \eqref{fullBordelon} we need:
	\begin{enumerate}[(i)]
		\item The exact eigenvalues $\lambda_\rho$, found via the self-consistent numerical scheme \eqref{sc1}. We computed them for ranks $\rho$ up to $5.1\cdot 10^4$. Since the scheme is valid for small $\lambda_\rho$, we replaced the first $10^3$ eigenvalues with the ones obtained diagonalising a large Gram matrix.
		\item The exact coefficients $c_\rho$, found projecting the solution $\phi_\rho$ of the differential equation \eqref{pde} onto the true function $f^*$ \eqref{trueFun}. We found them exactly for ranks $\rho<10^4$, then for ranks between $10^4$ and $5.1\cdot 10^4$ we extrapolated the value of $c^2_\rho$ doing a linear fit of $c_\rho^2$ with respect to $\rho$ for the first $10^4$ rank. 
	\end{enumerate}
	Once we have these ingredients, we can compute the prediction $\varepsilon_B$ provided by the full formula in \eqref{fullBordelon} for different training set sizes $P$, and compare it with the the empirical test error  with respect to the ridge $\lambda$, as we do for $\xi=0$ iFig. \ref{testError_departure_chi1_bord} (A) in the main text for $\chi =1$ and in Fig. \ref{testError_departure_chi0_bord} (A) for $\chi =0$. We can notice that:
	\begin{enumerate}[(1)]
		\item The prediction \eqref{fullBordelon}  for the scaling of $\varepsilon_t$ for fixed $P$ works for large ridge $\lambda$ and it breaks down lowering it. In section \ref{phaseDiagr} we argue that the crossover happens at $\lambda_{1,\chi}^*\sim P^{-\frac{1}{1+\chi}}$, as shown in Fig. \ref{testError_departure_chi1_bord} (B) in the main text for $\chi =1$ and in Fig. \ref{testError_departure_chi0_bord} (B) for $\chi =0$.
		\item The scaling of the prediction $\varepsilon_B$ with respect to $P$, given by \eqref{predBord}, captures the behaviour of the numerical results of $\varepsilon_B$ for small $\lambda$.
	\end{enumerate}

	
	\begin{figure*}
		\centering
			\includegraphics[width=1.\linewidth]{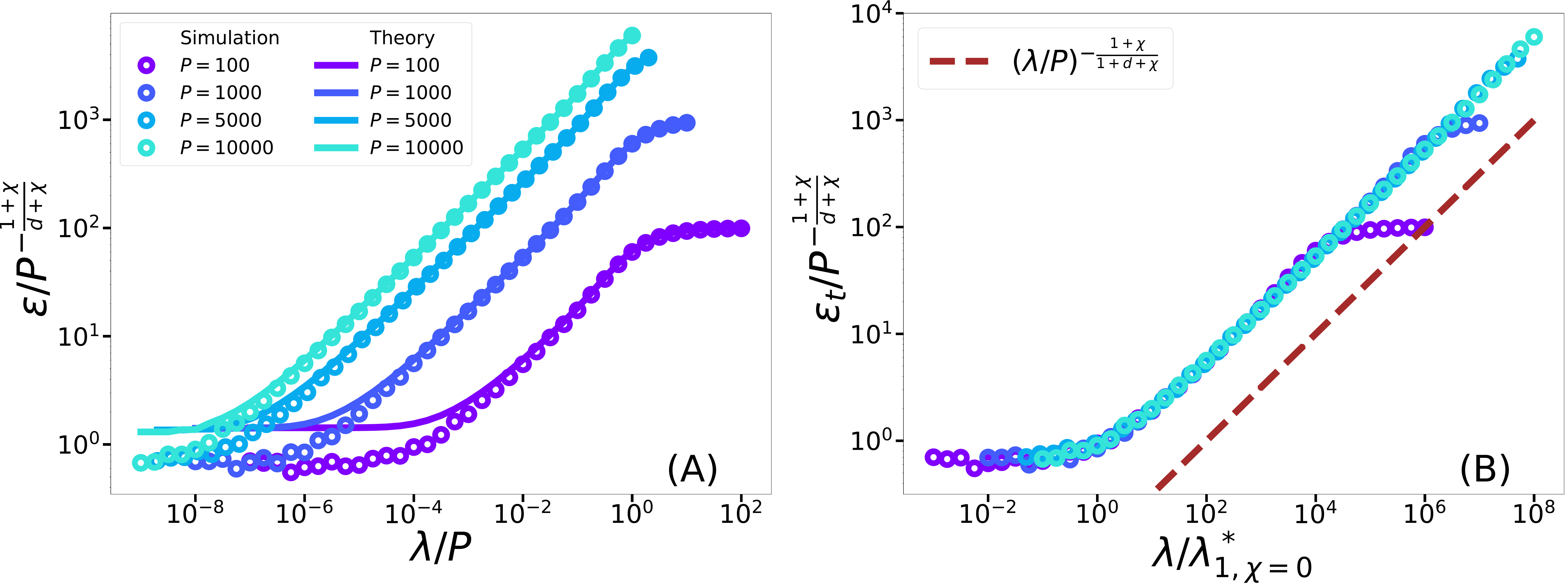}
		\caption{(A) Open symbols: empirical test error $\varepsilon_t$ (averaging over 200 realisations) rescaled by its ridgeless prediction \eqref{scalTestError}. Full lines: replica prediction $\varepsilon_B$ for fixed training set size $P$ and varying ridge $\lambda/P$. (B): the ridge has been rescaled by $\lambda^*_{1,\chi}$, defined in \eqref{lstar}. Brown line:  asymptotic behavior of $\varepsilon_B$ with $\lambda$ as predicted from Eq. \eqref{predBord}. }
		\label{testError_departure_chi0_bord}
	\end{figure*}

	\section{Kernel Alignment Risk Estimator} \label{kareApp}
	In this section we look at the results shown in \cite{Jacot2020}. In their work, the authors assume that, as far as one is interested in just the first two moments of the predictor $f_P$ \eqref{pred}, for any tuple of functions $(f_1,...,f_P)$ the vector of observations of these functions $(f_1(x_1),...,f_P(x_P))$ over $P$ points $\{x_i\}_{i=1,...,P}$ is a Gaussian vector. This Gaussianity Assumption includes also the eigenvectors $\phi_\rho$. As a consequence, it is possible to use rigorous Random Matrix Theory techniques for Gaussian matrices to obtain an estimate, called Kernel Alignment Risk Estimator (KARE), of the test error \eqref{mse} which depends just on the training data:
	\begin{equation}
		\varepsilon_{K} \sim \left\langle \frac{\frac{1}{P}(\vec{y})^T(K+\lambda \mathbb{1})^{-2}\vec{y}}{\left(\frac{1}{P}\text{Tr}\left[(K+\lambda \mathbb{1})^{-1}\right]\right)^2} \right\rangle,
		\label{kare}
	\end{equation}
	where the pedex $K$ stands for "KARE", the average is over different sampled sets, $\vec{y}$ is the vector of the labels in the training set and $K$ is the Gram matrix related to the $P$ samples $
	\{x_i\}$. The relation \eqref{kare} has a different prefactor in front of the Gram matrix with respect to the formula in \cite{Jacot2020}, which is due to our different definition of the training loss \eqref{min}. To obtain the relation \eqref{kare} they rely on some concentration results, whose fluctuations are controlled for values of the ridge $\lambda\rightarrow 0^+$ and training set size $P\rightarrow \infty$ such that $1/(\lambda\sqrt{P})\rightarrow 0^{+}$.
	
	We then test the prediction $\varepsilon_K$, comparing it with the empirical test error $\varepsilon_t$ with respect to the ridge $\lambda$ for fixed training set size $P$ in Fig. \ref{testError_departure_chi0_jacot} (A) for $\chi=0$ and in Fig. \ref{testError_departure_chi1_jacot} (A) for $\chi =0$. Both $\varepsilon_K$ and $\varepsilon_t$ are obtained averaging over 200 sampling realisations. We can see that the KARE prediction works for large $\lambda$, then it breaks down for small ridges, for fixed $P$. In section \ref{phaseDiagr} we argue that the crossover between the ridges where the KARE prediction works and where it does not is at $\lambda_{1,\chi}^*\sim P^{-\frac{1}{1+\chi}}$, as shown in Fig. \ref{testError_departure_chi0_jacot} (B) and in Fig. \ref{testError_departure_chi1_jacot} (B).
	

    \begin{figure*}
		\centering
			\includegraphics[width=1.\linewidth]{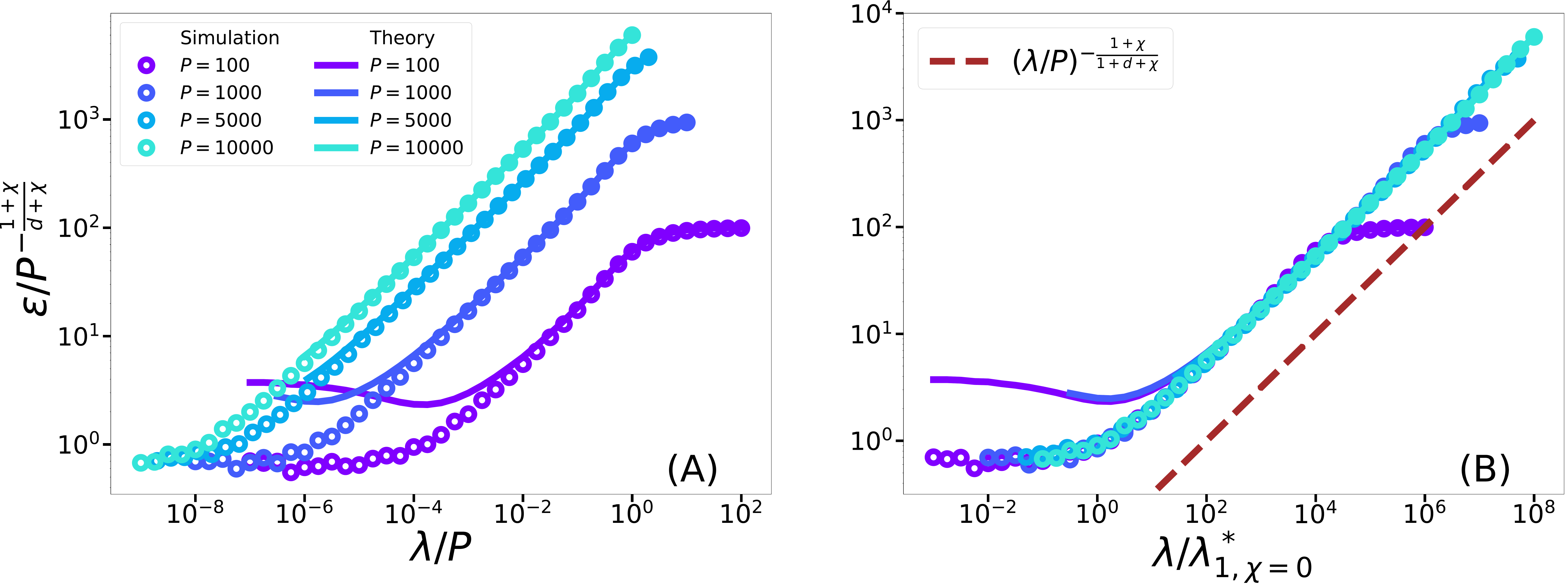}
		\caption{$d=1$, $\xi=0$, $\chi=0$. (A) Open symbols: empirical test error $\varepsilon_t$ (averaging over 200 realisations) rescaled by its ridgeless prediction \eqref{scalTestError}. Full lines: prediction $\varepsilon_K$ by \cite{Jacot2020} for fixed training set size $P$ and varying ridge $\lambda/P$. (B): the ridge has been rescaled by $\lambda^*_{1,\chi}$, defined in \eqref{lstar}. Brown line:  asymptotic behavior of $\varepsilon_t$ with $\lambda$ as predicted by replica prediction in Eq. \eqref{predBord}. }
		\label{testError_departure_chi0_jacot}
	\end{figure*}
	
	\begin{figure*}
		\centering
			\includegraphics[width=1.\linewidth]{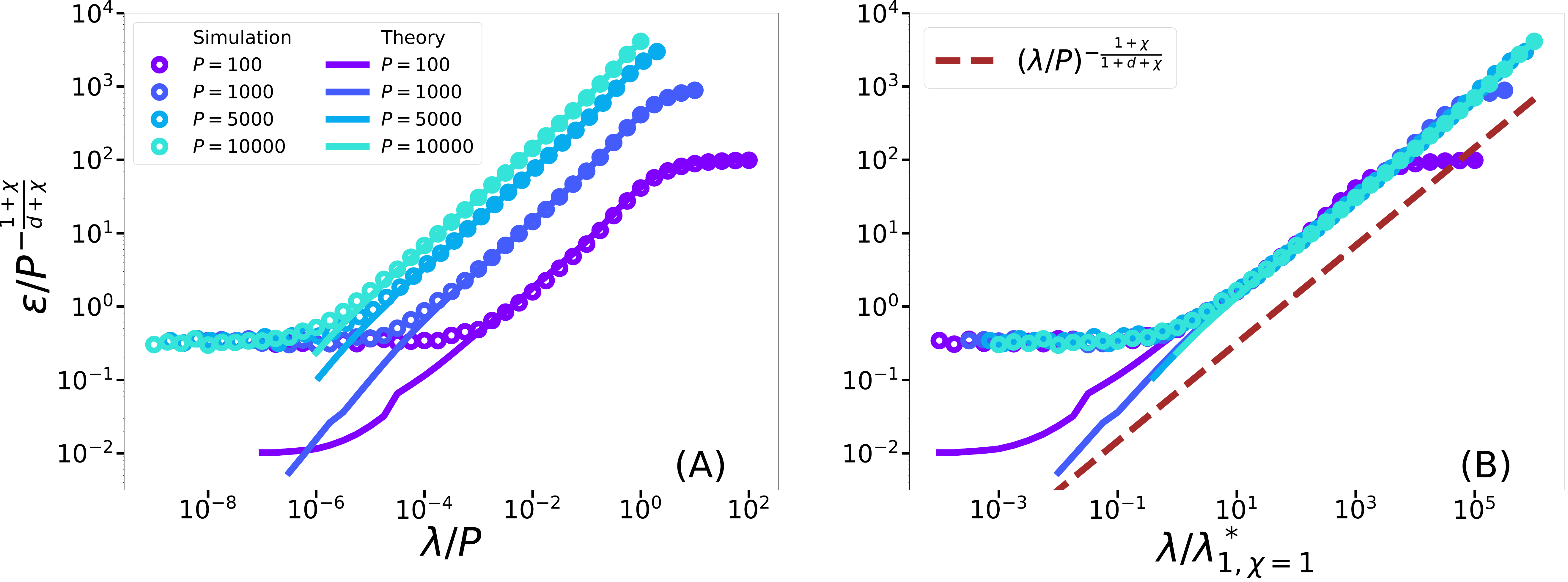}
		\caption{$d=1$, $\xi=0$, $\chi=1$. (A) Open symbols: empirical test error $\varepsilon_t$ (averaging over 200 realisations) rescaled by its ridgeless prediction \eqref{scalTestError}. Full lines: prediction $\varepsilon_K$ by \cite{Jacot2020} for fixed training set size $P$ and varying ridge $\lambda/P$. (B): the ridge has been rescaled by $\lambda^*_{1,\chi}$, defined in \eqref{lstar}. Brown line:  asymptotic behavior of $\varepsilon_t$ with $\lambda$ as predicted by replica prediction in Eq. \eqref{predBord}.}
		\label{testError_departure_chi1_jacot}
	\end{figure*}
	
	\section{No ridge test error Proofs and Numerics} \label{0ridgeproofs}
	
	\subsection{Proofs}
	
	\begin{theorem}[Test error]
		Consider a training set $\{x_i,f^*(x_i)\}_{i=1...P}$, where the samples $x_i$ are i.i.d. with respect to the PDF \eqref{pdf} and the true function $f^*_\xi$ is \eqref{trueFun}. In the limit of $P\rightarrow\infty$, the following asymptotic relation for the test error \eqref{mse} of KRR with Laplacian kernel $K(|x-y|)=\exp(-||x-y||_2 /\sigma)$ and ridge $\lambda\rightarrow0^+$ holds:
		\begin{equation}
			\begin{aligned}
				\varepsilon_t \sim& P^{-1+\frac{2\xi}{\chi+1}}
			\end{aligned}
		\end{equation}
	\end{theorem}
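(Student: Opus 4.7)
The plan is to formalize the heuristic sketched immediately after the statement of Theorem \ref{testErrTh}. The argument proceeds in three stages: (i) pin down via order statistics the two training samples flanking the decision boundary; (ii) describe the ridgeless Laplace-kernel predictor $f_P$ on the interval between these samples for generic width $\sigma$; (iii) integrate the squared error, showing that this interval dominates. Expectations over the training sample are taken at the end. For stage (i), let $x_A<0<x_B$ denote the flanking samples. Writing $F(t)=\int_0^t p(u)\,du\sim c\,t^{\chi+1}$ as $t\to 0^+$, standard order statistics give $\mathbb{P}(|x_B|>t)=(1-F(t))^{N_+}$ with $N_+$ binomial of mean $P/2$. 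This yields $|x_B|\sim P^{-1/(\chi+1)}$ in distribution and, for every $q<\chi+1$, the moment $\mathbb{E}|x_B|^q\sim P^{-q/(\chi+1)}$; the analogous statement for $|x_A|$ follows by symmetry, and concentration of $N_+$ around $P/2$ handles the fluctuations.

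For stage (ii), in the cone limit $\sigma\to\infty$ the ridgeless predictor reduces on $[x_A,x_B]$ exactly to the piecewise expression \eqref{piecewise}. For finite $\sigma$, I would write $f_P=\sum_i\alpha_i\exp(-|x-x_i|/\sigma)$ with $\alpha=K^{-1}y$, sort the training samples, and exploit the tridiagonal-like structure that the Gram matrix acquires for sorted 1D data. This follows from $K$ being the Green's function of $1-\sigma^2\partial_x^2$, as used in Theorem \ref{thmpde}, which translates into explicit off-diagonal decay of $K^{-1}$. Consequently, on $[x_A,x_B]$ only the $O(1)$ training points adjacent to $x_A$ and $x_B$ contribute to $f_P$ at leading order, and $f_P$ is a smooth, uniformly bounded interpolant between $f^*_\xi(x_A)$ and $f^*_\xi(x_B)$ with derivative of order $(x_B-x_A)^{-1}$ inside the gap.

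For stage (iii), I decompose $\varepsilon_t=I_{\mathrm{in}}+I_{\mathrm{out}}$, where $I_{\mathrm{in}}$ is the integral over $[x_A,x_B]$ and $I_{\mathrm{out}}$ the complementary one. On the inner interval $(f_P-f^*_\xi)^2$ is at most of order $\max(1,|x|^{-2\xi})$, and with $p(x)\sim|x|^\chi$ one obtains
\begin{equation}
I_{\mathrm{in}}\lesssim \int_0^{\max(|x_A|,|x_B|)} x^{\chi-2\xi}\,dx \sim \max(|x_A|,|x_B|)^{1+\chi-2\xi},
\end{equation}
the exponent being positive by the hypothesis $\xi<(\chi+1)/2$, with a matching lower bound since $f_P$ and $f^*_\xi$ have opposite signs on a fixed fraction of the gap. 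Taking expectations via stage (i) yields $\mathbb{E}\,I_{\mathrm{in}}\sim P^{-1+2\xi/(\chi+1)}$, reproducing \eqref{scalTestError}. For $I_{\mathrm{out}}$, on each typical inter-sample gap of width $\Delta(x)\sim 1/(Pp(x))$ the predictor interpolates a smooth target with error $O(\Delta(x)|f^{*\prime}(x)|)$, and bounding $\int p(x)\,\Delta(x)^2\,f^{*\prime}(x)^2\,dx$ gives an outer contribution that decays strictly faster than the inner one, so $I_{\mathrm{in}}$ dominates.

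The main obstacle is stage (ii) at finite $\sigma$: one must quantify how exponential tails of the Laplace kernel from samples outside $[x_A,x_B]$ perturb $f_P$ on this interval, so as to recover the cone-limit scaling. My strategy is to use explicit off-diagonal decay estimates on $K^{-1}$ exploiting the Jacobi-like structure that arises after conjugating by $\exp(\pm x/\sigma)$, which reduces the analysis to that of a nearest-neighbor tridiagonal matrix on sorted samples; these same decay estimates also control the outer contribution. The remainder of the proof is a routine integration, and the predicted scaling is verified numerically in this appendix against simulations with varying $\sigma$ and $P$.
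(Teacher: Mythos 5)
Your proposal follows the same three-stage skeleton as the paper's appendix proof (order statistics for the sample nearest the decision boundary, local form of the ridgeless Laplace predictor between consecutive samples, integration of the squared error), so the overall strategy is correct. However, the ``main obstacle'' you identify in stage (ii) --- controlling finite-$\sigma$ tails via off-diagonal decay estimates on $K^{-1}$ --- is in fact not an obstacle, and the paper handles it exactly rather than approximately. On any interval between two consecutive sorted training points, the predictor $f_P = \sum_j \alpha_j e^{-|x-x_j|/\sigma}$ satisfies the exact identity $f_P''(x) = f_P(x)/\sigma^2$, because every summand does; hence $f_P(x) = A_i e^{x/\sigma}+B_i e^{-x/\sigma}$ on that interval, and exact ridgeless interpolation $f_P(x_i)=f^*(x_i)$, $f_P(x_{i+1})=f^*(x_{i+1})$ fixes $A_i,B_i$ in closed form. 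No estimates are needed: the tails from distant samples are automatically folded into $A_i,B_i$ by the two boundary conditions. (This is the ODE-based analogue of the exact tridiagonality of the inverse Gram matrix you allude to, but it is the form the paper actually uses and it renders the finite-$\sigma$ issue trivial.)

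The second issue is the outer estimate. You claim $I_{\mathrm{out}}\lesssim\int p(x)\,\Delta(x)^2\,(f^{*})'(x)^2\,dx$ ``decays strictly faster than the inner one.'' It does not: with $p(x)\sim x^\chi$, $\Delta(x)\sim 1/(P p(x))$, and $(f^*)'(x)\sim x^{-\xi-1}$, the integrand is $\sim x^{-\chi-2\xi-2}/P^2$, which is dominated by the lower endpoint $x_B\sim P^{-1/(\chi+1)}$ and evaluates to $\sim P^{-1+2\xi/(\chi+1)}$, the \emph{same} order as $I_{\mathrm{in}}$ (and the same happens even if you sharpen the per-gap error to $\Delta^2|f^{*\prime\prime}|$). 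This is consistent with what the paper proves by splitting the contributions into four categories: $\mathcal{O}(P)$ bulk intervals, $\mathcal{O}(1)$ near-origin intervals other than $[x_A,x_B]$, $\mathcal{O}(1)$ tail intervals, and the central interval $[x_A,x_B]$ --- the near-origin and central intervals \emph{all} contribute at the leading order $P^{-1+2\xi/(\chi+1)}$, while the bulk and tail intervals are subdominant. So the exponent you obtain is correct, but the logic ``outer decays strictly faster, hence $I_{\mathrm{in}}$ dominates'' is false as stated; you should either argue that $I_{\mathrm{out}}$ is at most the same order (which, combined with your matching lower bound on $I_{\mathrm{in}}$, is enough) or isolate the $\mathcal{O}(1)$ near-origin intervals explicitly as the paper does.
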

	\begin{proof}
		The sketch of the proof is the following.  We first find the form of the KRR predictor $f_P$ between any couple of sampled points $\{x_i,x_{i+1}\} $ which are neighbours. Then we estimate the amount of test error in the interval $[x_i,x_{i+1}]$, in the limit of large P. Lastly, we get the test error as the sum of all the contributions in such intervals. 
		
		Without loss of generality, we say that the sampled points are such that $x_1<x_2<...<x_{M}<0$ and $0<x_{M+1}<...<x_{P}$. Note that, in the asymptotic limit of large $P$, we expect:
		\begin{equation}
			M\sim P/2 \sim P, 
			\label{asymptoticM}
		\end{equation}
		due to the symmetry of the PDF \eqref{pdf}. It is now relevant to look at the scaling of the typical value of $x_M$ and $x_P$ with $P$, since it will be important to get the scaling of $\varepsilon_t$.
		
		\begin{lemma}
			Let's consider the sampled point $x_M$ and $x_{M+1}$, which are the closest to $x=0$. In the asymptotic limit of large $P$, the following scaling holds for their averages over the sampling:
			\begin{equation}
				\langle |x_M| \rangle \sim \langle |x_{M+1}| \rangle \sim P^{-\frac{1}{\chi+1}}. 
				\label{xm}
			\end{equation}
			Let's consider the sampled points $x_1$ and $x_{P}$, which are the most far from $x=0$. In the asymptotic limit of large $P$, it holds for their averages:
			\begin{equation}
				\langle |x_1| \rangle \sim \langle |x_{P}| \rangle \sim \sqrt{\log P}
				\label{xp}
			\end{equation}
		\end{lemma}
		\begin{proof}
			Let's start from $x_{M+1}$. We have that its typical value $\langle x_{M+1} \rangle$ is such that:
			\begin{equation}
				\frac{1}{P}\sim \int_{0}^{\langle x_{M+1} \rangle} dx\, p(x) \sim \int_{0}^{\langle x_{M+1} \rangle} dx\, x^{\chi}e^{-x^2},
				\label{a11}
			\end{equation}
			where $p(x)$ is given by \eqref{pdf}. The relation \eqref{a11} can be interpreted as if we expect to sample on average one point out of $P$ in the interval $[0,\langle x_{M+1}\rangle]$. For large $P$, we have $\langle x_{M+1}\rangle\ll 1$, hence we can write:
			\begin{equation}
				\frac{1}{P}\sim \int_{0}^{\langle x_{M+1} \rangle} dx\, x^{\chi} \propto \langle x_{M+1} \rangle^{\chi+1},
			\end{equation}
			which gives $\langle x_{M+1} \rangle = P^{-1/(\chi+1)}$, as in \eqref{xm}. The same logic can be applied to $x_M$.
			
			Now we consider $x_P$. Its typical value $\langle x_{P} \rangle$ will be such that:
			\begin{equation}
				\frac{1}{P}\sim \int_{\langle x_P\rangle }^{\infty}p(x)\,dx.
				\label{a20}
			\end{equation}
			Since we expect $\langle x_P\rangle$ to increase with $P$, we can rewrite the quantity $\int_{\langle x_P\rangle }^{\infty}p(x)\,dx$ as:
			\begin{equation}
				\int_{\langle x_P\rangle }^{\infty}p(x)\,dx \sim \int_{\langle x_P\rangle }^{\infty}e^{-x^2}\,dx,
			\end{equation}
			which is the $\text{erfc}$ function evaluated in $\langle x_P\rangle$. We can now make use of the following asymptotic expansion of the erfc function for large $x$:
			\begin{equation}
				\text{erfc}(x)=\frac{e^{-x^2}}{x\sqrt{\pi}}\left[1+\sum\limits_{n=1}^{\infty}(-1)^n\frac{(2n)!}{n!(2x)^{2n}}\right],
				\label{erfc}
			\end{equation}
			which at the leading order for large $x$ gives $\text{erfc}(x)\sim e^{-x^2}/x$. We can then rewrite \eqref{a20} as
			\begin{equation}
				\frac{1}{P}\sim \frac{e^{-\langle x_P\rangle ^2}}{\langle x_P\rangle}.
			\end{equation}
			Looking at the leading behaviour of $\langle x_P\rangle$ with respect to $P$, we find:
			\begin{equation}
				\langle x_P\rangle\sim \sqrt{\log P},
			\end{equation}
			which is \eqref{xp}. The same procedure can be applied to $x_1$.
		\end{proof}

		Now we consider the form of the predictor $f_P$.
		\begin{lemma}
			Given a couple of neighboring points $\{x_i,x_{i+1}\}$, we define $\Delta x_i = x_{i+1} - x_{i}$. The form of the predictor $f_P(x)$ \eqref{pred} in the interval $x \in [x_i,x_{i}+\Delta x_i]$ depends on $x_i$:
			\begin{itemize}
				\item For $x_i>0$:
				\begin{equation}
					\begin{aligned}
						f_P(x)= |x_i|^{-\xi}\left[A(x_i)e^{\frac{(x-x_i)}{\sigma}}+ B(x_i)e^{-\frac{(x-x_i)}{\sigma}}\right],
					\end{aligned}
					\label{a3}
				\end{equation}
				where:
				\begin{equation}
					A(x_i)=\left[\frac{\left(1-e^{-\frac{\Delta x_i}{\sigma}}\right)}{\left(1-e^{\frac{\Delta x_i}{\sigma}}\right)}\frac{\left((1+\frac{\Delta x_i}{x_i})^{-\xi}-e^{\frac{\Delta x_i}{\sigma}}\right)}{2\sinh\left(\frac{\Delta x_i}{\sigma}\right)}+\frac{1-(1+\frac{\Delta x_i}{x_i})^{-\xi}}{\left(1-e^{\frac{\Delta x_i}{\sigma}}\right)}\right]
					\label{y1}
				\end{equation}
				and:
				\begin{equation}
					B(x_i) = 			
					-\frac{\left((1+\frac{\Delta x_i}{x_i})^{-\xi}-e^{\frac{\Delta x_i}{\sigma}}\right)}{2\sinh\left(\frac{\Delta x_i}{\sigma}\right)}
					\label{y2}
				\end{equation}
				\item For $x_i<0$ and $x_i\neq x_M$, the predictor $f_P(x)$ has the form \eqref{a3}, with the functions $A(x_i)$ and $B(x_i)$ defined as in \eqref{y1} and \eqref{y2} with changed sign.
				\item For $x_i=x_M$, the predictor $f_P(x)$ has the same form of \eqref{a3} but with the functions $A(x_i)$ and $B(x_i)$ defined as:
				\begin{equation}
					A(x_i)=\left[\frac{\left(1-e^{-\frac{\Delta x_i}{\sigma}}\right)}{\left(1-e^{\frac{\Delta x_i}{\sigma}}\right)}\frac{\left((1+\frac{\Delta x_i}{x_i})^{-\xi}+e^{\frac{\Delta x_i}{\sigma}}\right)}{2\sinh\left(\frac{\Delta x_i}{\sigma}\right)}+\frac{1+(1+\frac{\Delta x_i}{x_i})^{-\xi}}{\left(1-e^{\frac{\Delta x_i}{\sigma}}\right)}\right]
					\label{y3}
				\end{equation}
				and:
				\begin{equation}
					B(x_i) = 			
					-\frac{\left((1+\frac{\Delta x_i}{x_i})^{-\xi}+e^{\frac{\Delta x_i}{\sigma}}\right)}{2\sinh\left(\frac{\Delta x_i}{\sigma}\right)}
					\label{y4}
				\end{equation}

			\end{itemize}
			
		\end{lemma}
		
		\begin{proof}
			The general form of the KRR predictor is:
			\begin{equation}
				f_P(x)=\sum_{i=1}^P\alpha_i K(x,x_i)=\sum_{i=1}^P\alpha_i e^{-||x-x_i||_2 /\sigma}, 
			\end{equation}
			
			with the coefficients $\alpha_i$ fixed by minimizing the training loss \eqref{min}. Since we are in the ridgeless limit $\lambda\rightarrow0^+$, the minimisation problem boils down to having the predictor $f_P$ fit the training set $\{x_i\}_{i=1...P}$.
			
			Let's consider $x\in]x_i,x_i+\Delta x_i[$. The predictor $f_P$ can be then rewritten as follows:
			\begin{equation}
				f_P(x) = \sum_{j=1}^{i}\alpha_j e^{-\frac{(x-x_j)}{\sigma}} + \sum_{j=i+1}^{P}\alpha_j e^{\frac{-(x_j-x)}{\sigma}}.
				\label{a1}
			\end{equation}
			If we derive \eqref{a1} two times with respect to $x$, we find the following differential equation satisfied by $f_P$:
			\begin{equation}
				\begin{aligned}
					f_P^{''}(x)& =  \frac{1}{\sigma^2}\left(\sum_{j=1}^{i}\alpha_j e^{-\frac{(x-x_j)}{\sigma}} + \sum_{j=i+1}^{P}\alpha_j e^{\frac{-(x_j-x)}{\sigma}}\right)\\
					& = \frac{1}{\sigma^2} f_P(x)
				\end{aligned}
				\label{a2}
			\end{equation}
			The solution of \eqref{a2} is given by the sum of two exponential functions, with coefficients $A_i$ and $B_i$:
			\begin{equation}
				f_P(x) = A_i e^{\frac{x}{\sigma}}+B_i e^{-\frac{x}{\sigma}}.
			\end{equation}
			The coefficients $A_i$ and $B_i$ are fixed by requesting that the predictor $f_P$ perfectly fits the true function $f^*_\xi(x)$ \eqref{trueFun} on the training set $\{x_i\}$. This requirement amounts to imposing the following boundary conditions:
			\begin{equation}
				f_P(x_i)=f^*_\xi(x_i),\qquad f_P(x_i+\Delta x_i)=f^*_\xi(x_i+\Delta x_i).
			\end{equation}
			Imposing these boundary conditions, the previously stated relations are found.
		\end{proof}
		
		We now look at the amount of test error \eqref{mse} done by the predictor given by \eqref{a3} in a generic interval $[x_i,x_i+\Delta x_i]$.
		
		\begin{lemma}
			We define the amount of test error in the interval $[x_i,x_i+\Delta x_i]$, for $i<P$, as follows:
			\begin{equation}
				\varepsilon_{x_i}=\int_{x_i}^{x_i+\Delta x_i} dx p(x)|f_P(x)-f^*(x)|^2.
				\label{contribute}
			\end{equation}
			where $p(x)$ is given by \eqref{pdf}. In the asymptotic limit of small $\Delta x_i$ (which is equivalent to the asymptotic limit of large $P$), we have for $\xi>0$ that:
			\begin{equation}
				\varepsilon_{x_i}\sim p(x_i) \frac{(\Delta x_i)^3}{x_i^2}|x_i|^{-2\xi},
				\label{a91}
			\end{equation}
			while for $\xi=0$:
			\begin{itemize}
				\item For $x_i\neq x_M$:
				\begin{equation}
					\varepsilon_{x_i}\sim p(x_i)\frac{(\Delta x_i)^5}{\sigma^4}|x_i|^{-2\xi}.
					\label{a9}
				\end{equation}
				\item For $x_i= x_M$:
				\begin{equation}
					\varepsilon_{x_M}\sim |x_M|^{\chi+1-\xi}.
					\label{a15}
				\end{equation}
			\end{itemize}
		\end{lemma}
		\begin{proof}
			Let's first consider the $\xi>0$ case. The contribute $\varepsilon_{x_i}$ \eqref{contribute}  for $x_i>0$ can be written as follows, using \eqref{a3}:
			\begin{equation}
				\begin{aligned}
					\varepsilon_{x_i}&=\int_{x_i}^{x_i+\Delta x_i} dx \,p(x)|f_P(x)-f^*_\xi(x)|^2\\
					&\sim \int_{x_i}^{x_i+\Delta x_i} dx \,p(x)\,|x_i|^{-2\xi}\,\left| \left[ \frac{\left((1+\frac{\Delta x_i}{x_i})^{-\xi}-1\right)}{\frac{\Delta x_i}{\sigma}}\right]\sinh{\left(\frac{x-x_i}{\sigma}\right)}+\right.\\
					&\qquad\qquad\qquad\qquad\qquad\qquad\qquad\qquad\left.+\cosh{\left(\frac{x-x_i}{\sigma}\right)}-\left|\frac{x}{x_i}\right|^{-\xi}\right|^2,
				\end{aligned}
			\end{equation}
			where the second equation has been obtained expanding the function \eqref{a3} for small $\Delta x_i$ with respect to $\sigma$. We now change variable $y=x-x_i$, obtaining:
			\begin{equation}
				\begin{aligned}
					\varepsilon_{x_i}&\sim \int_{0}^{\Delta x_i} dy \,p(y+x_i)\,|x_i|^{-2\xi}\,\left|\frac{y}{\sigma}\left[ \frac{\left((1+\frac{\Delta x_i}{x_i})^{-\xi}-1\right)}{\frac{\Delta x_i}{\sigma}}\right]+1+\frac{y^2}{\sigma^2}-\left|\frac{y+x_i}{x_i}\right|^{-\xi}\right|^2
				\end{aligned}
				\label{a61}
			\end{equation}
			where we have expanded the $\sinh$ and the $\cosh$ for small $y$ with respect to $\sigma$. Now we have two cases:
			\begin{itemize}
				\item One case where the increment $\Delta x_i$ is small with respect to $x_i$. This happens for a number of order $P$ of sampled points in the training set. In this case we can expand $p(y+x_i)$, $(1+\frac{\Delta x_i}{x_i})^{-\xi}$ and $(1+\frac{y}{x_i})^{-xi}$ in $y$ or $\Delta x_i$ with respect to $x_i$ in \eqref{a61}, obtaining at the leading order in $\Delta x_i$:
				\begin{equation}
					\begin{aligned}
						\varepsilon_{x_i}&\sim \int_{0}^{\Delta x_i} dy \,p(x_i)\,|x_i|^{-2\xi}\,\left(\xi\frac{y}{x_i}+\frac{y^2}{\sigma^2}\right)^2\\
						&\sim p(x_i) \frac{(\Delta x_i)^3}{x_i^2} \,|x_i|^{-2\xi}
					\end{aligned}
				\end{equation}
				which is the relation \eqref{a91}.
				\item Another case where the increment $\Delta x_i$ is of the same order in $P$ with respect to $x_i$. This happens just for a few points around $x=0$, and the number of these points is of order $P^0= \mathcal{O}(1)$. Since these points $x_i$ are close to 0, we have:
				\begin{equation}
					\begin{aligned}
						p(x_i+y)&\sim |x_i+y|^{\chi}= x_i^{\chi}\left|1+\frac{y}{x_i}\right|^{\chi}\\
						&\sim p(x_i)\left|1+\frac{y}{x_i}\right|^{\chi}
					\end{aligned}
					\label{a81}
				\end{equation}
				Plugging \eqref{a81} in \eqref{a61} and noticing that the quantity $\left((1+\frac{\Delta x_i}{x_i})^{-\xi}-1\right)$ is a constant in $P$, we rewrite \eqref{a61} at the leading order in $\Delta x_i\sim x_i$:
				\begin{equation}
					\varepsilon_{x_i}\sim \int_{0}^{\Delta x_i} dy \,p(x_i)\,|x_i|^{-2\xi}\,\left|1+\frac{y}{\Delta x_i}-(1+\frac{y}{x_i})^{-\xi}\right|^2,
				\end{equation}
				which yields:
				\begin{equation}
					\varepsilon_{x_i}\sim p(x_i)\,|x_i|^{-2\xi} \Delta x_i,
				\end{equation}
				which is consistent with the wanted relation \eqref{a91}, since $\Delta x_i\sim x_i$.
			\end{itemize}
			For $\xi>0$ and $x_i<0$ we can repeat the same procedure, getting relation \eqref{a91} for the contribute $\varepsilon_{x_i}$. 
			
			For $\xi=0$ and $x_i\neq x_M$ we can repeat the procedure done above for $\xi>0$ up to \eqref{a61}. Then, we notice that imposing $\xi=0$ in \eqref{a61}, we get:
			\begin{equation}
				\varepsilon_{x_i}\sim \int_{0}^{\Delta x_i} dy \,p(y+x_i)\frac{y^4}{\sigma^4}.
			\end{equation}
			We can then repeat the study of the two different cases done for $\xi>0$, depending on whether the increment $\Delta x_i$ is of the same order or not with respect to $x_i$. We then get \eqref{a9}.
			
			For $\xi\ge0$ and $x_i= x_M$ the contribute $\varepsilon_{x_M}$ \eqref{contribute} can be rewritten using the form \eqref{y3} and \eqref{y4} of the predictor and expanding for small $\Delta x_M/\sigma$:
			\begin{equation}
				\begin{aligned}
					\varepsilon_{x_M}&\sim \int_{x_M}^{x_M+\Delta x_M} dx \,p(x)\,|x_M|^{-2\xi}\,\left| \left[ \frac{\left((1+\frac{\Delta x_i}{x_i})^{-\xi}+1\right)}{\frac{\Delta x_i}{\sigma}}\right]\sinh{\left(\frac{x-x_i}{\sigma}\right)}+\right.\\
					&\qquad\qquad\qquad\qquad\qquad\qquad\qquad\qquad\left.-\cosh{\left(\frac{x-x_i}{\sigma}\right)}-f_\xi^*(x)\right|^2.
				\end{aligned}
				\label{a32}
			\end{equation}
			We can split the integral in \eqref{a32} in two parts, one from $x_M$ to 0 and another one from 0 to $x_{M+1}$. We analyze the first part, the second part can be analysed similarly. Since $x_M\sim P^{-\frac{1}{\chi+1}}\ll 1$, we can write \eqref{a32} in the following form, with the change of variable $y=x-x_M$:
			\begin{equation}
				\varepsilon_{x_M}\sim \int_0^{-x_M}dy\,  (y+x_M)^{\chi}\left|2\frac{y}{\Delta x_i}+\left|\frac{y+x_M}{x_M}\right|^{-\xi}-1\right|^2,
			\end{equation}
			which yields at the leading order in $P$:
			\begin{equation}
				\varepsilon_{x_M}\sim |x_M|^{\chi-2\xi}\frac{|x_M|^3}{(\Delta x_M)^2},
			\end{equation}
			which is consistent with both \eqref{a91} and \eqref{a15}.
		\end{proof}
		
		Now that we have the contributes \eqref{contribute} to the test error in a given interval $[x_i,x_i+\Delta x_i]$, we want to sum over them to get the behaviour of the full test error with respect to $P$. Before getting to that, we prove an intermediate result about the average spacing between two neighbouring points $x_i$ and $x_i+\Delta x_i$:
		\begin{lemma}
			Given a couple of neighbouring points $x_i$ and $x_i+\Delta x_i$, for $i<P$ and $i\neq M$, the average distance between them in the asymptotic limit of large $P$ is given by:
			\begin{equation}
				\langle \Delta x_i \rangle \sim \frac{1}{P p(x_i)},
				\label{spacing}
			\end{equation}
			where the average is over the sampling of the training set.
		\end{lemma}
		\begin{proof}
			On average, we expect that between $\langle x_i \rangle $ and $\langle x_i +\Delta x_i\rangle $ there is one sampled point out of $P$:
			\begin{equation}
				\frac{1}{P}\sim \int_{\langle x_i \rangle}^{\langle x_i +\Delta x_i\rangle}p(x)\,dx.
			\end{equation}
			Since we are considering large $P$, we have:
			\begin{equation}
				\frac{1}{P}\sim p(x_i) \langle \Delta x_i \rangle,
			\end{equation}
			which gives \eqref{spacing}.
		\end{proof}

		\begin{lemma}
			In the asymptotic limit of large $P$, the test error \eqref{mse} can be rewritten as follows:
			\begin{equation}
				\varepsilon_t = \sum_{i=1}^{P-1}\varepsilon_{x_i} + \int_{x_P}^{\infty}p(x)|f_P(x)-f_\xi^*(x)|^2\,dx + \int_{-\infty}^{x_1}p(x)|f_P(x)-f_\xi^*(x)|^2\,dx,
				\label{full}
			\end{equation}
			where $\varepsilon_{x_i}$ is defined in \eqref{contribute}.
			Then, the following holds:
			\begin{equation}
				\begin{aligned}
					\varepsilon_t \sim  P^{-\left(\frac{\chi+1-2\xi}{\chi+1}\right)}.
				\end{aligned}
				\label{result1}
			\end{equation}
		\end{lemma}
		\begin{proof}
			The relation \eqref{full} is immediate from the the definition of the test error \eqref{mse} and of the contributes \eqref{contribute}. 
			
			Consider the dependence on $P$ of the second term in \eqref{full}. It is always possible to bound from above $|f_P(x)-f_\xi^*|^2$ with a positive constant $C$. Then:
			\begin{equation}
				\int_{x_P}^{\infty}p(x)|f_P(x)-f_\xi^*(x)|^2\,dx \le C  \int_{x_P}^{\infty}p(x)dx.
			\end{equation}
			The right hand side of this relation is exactly (up to the constant $C$) the definition \eqref{a20} of the typical value of $x_P$. As a consequence, we have that the contribute of the second term to the test error is of order smaller or equal to $P^{-1}$. The same applies for the third term in \eqref{full}:
			\begin{equation}
				\int_{x_P}^{\infty}p(x)|f_P(x)-f_\xi^*(x)|^2\,dx + \int_{-\infty}^{x_1}p(x)|f_P(x)-f_\xi^*(x)|^2\,dx \le \frac{C_1}{P},
				\label{x1}
			\end{equation}
			where $C_1>0$.
			
			Now we consider the contributes corresponding to the first term in \eqref{full}. We start from $\xi=0$. The scaling with respect to $P$ of the contributes $\varepsilon_{x_i}$ can be of four different kinds. 
			
			\begin{itemize}
				\item We have a number of order $\mathcal{O}(P)$ of contributes $\varepsilon_{x_i}$ such that $p(x_i)$ does not scale with $P$. These contributes correspond to points $x_i$ sampled in the bulk of the distribution \eqref{pdf}. Combining \eqref{a9} and \eqref{spacing} we have then that the contribution of these terms to the full test error in \eqref{full} is:
				\begin{equation}
					P\cdot\frac{1}{P^5 \sigma^4}\sim  \frac{1}{P^4 \sigma^4},
					\label{x2}
				\end{equation}
				where the first factor $P$ stands for the number of contributes we are considering. 
				
				\item We have a number of order $\mathcal{O}(1)$ of contributes $\varepsilon_{x_i}$ related to points $x_i$ sampled with \eqref{pdf} close to $x=0$, and they are different from $x_M$. We expect that their typical value scales like $\langle x_i \rangle \sim P^{-\frac{1}{\chi+1}}$, as $x_M$ in \eqref{xm}. As a consequence, we have that for these points:
				\begin{equation}
					p(x_i)\sim |x_i|^{\chi}\sim P^{-\frac{\chi}{\chi+1}}.
					\label{a22}
				\end{equation}
				Combining \eqref{a9}, \eqref{spacing} and \eqref{a22} we obtain that the contribution of these terms to the full test error in \eqref{full} is:
				\begin{equation}
					\frac{1}{P^5 \sigma^4 p^4(x_i)}\sim  \frac{1}{P^5 \sigma^4 P^{-\frac{4\chi}{\chi+1}}}\sim \frac{1}{\sigma^4 P^{\left(1+\frac{4}{\chi+1}\right)}}.
					\label{x3}
				\end{equation}
				
				\item There is a number of order $\mathcal{O}(1)$ of contributes $\varepsilon_{x_i}$ related to points $x_i$ sampled with \eqref{pdf} in the tail of the Gaussian. Their typical value will scale with $P$ as $\sqrt{\log P}$, as $x_P$ in \eqref{xp}. Then, disregarding logarithmic factors in $P$:
				\begin{equation}
					p(x_i)\sim e^{-x_i^2}\sim\frac{1}{P}.
					\label{a23}
				\end{equation}
				Combining \eqref{a9}, \eqref{spacing} and \eqref{a23} we have then that the contribution of these terms to the full test error in \eqref{full} is, disregarding logarithmic factors in $P$:
				\begin{equation}
					\frac{1}{P^5 \sigma^4 p^4(x_i)}\sim  \frac{1}{P^5 \sigma^4 P^{-4}}\sim \frac{1}{\sigma^4 P}.
					\label{x4}
				\end{equation}
				
				\item Now we look at the contribution $\varepsilon_{x_M}$. Combining \eqref{a15} and \eqref{xm} we have:
				\begin{equation}
					\varepsilon_{x_M}\sim \frac{1}{P}
					\label{x5}
				\end{equation}
			\end{itemize}
			Summing over the contributions \eqref{x1}, \eqref{x2}, \eqref{x3}, \eqref{x4} and \eqref{x5}, we obtain that the leading contribute in $P$ to the test error is given by the relation \eqref{result1}. 
			
			Now we look at the $\xi>0$ case. In this case we have three different types of contributes $\varepsilon_{x_i}$.
			\begin{itemize}
				\item For the number $\mathcal{O}(P)$ of contributes $\varepsilon_{x_i}$ where $p(x_i)$ and $x_i$ do not scale with $P$, we obtain combining \eqref{a91}  and \eqref{spacing} the following contribution to $\varepsilon_t$:
				\begin{equation}
					P\cdot \frac{1}{P^3}\sim \frac{1}{P^2}.
					\label{a111}
				\end{equation}
				
				\item For the number $\mathcal{O}(1)$ of contributes $\varepsilon_{x_i}$ where $\langle x_i \rangle \sim P^{-\frac{1}{\chi+1}}$, we have that the contribution to the test error is given by:
				\begin{equation}
					\frac{1}{P^3}\frac{1}{x_i^{2\chi+2+2\xi}}\sim P^{-\frac{\chi+1-2\xi}{\chi+1}}.
					\label{a112}
				\end{equation}
				
				\item For the number $\mathcal{O}(1)$ of contributes $\varepsilon_{x_i}$ where $\langle x_i\rangle\sim \sqrt{\log P}$, the contribute to $\varepsilon_t$ is (disregarding logarithmic factors in $P$):
				\begin{equation}
					\frac{p(x_i)}{P^3 p^3(x_1)}\sim \frac{1}{P},
					\label{a113}
				\end{equation}
				using \eqref{a23}.
				
			\end{itemize}
			Combining the contributes \eqref{a111}, \eqref{a112} and \eqref{a113} we get that the leading contribute in $P$ to the test error $\varepsilon_t$ is given by the  relation \eqref{result1}. 
			
		\end{proof}

	\end{proof}
	\vspace{-1em}
	\subsection{Numerics}
	
	
	Considering $\frac{\chi+1}{2}>\xi>0$ and $\chi=2$ and $\chi=4$, the prediction \eqref{scalTestError} still holds, as shown in Fig. \ref{empTestError_xiMagg0}, realised for $\sigma=100$ and $\lambda = 10^{-12}$. e use as a ridge $\lambda = 10^{-12}$ and not exactly 0 to avoid numerical instabilities due to the inversion of the matrix in \eqref{pred}. We choose $\lambda = 10^{-12}$ because it is smaller than the values of the eigenvalues of the Gram matrices used in \eqref{pred}, but it is large enough to avoid the instabilities. In Appendix \ref{num} there are further details for the sampling scheme used for the training and test sets.

	\begin{figure}
		\centering
		\includegraphics[width=0.49\linewidth]{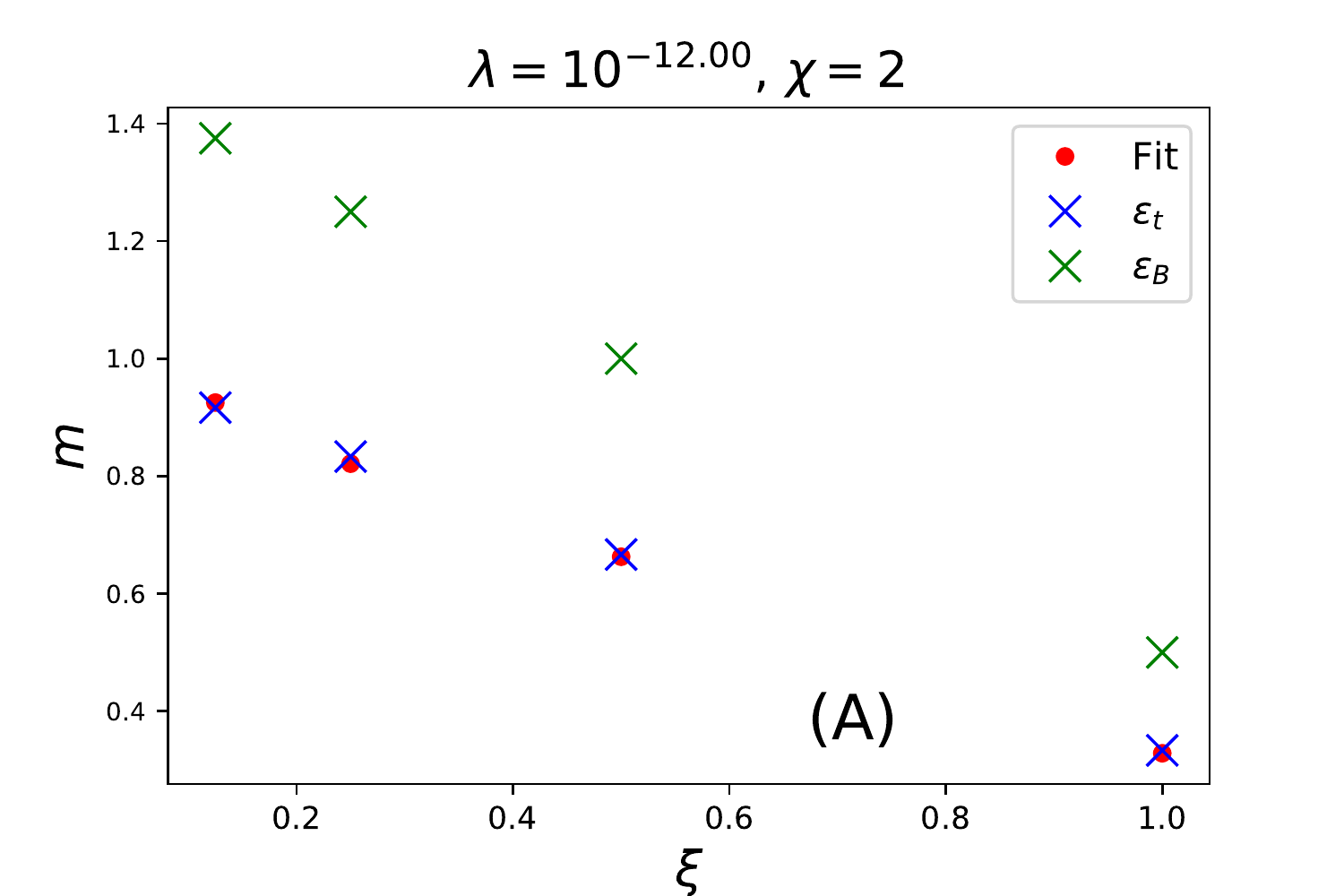}
		\includegraphics[width=0.49\linewidth]{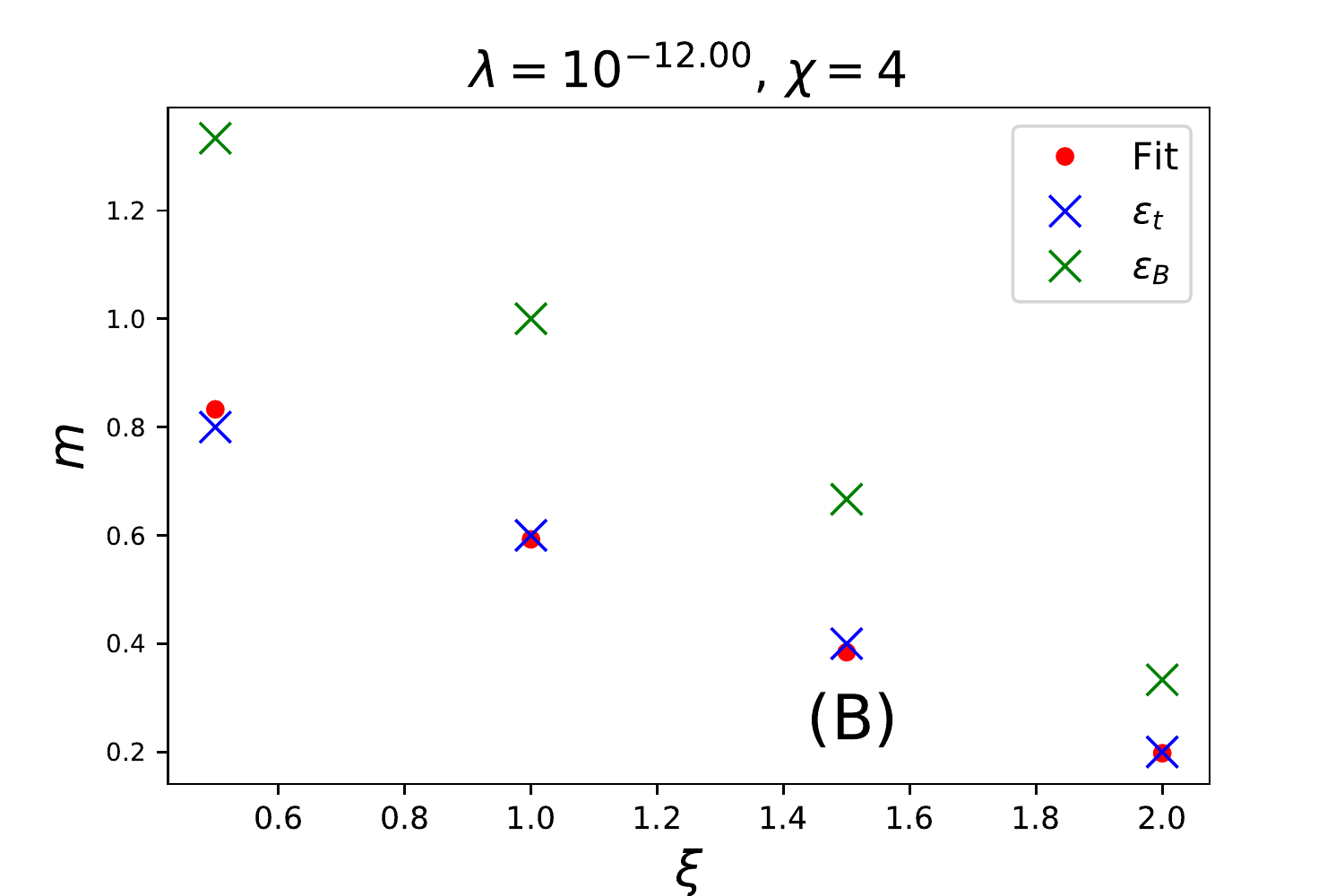}
		\caption{$d=1$, ridge $\lambda=10^{-12}$, $\chi=2$ (A) and $\chi=4$ (B). Exponent $m$ of the relation $\varepsilon\sim P^{-m}$, where $\varepsilon$ is the test error and $P$ the training set size, with respect to $\xi$ of the true function \eqref{trueFun}. Red points: exponent from a fit of the simulations, for $P\in[10,10^4]$, averaged over 20 realizations. Blue crosses: exponent of $\varepsilon_t$ from the prediction \eqref{scalTestError}. Green crosses: exponent of $\varepsilon_B$ from the spectral bias prediction \eqref{predBord}.}
		\label{empTestError_xiMagg0}
	\end{figure}
	
	\section{Eigendecomposition Proofs and Numerics}\label{Eigenproofs}
	\subsection{Proofs}
	We remind, as presented in Th. \ref{thmpde} in the main text, that the eigenvectors $\phi_\rho$ of the Laplace kernel satisfy the following differential equation:
	\begin{equation}
			\phi_{\rho}^{''}(x)=\left(-2\frac{p(x)}{\lambda_{\rho} \sigma} +\frac{1}{\sigma^2}\right)\phi_{\rho}(x).
			\label{pdeApp}
		\end{equation}
	
	We proceed to solve the differential equation \eqref{pde} to get an explicit form of the $\phi_\rho$. Then we compute the coefficients $c_\rho$ obtained decomposing the true function $f^*$ onto the eigenvectors:
    \begin{equation}
		c_{\rho} = \int dx\, p(x) f^*_\xi(x)\phi_{\rho}(x)
		\label{coeff}
	\end{equation}
	Lastly, we obtain a numerical scheme to get small eigenvalues $\lambda_\rho$.

	\begin{proposition}(Coefficients) \label{coffPropApp}
		Let $K$ be the Laplacian kernel with width $\sigma>0$: $K(x,y)=K(|x-y|)=\exp(-||x-y||_2 /\sigma)$. Let $p(x)$ be \eqref{pdf}. Consider a small eigenvalue $\lambda_{\rho}\ll 1$. Let $\phi_\rho$ be the solution of \eqref{pdeApp}. We impose that $\phi_\rho(x)\rightarrow0$ for $|x|\rightarrow\infty$. Then the following holds for the coefficient $|c_\rho|$ defined in \eqref{coeff}, in the limit $\lambda_{\rho}\ll 1$:
		\begin{equation}
			\begin{aligned}
				|c_{\rho}|\sim& \lambda_{\rho}^{\frac{\frac{3}{4}\chi+1-2\xi}{\chi+2}}& \text{if }&\phi_{\rho} \text{ is odd}\\
				|c_{\rho}| = & 0 &\text{   if }&\phi_{\rho} \text{ is even}.
			\end{aligned}
			\label{scal1coeffApp}
		\end{equation}
	\end{proposition}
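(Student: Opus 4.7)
The plan is to use matched asymptotics based on the WKB expansion of $\phi_\rho$ developed in Sec.~\ref{sec:kernel_decomposition}. The even case is immediate: if $\phi_\rho$ is even then $p\, f^*_\xi\, \phi_\rho$ is odd, so $c_\rho=\int_{\mathbb R} p\, f^*_\xi\, \phi_\rho\, dx=0$. For the odd case the same symmetry gives $c_\rho=2\int_0^\infty p(x)\,x^{-\xi}\,\phi_\rho(x)\,dx$, and the whole task is to identify which region of $(0,\infty)$ dominates this integral and to estimate the integrand there.

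The key observation is that, while the analysis of Sec.~\ref{sec:kernel_decomposition} places an Airy transition near the inner turning point $x_1\sim(\lambda_\rho\sigma)^{1/\chi}$ (where $p(x_1)=\lambda_\rho/(2\sigma)$), the scale that actually controls $c_\rho$ is the larger crossover length $\ell_*\sim(\lambda_\rho\sigma)^{1/(\chi+2)}$, defined equivalently by $\Gamma(\ell_*)\,\ell_*\sim 1$ or by $2p(\ell_*)/(\lambda_\rho\sigma)\sim 1/\ell_*^2$. Since $\phi_\rho(0)=0$ by oddness and the WKB phase $\int_0^x \Gamma$ has not yet completed one full oscillation on $[0,\ell_*]$, $\phi_\rho$ is essentially linear there and $\phi_\rho(\ell_*)\sim\phi_\rho'(0)\,\ell_*$. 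Concretely, under the rescaling $y=x/x_1$, $\epsilon=x_1/\sigma\ll 1$, equation \eqref{pdeApp} becomes $\tilde\phi''(y)=\epsilon^2(1-y^\chi)\tilde\phi(y)$, and the relative deviation of $\tilde\phi$ from a straight line is $O(\epsilon^2 y^{\chi+2})$, which remains small until $y\sim y_*=\epsilon^{-2/(\chi+2)}$ (equivalently $x\sim\ell_*$).

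To fix $\phi_\rho'(0)$ I would match this linear profile to the bulk WKB amplitude at $x\sim\ell_*$. Normalizing \eqref{wkb4} via $\int p\,\phi_\rho^2=1$ gives $\phi_\rho(x)\sim\lambda_\rho^{-1/4}/\sqrt{\Gamma(x)}$ in the classically allowed region, and using $\Gamma(\ell_*)\sim 1/\ell_*$ this evaluates to $\phi_\rho(\ell_*)\sim\lambda_\rho^{-1/4}\sqrt{\ell_*}$. Equating to $\phi_\rho'(0)\,\ell_*$ gives $\phi_\rho'(0)\sim\lambda_\rho^{-1/4}\,\ell_*^{-1/2}\sim\lambda_\rho^{-(\chi+4)/(4(\chi+2))}$. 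Substituting into the leading piece $c_\rho\sim\phi_\rho'(0)\int_0^{\ell_*} p(x)\,x^{1-\xi}\,dx\sim\phi_\rho'(0)\,\ell_*^{\chi+2-\xi}$ (convergent because $\xi<(\chi+1)/2$) yields $|c_\rho|\sim\lambda_\rho^{(3\chi/4+1-\xi)/(\chi+2)}$; this matches the value used in Thm.~\ref{coeffScalTh}, and suggests that the ``$-2\xi$'' in the appendix statement is a typographical slip for ``$-\xi$''.

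The main obstacle is pinpointing $\ell_*$: it is neither the turning point $x_1$ nor the Airy half-width $1/\alpha\sim\lambda_\rho^{1/(3\chi)}$ from the eigenvalue derivation, but rather the scale at which $\phi_\rho$ first completes one full oscillation and at which the two subleading terms of the ODE balance. Once $\ell_*$ is identified and the linear form on $[0,\ell_*]$ is in hand, the remaining estimates are routine: the contribution from $(\ell_*,x_2)$ is oscillatory and, via integration by parts against $1/\Gamma$, contributes only boundary terms of the same order as the dominant $(0,\ell_*)$ piece, while the tail $x>x_2\sim\sqrt{\log(1/\lambda_\rho)}$ is doubly suppressed by the Gaussian decay of $p$ and the exponential decay of $\phi_\rho$ beyond the outer turning point.
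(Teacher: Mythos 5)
Your proposal reaches the correct scaling by a route that is more streamlined than the paper's. The paper's proof (Prop.~\ref{coffPropApp}) feeds the full five-piece construction of $\phi_\rho$ from Lemma~\ref{evecChiMagg0} (inner Airy, MAF, WKB bulk, outer Airy, decaying tail) into the projection integral \eqref{coeff}, estimates each piece separately, and identifies the pieces supported on $[\beta\lambda_\rho^{1/\chi},\zeta\lambda_\rho^{1/(2+\chi)}]$ and $[\zeta\lambda_\rho^{1/(2+\chi)},x_2-\dots]$ as jointly dominant, each giving $\lambda_\rho^{(\frac34\chi+1-\xi)/(\chi+2)}$. You instead collapse the inner Airy and MAF regions into a single statement --- that $\phi_\rho$ is effectively linear on $[0,\ell_*]$ with $\ell_*\sim(\lambda_\rho\sigma)^{1/(\chi+2)}$ --- and fix the slope $\phi_\rho'(0)$ by matching directly to the normalized WKB amplitude at $\ell_*$. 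Your rescaled equation $\tilde\phi''=\epsilon^2(1-y^\chi)\tilde\phi$ makes the window of validity transparent, and the exponent arithmetic checks out: $\phi_\rho'(0)\sim\lambda_\rho^{-(\chi+4)/(4(\chi+2))}$ combined with $\int_0^{\ell_*}p\,x^{1-\xi}\sim\ell_*^{\chi+2-\xi}$ gives $|c_\rho|\sim\lambda_\rho^{(\frac34\chi+1-\xi)/(\chi+2)}$, which agrees with the paper's derived result and with the main-text Prop.~\ref{propCoeff}, confirming that the ``$-2\xi$'' in the appendix statement is indeed a typo for ``$-\xi$''. What your approach buys is economy and physical clarity (one matching scale $\ell_*$ rather than a cascade of intervals); what it trades away is that the paper's full construction simultaneously justifies the $\lambda_\rho^{-1/4}$ normalization (via the explicit $O(1)$ norm computation) and treats the oscillatory $[\ell_*,x_2]$ contribution with Olver's integration-by-parts lemma rather than a sketch. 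To make your version self-contained you would need to carry out the normalization estimate and the oscillatory-tail bound explicitly, but both are routine and the structure of the argument is sound.
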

	\begin{proof}
		Let's consider the differential equation \eqref{pdeApp}, satisfied by the eigenvectors $\phi_{\rho}$. We will solve that differential equation for small $\lambda_{\rho}$, then we will compute the integral which defines $c_{\rho}$ \eqref{coeff} at the leading order in $\lambda_{\rho}$.

		\begin{lemma}[Eigenvectors for $\chi>0$] \label{evecChiMagg0}
			Let $\phi_\rho$ be a solution of \eqref{pdeApp} for a given $\lambda_\rho$. For $\chi>0$ and $x>0$, let $x_1$ and $x_2$ be the roots of the function:
			\begin{equation}
				\Gamma^{2}(x)=\frac{2}{\lambda_{\rho}\sigma}p(x)-\frac{1}{\sigma^2},
				\label{gamma}
			\end{equation}
			where $p(x)$ is given by \eqref{pdf}.  Let $\text{Ai}$ and $\text{Bi}$ be the Airy functions of first and second kind \cite{Florentin1966}. We impose that $\phi_\rho(x)\rightarrow0$ for $|x|\rightarrow\infty$. If $\phi_\rho$ is odd in $x$, then some positive coefficients $\alpha,\, \beta,\, \zeta,\, \delta_1,\,$ $\delta_2$ exist, independent on $\lambda_\rho$, such that $\phi_{\rho}$ is approximated at the leading order in $\lambda_{\rho}$ by the following definition by parts, for $x>0$:
			\begin{align}
			    \phi_{\rho}(x) \simeq
			    \begin{cases}
			        \phi_{\rho}^{(I)}(x), \qquad \text{for }\  x\in [0,\beta \lambda_{\rho}^{\frac{1}{\chi}}]\\
			        \phi_{\rho}^{(II)}(x), \qquad \text{for } x\in [\beta \lambda_{\rho}^{\frac{1}{\chi}},x_2-\frac{\delta_1}{\sqrt{-\log \lambda_{\rho}}}]\\
			        \phi_{\rho}^{(IV)}(x), \qquad \text{for } x\in [x_2-\frac{\delta_1}{\sqrt{-\log \lambda_\rho}},x_2+\frac{\delta_2}{\sqrt{-\log \lambda_\rho}}]\\
			        \phi_{\rho}^{(V)}(x), \qquad \text{for } x\in [x_2+\frac{\delta_2}{\sqrt{-\log \lambda_\rho}},\infty] 
			    \end{cases}
			\end{align}
			
			with:
			
			\begin{align}
				\phi_{\rho}^{(I)}(x)=
				\frac{\alpha}{\lambda_{\rho}^{1/12}}\left(\text{Ai}(\mu-\nu x)-\gamma_1\text{Bi}(\mu-\nu x) \right)
			\label{c1}
			\end{align}
			
			\begin{align}
				\phi_{\rho}^{(II)}(x)\simeq
                \frac{\alpha}{\lambda_{\rho}^{1/4}}\left(\text{Ai}(\xi(x))-\gamma_1\text{Bi}(\xi(x)) \right)\frac{|\xi(x)|^{1/4}}{|\Gamma^2(x)|^{1/4}}
				\label{c2}
			\end{align}			

			\begin{align}
				\phi_{\rho}^{(IV)}(x)=W_1 \text{Ai}\left[\left(\frac{2 x_2}{\sigma^2}\right)^{1/3}(x-x_2)\right] + W_2 \text{Ai}\left[\left(\frac{2 x_2}{\sigma^2}\right)^{1/3}(x-x_2)\right]
				\label{c4}
			\end{align}

			\begin{align}
					\phi_{\rho}^{(V)}(x)\sim\frac{\alpha(\sin\theta -\gamma_1\cos\theta)}{2\sqrt{\pi}(-p(x)+\lambda_{\rho})^{1/4}}\exp\left(-\int_{x_2}^x \sqrt{-\Gamma^2(z)}dz\right)
				\label{c5}
			\end{align}

			where we have introduced the notation
			$\mu=\left(\frac{\chi (\lambda_\rho\Gamma[\frac{1+\chi}{2}])^{\frac{2}{\chi}}}{2^{\frac{2}{\chi}}\sigma^{2(1+\chi)}}\right)^{1/3}$,
			$\nu=\left(\frac{2^{1-\frac{1}{\chi}}\chi}{\sigma^{2-\frac{1}{\chi}}\lambda_\rho^{\frac{1}{\chi}}}\right)^{1/3}$,
			$\gamma_1 = \text{Ai}(\mu)/\text{Bi}(\mu)$,
			$\xi(x)=-\left[\frac{3}{2}\int_{x_1}^x\sqrt{\Gamma^2(z)}dz\right]^{2/3}$,
			$\Gamma^2(x)$ defined in \eqref{gamma},
			$\theta = \int_{x_1}^{x_2}\sqrt{\Gamma^2(z)}dz+\frac{\pi}{4}$.
			The coefficients $W_1$ and $W_2$ are found matching the solution parts \eqref{c2}, \eqref{c4} and \eqref{c5}. They are such that $W_1\sim W_2 \sim \lambda_{\rho}^{-1/4}$.\\
			This definition by parts is to be interpreted as made of two matching parts $\phi_{\rho}^{(I)}$ and $\phi_{\rho}^{(IV)}$ around the roots $x_1$ and $x_2$ of the function $\Gamma^2(x)$, a bulk part $\phi_{\rho}^{(II)}$ for $x_1<x<x_2$, and a tail part $\phi_{\rho}^{(V)}$ for $x\gg x_2$. In particular, the bulk part $\phi_{\rho}^{(II)}$ \eqref{c2} can be simplified for $x$ far from the roots $x_1$ and $x_2$. More precisely, in the interval $[\zeta \lambda_{\rho}^{\frac{1}{(2+\chi)}}, x_2-\frac{\delta_1}{\sqrt{-\log \lambda_\rho}}]$, it holds $\phi_{\rho}^{(II)}(x)\simeq \phi_{\rho}^{(III)}(x)$ with:
			
			\begin{align}					   
			    \phi_{\rho}^{(III)}(x)\sim \frac{\alpha}{(p(x)-\lambda_{\rho})^{1/4}}\left(\sin\left(\int_{x_1}^x \sqrt{\Gamma^2(z)}dz+\frac{\pi}{4}\right)-\right.
				\left. +\gamma_1\cos\left(\int_{x_1}^x \sqrt{\Gamma^2(z)}dz+\frac{\pi}{4}\right) \right).
			\label{c3}
			\end{align}

			If $\phi_\rho$ is instead even in $x$, and such that $\phi^{'}_\rho(0)=0$, then it has the same form as the equations \eqref{c1}-\eqref{c5} for $x>0$, but the coefficient $\gamma_1$ is defined as $\text{Ai}'(\mu)/\text{Bi}'(\mu)$.
			
			\begin{proof}
				In this proof we will consider for simplicity the following form of the differential equation \eqref{pdeApp}:
				\begin{equation}
					\phi_{\rho}^{''}(x)+\tilde{\Gamma}^{2}(x)\phi_{\rho}(x)=0,
					\label{pde2}
				\end{equation}
				with $\tilde{\Gamma}^2$ defined as follows:
				\begin{equation}
					\tilde{\Gamma}^2(x)=\frac{1}{\lambda_{\rho}}|x|^{\chi}e^{-x^2}-1,
					\label{gamma2}
				\end{equation}
				where we get rid of the numerical coefficients 2, $\sigma$ and $\Gamma[\frac{1+\chi}{2}]$ present in \eqref{pdeApp} and \eqref{gamma}. Then in the final results \eqref{c1}-\eqref{c5} we put back these coefficients. 
				
				We will start with the odd eigenvectors $\phi_{\rho}$. The first thing we remark is that the function $\tilde{\Gamma}^2$ has two roots for $\chi>0$:
				\begin{equation}
					x_{1}\sim \left(\lambda_\rho \right)^{1/\chi}, \qquad x_2 \sim \sqrt{-\log\lambda_\rho}.
					\label{x1x2full}
				\end{equation}   
				To obtain the first piece of the eigenvector \eqref{c1}, we expand the function $\tilde{\Gamma}^2(x)$ in $x$ around $x_1$:
				\begin{equation}
					\begin{aligned}
						\tilde{\Gamma}^2(x) \sim & e^{-\lambda_{\rho} ^{2/\chi }}  \left(\chi -2 \lambda_{\rho} ^{2/\chi }\right) \lambda_{\rho} ^{-\frac{1}{\chi }} \left(x-\lambda_{\rho} ^{1/\chi }\right)+\\
						&+\frac{1}{2} e^{-\lambda_{\rho} ^{2/\chi }}  \left(-2 (2 \chi +1) \lambda_{\rho} ^{2/\chi }+4 \lambda_{\rho} ^{4/\chi }+(\chi -1) \chi \right) \lambda_{\rho} ^{-\frac{ 2}{\chi }} \left(x-\lambda_{\rho} ^{1/\chi }\right)^2
					\end{aligned}
					\label{trunc1}
				\end{equation}
				We want to truncate this expansion at first order in $\left(x-\lambda_{\rho} ^{1/\chi }\right)$. The solution of \eqref{pde2} with the truncated expansion will be equal to the full solution form 0 up to a certain $x^*$. We find $x^*$ as the point such that the second order of the expansion \eqref{trunc1} is of the same order of the first order, at the leading order in $\lambda_{\rho}$. For $\chi\neq1$, the comparison of the two orders in \eqref{trunc1} yields:
				\begin{equation}
					\lambda_{\rho} ^{-1/\chi }(x-\lambda_{\rho} ^{1/\chi })\sim \lambda_{\rho} ^{-2/\chi }(x-\lambda_{\rho} ^{1/\chi })^2,
				\end{equation}
				from which we have:
				\begin{equation}
					x^*\sim \lambda_{\rho} ^{1/\chi }
					\label{xstar}
				\end{equation}
				For $\chi=1$ the relation \eqref{xstar} holds again, since $x^*$ depends continuously on $\chi$. In other words, it exists a constant $\beta$, independent on $\lambda_\rho$, such that $x^*=\beta\lambda_{\rho} ^{1/\chi } > x_1$.
				
				For $x\in[0,x^*]$, the equation \eqref{pde2} becomes, at the leading order in $\lambda_\rho$:
				\begin{equation}
					\phi_{\rho}^{''}(x)+\lambda_{\rho} ^{-\frac{1}{\chi }} \left(x-\lambda_{\rho} ^{1/\chi }\right)\phi_{\rho}(x)=0,
				\end{equation}
				whose solution $\phi_\rho^{(I)}$, with boundary condition $\phi_{\rho}(0)=0$, is given by \eqref{c1}. The factor $\lambda_{\rho}^{-1/12}$ is due to the normalisation to 1 of the full $\phi_{\rho}$, as we will see later in the proof.
				
				The solution of \eqref{pde2}, for $x$ distant from the roots $x_1$ and $x_2$ of $\tilde{\Gamma}^2$, can be found by means of the technique of the Modified Airy Function (MAF) \cite{Ghatak1991}. The solution of \eqref{pde2} is given by:
				\begin{equation}
					\phi_\rho^{(II)}(x)\sim
					\begin{aligned}
						\left(Q_1 \text{Ai}(\xi(x))+Q_2\text{Bi}(\xi(x)) \right)\frac{|\xi(x)|^{1/4}}{|\Gamma^2(x)|^{1/4}}, 
					\end{aligned}
					\label{cc2}
				\end{equation}
				where $Q_1$ and $Q_2$ are constants to be determined and $\xi(x)=-\left[\frac{3}{2}\int_{x_1}^x\sqrt{\tilde{\Gamma}^2(z)}dz\right]^{2/3}$. The solution \eqref{cc2} holds up for the points $x$ such that the following inequality is valid:
				\begin{equation}
					\left|(\xi')^{-3}\left(\frac{3(\xi'')^2}{4\xi'}-\frac{1}{2}\xi'''\right)\right|\ll |\xi|.
					\label{controlled}
				\end{equation}
				The relation \eqref{controlled} holds for $x\in[x^*,x_2-\frac{\delta_1}{\sqrt{\log \lambda_\rho}}]$, since in that interval we have, at the leading order in $\lambda_\rho$:
				\begin{equation}
					\begin{aligned}
						\xi(x)\sim \xi'(x) \sim \xi''(x) \sim \xi'''(x)\sim  \lambda_{\rho}^{-1/3}.
					\end{aligned}
				\end{equation}
				
				The constants $Q_1$ and $Q_2$ can be found matching the solution \eqref{cc2} with \eqref{c1}, taking care of the fact that:
				\begin{equation}
					\frac{\xi^{1/4}(x)}{\left(\tilde{\Gamma}^2(x)\right)^{1/4}}\xrightarrow[x\rightarrow (x^*)^+]{}\lambda_\rho^{1/6},
				\end{equation}
				then obtaining:
				\begin{equation}
					\phi_\rho^{(II)}(x)\sim
					\begin{aligned}
						\frac{\alpha}{\lambda_{\rho}^{1/4}}\left(\text{Ai}(\xi(x))-\gamma_1\text{Bi}(\xi(x)) \right)\frac{|\xi(x)|^{1/4}}{|\Gamma^2(x)|^{1/4}},
					\end{aligned}
					\label{ccc2}
				\end{equation}
				The relation \eqref{ccc2} can be approximated in the following way. The Airy function $\text{Ai}$ has the following asymptotic approximation for large negative $y$ \cite{Ghatak1991}:
				\begin{equation}
					\begin{aligned}
						\text{Ai}(y)\xrightarrow[y\rightarrow-\infty]{}&\frac{1}{\sqrt{\pi}y^{1/4}}\left[\sin\left(y^{\frac{3}{2}}+\frac{\pi}{4}\right)\sum\limits_{k=0}^{\infty}(-1)^k c_{2k}(y^{3/2})^{-2k}+\right.\\
						&\left. -\cos\left(y^{\frac{3}{2}}+\frac{\pi}{4}\right)\sum\limits_{k=0}^{\infty}(-1)^k c_{2k+1}(y^{3/2})^{-2k-1} \right],
					\end{aligned}
					\label{d3}
				\end{equation}
				where:
				\begin{equation}
					c_0=1,\quad c_k =\frac{\Gamma(3k+\frac{1}{2})}{54^k k! \Gamma(k+\frac{1}{2})},\, k\ge 1.
				\end{equation}
				As a consequence, for large $y$:
				\begin{equation}
					\text{Ai}(y)\sim \frac{1}{\sqrt{\pi}y^{1/4}}\sin\left(y^{\frac{3}{2}}+\frac{\pi}{4}\right).
					\label{d1}
				\end{equation}
				Similarly, it holds for $\text{Bi}$:
				\begin{equation}
					\text{Bi}(y)\sim \frac{1}{\sqrt{\pi}y^{1/4}}\cos\left(y^{\frac{3}{2}}+\frac{\pi}{4}\right).
					\label{d2}
				\end{equation}
				Using these asymptotic relations with $y=\xi(x)$, we find that when they are valid the solution $\phi_\rho^{(II)}$ \eqref{ccc2} can be approximated by $\phi_\rho^{(III)}$ \eqref{c3}. We remark that the solution \eqref{c3} can be found in the literature under the name of WKB approximation \cite{Ghatak1991} and it is of interest in the field of quantum mechanics.

				We want to define better the interval where we can use the approximation \eqref{c3} of $\phi_{\rho}$. The upper bound of that integral will be given by the limit of validity of \eqref{ccc2}, hence $\frac{\delta_1}{\sqrt{-\log \lambda_{\rho}}}$. The lower bound of that interval will be given by the $x=\hat{x}$ such that the zero and first order in the expansion \eqref{d3} are of the same order, hence when:
				\begin{equation}
					1\sim \frac{1}{(\xi(\hat{x}))^{3/2}}
					\label{e1}
				\end{equation}
				For small $x$ and small $\lambda_{\rho}$, using \eqref{x1x2full}, we have the following asymptotic relation:
				\begin{equation}
					\int_{x_1}^{\hat{x}}\left(\frac{x^\chi}{\lambda_{\rho}}e^{-x^2}-1\right)^{1/2}\sim \frac{\hat{x}^{\frac{\chi}{2}+1}}{\sqrt{\lambda_{\rho}}}.
					\label{e2}
				\end{equation}
				Combining \eqref{e1} and \eqref{e2}, we get:
				\begin{equation}
					\hat{x}\sim \lambda_{\rho}^{\frac{1}{2+\chi}},
				\end{equation}
				hence it exists a $\zeta>0$, independent on $\lambda_{\rho}$, such that $\hat{x}=\zeta \lambda_{\rho}^{\frac{1}{2+\chi}}$.
				
				Since the MAF solution \eqref{c2} is not valid around $x_2$, to get the form of $\phi_\rho$ in that region we linearize $\tilde{\Gamma}^2(x)$ around $x_2$, and then we solve exactly the differential equation in the region where this approximation is valid. Since we are looking at large $x$ (since $x_2\sim\sqrt{-\log\lambda_\rho}$), we can approximate $x^\chi e^{-x^2}$ in \eqref{gamma2} with $e^{-x^2}$. The expansion of $\tilde{\Gamma}^2(x)$ around $x_2$ up to second order then yields:
				\begin{equation}
					\tilde{\Gamma}^2(x) \sim -2 \sqrt{-\log (\lambda_{\rho} )} \left(x-\sqrt{-\log (\lambda_{\rho} )}\right)+(-2 \log (\lambda_{\rho} )-1) \left(x-\sqrt{-\log (\lambda_{\rho} )}\right)^2.
					\label{trunc4}
				\end{equation}
				The truncation at the first order of \eqref{trunc4} is valid in a region around $x_2$ such that the at the boundaries of that region the second order in \eqref{trunc4} is of the same order in $\lambda_{\rho}$ of the first order. This happens for $x$ such that:
				\begin{equation}
					|x - \sqrt{-\log \lambda_{\rho}}|\sim \frac{1}{\sqrt{-\log \lambda_{\rho}}}
				\end{equation}
				Consequently, there exist coefficients $\delta_1$ and $\delta_2$ independent on $\lambda_{\rho}$ such that the differential equation \eqref{pde2} with $\tilde{\Gamma}^2(x)$ approximated up to the first order in \eqref{trunc4} has a solution \eqref{c4}:
				\begin{equation}
					\begin{aligned}
						\phi_{\rho}^{(IV)}(x)=W_1 \text{Ai}\left[\left(\frac{2 x_2}{\sigma^2}\right)^{1/3}(x+x_2)\right] + W_2 \text{Ai}\left[\left(\frac{2 x_2}{\sigma^2}\right)^{1/3}(x+x_2)\right], \\\qquad \text{for } x\in [x_2-\frac{\delta_1}{\sqrt{-\log \lambda_\rho}},x_2+\frac{\delta_2}{\sqrt{-\log \lambda_\rho}}]
					\end{aligned}
					\label{cc4}
				\end{equation}
				The coefficients $W_1$ and $W_2$ are found matching \eqref{cc4} with the solution parts of $\phi_\rho$ before and after its interval of validity. Since $\phi_{\rho}^{(III)}(x)$ in $x=x_2-\frac{\delta_1}{\sqrt{-\log\lambda}}$ has amplitude $\sim e^{\frac{1}{4}x_2^2}\sim \lambda_{\rho}^{-1/4}$, then $W_1\sim W_2 \sim \lambda_{\rho}^{-1/4}$.
				
				To find the solution for $x\ge \left(x_2+\frac{\delta_2}{\sqrt{-\log \lambda_\rho}}\right)$, we can make use of some formulae in the literature, called "connection formulae" \cite{Ghatak1991}, which map the WKB solution for $x< x_2$ (at left of the arrow) into the one for $x> x_2$ (at right):
				\begin{equation}
					\begin{aligned}
						\frac{2}{(\tilde{\Gamma}^{2}(x))^{1/4}}\sin\left[\int_x^{x_2}dx(\tilde{\Gamma}^{2}(x))^{1/2}+\frac{\pi}{4}\right] \rightarrow & \frac{2}{(-\tilde{\Gamma}^{2}(x))^{1/4}} \exp\left[ -\int^x_{x_2}dx(-\tilde{\Gamma}^{2}(x))^{1/2} \right]\\
						\frac{1}{(\tilde{\Gamma}^{2}(x))^{1/4}}\cos\left[\int_x^{x_2}dx(\tilde{\Gamma}^{2}(x))^{1/2}+\frac{\pi}{4}\right] \rightarrow & \frac{2}{(-\tilde{\Gamma}^{2}(x))^{1/4}} \exp\left[ \int^x_{x_2}dx(-\tilde{\Gamma}^{2}(x))^{1/2} \right]
					\end{aligned}
					\label{connection}
				\end{equation}
				The idea behind the proof of these formulae is to take the exact solution of the equation \eqref{pde2} in the interval close to $x_2$, and then expand the Airy functions at left and at right of $x_2$.
				Using the relations \eqref{connection}, the solution for $x\ge \left(x_2+\frac{\delta_2}{\sqrt{-\log \lambda_\rho}}\right)$ is:
				\begin{equation}
					\begin{aligned}
						\phi_{\rho}^{(V)}(x)\sim&\frac{\alpha(\sin\theta -\gamma_1\cos\theta)}{2\sqrt{\pi}(-x^\chi e^{-x^2}+\lambda_{\rho})^{1/4}}\exp\left(-\int_{x_2}^x \sqrt{-\tilde{\Gamma}^2(z)}dz\right)+\\
						&+\frac{\alpha(\gamma_1\sin\theta +\cos\theta)}{\sqrt{\pi}(-x^\chi e^{-x^2}+\lambda_{\rho})^{1/4}}\exp\left(\int_{x_2}^x \sqrt{-\tilde{\Gamma}^2(z)}dz\right),
					\end{aligned}
				\end{equation}
				where $\theta = \int_{x_1}^{x_2}\sqrt{\Gamma^2(z)}dz+\frac{\pi}{4}$. Now we impose that $\phi_{\rho}(x)\rightarrow0$ for $x\rightarrow \infty$, in order not to have exponentially divergent terms which would make the norm of $\phi_\rho$ infinite. This request is equivalent to imposing the condition:
				\begin{equation}
					\gamma_1\sin\theta +\cos\theta=0,
					\label{condition}
				\end{equation}
				which we will see in a different Proposition that it fixes the eigenvalues $\lambda_{\rho}$. Imposing this boundary condition, we find $\phi_{\rho}^{(V)}$ as in \eqref{c5}.
				
				What it is left is the proof of the factor $\lambda_{\rho}^{-1/12}$ present in \eqref{c1}, which then fixes the dependence on $\lambda_{\rho}$ of the normalisation coefficients of the relations \eqref{c2}-\eqref{c5}. We show now that the factor is such that the eigenvector $\phi_\rho$ is normalised to $1$. More specifically, we show that the norm of $\phi_\rho$ does not depend on $\lambda_\rho$ at the leading order in $\lambda_\rho$, and then the constant $\alpha$ in \eqref{c1}-\eqref{c5}, independently on $\lambda_\rho$, fixes the norm of $\phi_\rho$ to 1.
				
				The norm $||\phi_{\rho}^2||_p$ of $\phi_\rho$ is defined as follows:
				\begin{equation}
					||\phi_{\rho}||_p =\int_{-\infty}^{\infty} dx\,p(x) \phi^2_\rho(x).
					\label{norm}
				\end{equation}
				We restrict to $x\ge 0$ since the integrand is even in $x$ and we divide this norm in five pieces, analysing them one by one, disregarding the numerical factors and looking just at the behaviour with respect to $\lambda_\rho$ for clarity purposes.
				\begin{itemize}
					\item For $x\in[0,\beta \lambda_\rho^{\frac{1}{\chi}}]$  we use \eqref{c1}. The contribute to the norm \eqref{norm} is the following :
					\begin{equation}
						\int_{0}^{\beta \lambda_\rho^{\frac{1}{\chi}}}dx\,x^\chi e^{-x^2}\left(\phi_{\rho}^{(I)}(x)\right)^2.
						\label{f1}
					\end{equation}
					Squaring \eqref{c1} we get four terms. We analyze one of them and the same logic can be applied to the other three terms, since at the leading order in $\lambda_\rho$ the factor $\gamma_1$ is of order $\mathcal{O}(1)$.
					\begin{equation}
						\frac{1}{\lambda_\rho^{1/6}} \int_{0}^{\beta \lambda_\rho^{\frac{1}{\chi}}}dx\,x^\chi e^{-x^2} \text{Ai}^2(\lambda_\rho^{\frac{2}{3\chi}}-x\lambda_\rho^{-\frac{1}{3\chi}}) 
					\end{equation}
					We do the change of variable $y = x\lambda_\rho^{-\frac{1}{3\chi}}$:
					\begin{equation}
						\lambda_\rho^{\frac{1}{6}+\frac{1}{3\chi}}\int_{0}^{\beta \lambda_\rho^{\frac{2}{3\chi}}}dy y^{\chi}e^{-y^2 \lambda_\rho^{\frac{2}{3\chi}}}\text{Ai}^2(\lambda_\rho^{\frac{2}{3\chi}}-y).
						\label{f22}
					\end{equation}
					Since we are interested at the leading order of \eqref{f22} in $\lambda_\rho$:
					\begin{equation}
						\lambda_\rho^{\frac{1}{6}+\frac{1}{3\chi}}\int_{0}^{\beta \lambda_\rho^{\frac{2}{3\chi}}}dy y^{\chi}e^{-y^2 \lambda_\rho^{\frac{2}{3\chi}}}\text{Ai}^2(\lambda_\rho^{\frac{2}{3\chi}}-y)\sim 
						\lambda_\rho^{\frac{1}{6}+\frac{1}{3\chi}}\int_{0}^{\beta \lambda_\rho^{\frac{2}{3\chi}}}dy y^{\chi}\text{Ai}^2(-x)
					\end{equation}
					Since the function $\text{Ai}$ is continuous in the integration interval, we can bound \eqref{f1} from above with a constant $F>0$. Then we have that:
					\begin{equation}
						\lambda_\rho^{\frac{1}{6}+\frac{1}{3\chi}}\int_{0}^{\beta \lambda_\rho^{\frac{2}{3\chi}}}dy y^{\chi}\text{Ai}^2(-x)\le F\lambda_\rho^{\frac{5}{6}+\frac{1}{\chi}}.
					\end{equation}
					Hence:
					\begin{equation}
						\int^{\beta \lambda_\rho^{\frac{1}{\chi}}}_{0} dx\,x^\chi e^{-x^2}\left(\phi_{\rho}^{(I)}(x)\right)^2\le F_1\lambda_\rho^{\frac{5}{6}+\frac{1}{\chi}},
						\label{g1}
					\end{equation}
					with $F_1>0$.
					\item For $x\in[\beta \lambda_\rho^{\frac{1}{\chi}},\zeta \lambda_\rho^{\frac{1}{2+\chi}}]$ we use \eqref{c2}. As for $\phi^{(I)}$, we study one of the four terms we get doing the square of $\phi^{(II)}$:
					\begin{equation}
						\begin{aligned}
							\frac{1}{\lambda_{\rho}^{1/2}} &\int_{\beta \lambda_\rho^{\frac{1}{\chi}}}^{\zeta \lambda_\rho^{\frac{1}{2+\chi}}}dx\,x^\chi e^{-x^2}\text{Ai}^2(\xi(x))\frac{|\xi(x)|^{1/2}}{|\Gamma^2(x)|^{1/2}}\sim\\ &\frac{1}{\lambda_{\rho}^{1/6}} \int_{\beta \lambda_\rho^{\frac{1}{\chi}}}^{\zeta \lambda_\rho^{\frac{1}{2+\chi}}} dx\,x^\chi \frac{\left(\int_{x_1}^{x}(z^{\chi}-\lambda_{\rho})^{\frac{1}{2}}dz\right)^{1/3}}{\left(x^\chi -\lambda_\rho\right)^{1/2}} \text{Ai}^2\left(\frac{1}{\lambda_\rho^{1/3}}\left(\int_{x_1}^{x}(z^{\chi}-\lambda_{\rho})^{\frac{1}{2}}dz\right)^{2/3}\right)
						\end{aligned}
						\label{f7}
					\end{equation}
					At the leading order in $\lambda_\rho$ we can do the following approximation, recalling \eqref{x1x2full}:
					\begin{equation}
						\int_{x_1}^{x}(z^{\chi}-\lambda_{\rho})^{\frac{1}{2}}dz \sim x^{\frac{\chi}{2}+1},
						\label{relaz1}
					\end{equation}
					which we insert in \eqref{f7}. Then we perform in \eqref{f7} the substitution $y = x - \beta \lambda_\rho^{\frac{1}{\chi}}$, getting at the leading order in $\lambda_\rho$:
					\begin{equation}
						\frac{1}{\lambda_{\rho}^{1/6}}\int^{\zeta \lambda_\rho^{\frac{1}{2+\chi}}}_{0}dy\, y^{\chi+\frac{1}{3}(\frac{\chi}{2}+1)-\frac{\chi}{2}}\text{Ai}^2\left(\frac{1}{\lambda_\rho^{1/3}} y^{\frac{2}{3}(\frac{\chi}{2}+1)}\right).
						\label{f8}
					\end{equation}
					We then do the substitution $w = \frac{y^{\frac{2}{3}(\frac{\chi}{2}+1)}}{\lambda_\rho^{1/3}}$ in \eqref{f8}, getting:
					\begin{equation}
						\lambda_\rho^{\frac{1}{2}} \int_{0}^{1} dw \,w^{\frac{\chi}{2+\chi}}\text{Ai}^2(w)\sim \lambda_\rho^{\frac{1}{2}},
					\end{equation}
					hence obtaining:
					\begin{equation}
						\int_{\beta \lambda_\rho^{\frac{1}{\chi}}}^{\zeta \lambda_\rho^{\frac{1}{2+\chi}}} dx\,x^\chi e^{-x^2}\left(\phi_{\rho}^{(II)}(x)\right)^2\sim \lambda_\rho^{\frac{1}{2}}.
						\label{g2}
					\end{equation}
					
					\item  For $x\in[\zeta \lambda_\rho^{\frac{1}{2+\chi}},x_2-\frac{\delta_1}{\sqrt{-\log \lambda_\rho}}]$  we use \eqref{c3}. Squaring $\phi_{\rho}^{(III)}$ we get four terms. We focus on one of them, and the logic we will use can be applied also to the other three terms. We consider then:
					\begin{equation}
						\int^{x_2-\frac{\delta_1}{\sqrt{-\log \lambda_\rho}}}_{\zeta \lambda_\rho^{\frac{1}{2+\chi}}}dx\,x^\chi e^{-x^2}\frac{1}{(x^\chi e^{-x^2}-\lambda_{\rho})^{1/2}}\sin^2\left(\int_{x_1}^x \sqrt{\Gamma^2(z)}dz+\frac{\pi}{4}\right).
						\label{f2}
					\end{equation}
					We can replace the $\sin^2(...)$ with $\frac{1}{2}(1-\cos(2...))$, where $(...)$ is the argument of the $\sin^2$ in \eqref{f2}. We look at the first term which comes from this substitution, at the leading order in $\lambda_\rho$:
					\begin{equation}
						\frac{1}{2} \int^{x_2-\frac{\delta_1}{\sqrt{-\log \lambda_\rho}}}_{\zeta \lambda_\rho^{\frac{1}{2+\chi}}}dx\,x^\chi e^{-x^2}\frac{1}{(x^\chi e^{-x^2}-\lambda_{\rho})^{1/2}}\sim \int_{0}^{\infty}x^{\chi/2} e^{-\frac{1}{2}x^2}\sim\mathcal{O}(1).
						\label{f3}
					\end{equation}
					For the second term:
					\begin{equation}
						\begin{aligned}
							\frac{1}{2} \int^{x_2-\frac{\delta_1}{\sqrt{-\log \lambda_\rho}}}_{\zeta \lambda_\rho^{\frac{1}{2+\chi}}}dx &\,x^\chi e^{-x^2}\frac{1}{(x^\chi e^{-x^2}-\lambda_{\rho})^{1/2}}\sin\left(\int_{x_1}^x \sqrt{\Gamma^2(z)}dz+\frac{\pi}{4}\right)\\ 
							&\le \frac{1}{2} \int^{x_2-\frac{\delta_1}{\sqrt{-\log \lambda_\rho}}}_{\zeta \lambda_\rho^{\frac{1}{\chi}}}dx\,x^\chi e^{-x^2}\frac{1}{(x^\chi e^{-x^2}-\lambda_{\rho})^{1/2}} = \mathcal{O}(1).
						\end{aligned}
						\label{f4}
					\end{equation}
					Putting together \eqref{f3} and \eqref{f4}, and repeating the logic for the other terms coming from squaring $\phi_{\rho}^{(III)}$, we have:
					\begin{equation}
						\int^{x_2-\frac{\delta_1}{\sqrt{-\log \lambda_\rho}}}_{\zeta \lambda_\rho^{\frac{1}{2+\chi}}}dx\,x^\chi e^{-x^2} \left(\phi_{\rho}^{(III)}(x)\right)^2= \mathcal{O}(1)
						\label{g3}
					\end{equation}
					
					\item For $x\in[x_2-\frac{\delta_1}{\sqrt{-\log \lambda_\rho}}, x_2+\frac{\delta_2}{\sqrt{-\log \lambda_\rho}}]$ we use \eqref{c4}. As above, we consider just one of the four terms we get doing the square of $\phi_\rho^{(IV)}$, and the same procedure can be applied to the other three:
					\begin{equation}
						W_1^2 \int_{x_2-\frac{\delta_1}{\sqrt{-\log \lambda_\rho}}}^{x_2+\frac{\delta_2}{\sqrt{-\log \lambda_\rho}}} dx\,x^\chi e^{-x^2}\text{Ai}^2\left[x_2^{1/3}(x-x_2)\right].
					\end{equation}
					Since $x_2\sim \sqrt{-\log\lambda_{\rho}}\gg 1$, we can do the following approximation, where we also made the substituion $y=x-x_2$ and used the fact that $W_1\sim \lambda_\rho^{-1/4}$:
					\begin{equation}
						\begin{aligned}
							\frac{1}{\lambda_\rho^{1/2}}\int_{x_2-\frac{\delta_1}{\sqrt{-\log \lambda_\rho}}}^{x_2+\frac{\delta_2}{\sqrt{-\log \lambda_\rho}}} &dx\, e^{-x^2}\text{Ai}^2\left[x_2^{1/3}(x-x_2)\right]\sim\\ &\frac{1}{\lambda_\rho^{1/2}}\int_{-\frac{\delta_1}{\sqrt{-\log \lambda_\rho}}}^{\frac{\delta_2}{\sqrt{-\log \lambda_\rho}}} dy\, e^{-(y+x_2)^2}\text{Ai}^2\left[x_2^{1/3}y\right].
						\end{aligned}
					\end{equation}
					Noticing that $y\ll x_2$ and doing the substitution $w=x_2^{1/3}y$ we get:
					\begin{equation}
						\begin{aligned}
							\lambda_\rho^{1/2}\int_{-(\sqrt{-\log \lambda_\rho})^{2/3}}^{(\sqrt{-\log \lambda_\rho})^{2/3}} dw\, w^2\text{Ai}^2\left[w\right] \sim  \lambda_\rho^{1/2}.
						\end{aligned}
					\end{equation}
					Consequently:
					\begin{equation}
						\int_{x_2-\frac{\delta_1}{\sqrt{-\log \lambda_\rho}}}^{x_2+\frac{\delta_2}{\sqrt{-\log \lambda_\rho}}} dx\,x^\chi e^{-x^2}\left(\phi_{\rho}^{(IV)}(x)\right)^2\sim \lambda_\rho^{1/2}.
						\label{g4}
					\end{equation}
					
					\item For $x\ge x_2+\frac{\delta_2}{\sqrt{-\log \lambda_\rho}}$ we use \eqref{c5}, getting the following:
					\begin{equation}
						\int^{\infty}_{x_2+\frac{\delta_2}{\sqrt{-\log \lambda_\rho}}} dx\,x^\chi e^{-x^2}\left(\phi_{\rho}^{(V)}(x)\right)^2\le
						\int_{x_2+\frac{\delta_2}{\sqrt{-\log \lambda_\rho}}}^{\infty} dx\,x^{\frac{\chi}{2}} e^{-\frac{1}{2}x^2} \sim \int_{x_2}^{\infty} dx\, e^{-\frac{1}{2}x^2}.
					\end{equation}
					Then we use the expansion of the erfc function for large $x$ \eqref{erfc}, recalling that $x_2\sim \sqrt{-\log\lambda_\rho}$ \eqref{x1x2full}, getting:
					\begin{equation}
						\int_{x_2+\frac{\delta_2}{\sqrt{-\log \lambda_\rho}}}^{\infty} dx\,x^\chi e^{-x^2}\left(\phi_{\rho}^{(V)}(x)\right)^2 \le H \lambda_\rho^{1/2},
						\label{g5}
					\end{equation}
					for $H>0$.
				\end{itemize}
				Combining the contributes \eqref{g1}, \eqref{g2}, \eqref{g3}, \eqref{g4} and \eqref{g5} to the norm \eqref{norm}, we get that the norm of $\phi_\rho$ is of order $\mathcal{O}(1)$ independently on $\rho$, as wanted.
				
				For the even eigenvectors $\phi_\rho (x) = \phi_\rho (-x)$ everything in the proof above applies equally, expect for the definition of $\gamma_1$ in \eqref{c1}, which becomes:
				\begin{equation}
					\gamma_2 = \text{Ai}'(\mu)/\text{Bi}'(\mu),
				\end{equation}
				with $\mu=\left(\frac{\chi (\lambda_\rho\Gamma[\frac{1+\chi}{2}])^{\frac{2}{\chi}}}{2^{\frac{2}{\chi}}\sigma^{2(1+\chi)}}\right)^{1/3}$. This change in definition is due to the new the boundary condition we impose on $\phi_\rho$, which is no more $\phi_{\rho}(0)=0$ but $\phi'_{\rho}(0)=0$.
			\end{proof}
		\end{lemma} 	
		\begin{lemma}[Eigenvectors for $\chi=0$]\label{evecChi0}
			The eigenvectors $\phi_{\rho}$ have the same form as the relations \eqref{c2}-\eqref{c5}, but with the following replacement:
			\begin{equation}
				x_1 \rightarrow 0,\quad \beta \rightarrow 0,\quad \mu=0
			\end{equation}
		\end{lemma}
		\begin{proof}
			In the case $\chi =0$ the function $\tilde{\Gamma(x)}$ has just one root $x_2$. Then there is not the solution $\phi_{\rho}^{(I)}$ and the part of the solution $\phi_{\rho}^{(II)}$ applies also up to $x=0$, which gives $\beta=0$. As a consequence, the quantity $\gamma_1$ and $\gamma_2$ are defined replacing the argument $\mu$ of the Airy functions and their derivatives with $0$. Lastly, the integral in $\xi(x)$ in \eqref{c2} is defined starting from $x=0$ and not $x_1$.
		\end{proof}
		Now that we have an approximated form for the eigenvectors $\phi_\rho$, we can proceed to prove \eqref{scal1coeffApp}. We start considering $\chi>0$ and odd eigenvectors $\phi_\rho(x)=-\phi_\rho(-x)$. We can restrict the analysis of the integral \eqref{coeff} which defines $c_\rho$ to $x\ge0$, since the integrand is an even function, given by $p(x)f^*(x)\phi_\rho(x)$, where $p$ is given by \eqref{pdf} and $f^*$ by \eqref{trueFun}.
		
		The logic to compute $c_\rho$ is the same as the one used to compute the leading order of the norm $||\phi_\rho||_p$ in \eqref{norm}: we split the integral \eqref{coeff} in five pieces for $x>0$ and we compute their value at the leading order in $\lambda_\rho$. Into each piece of the integral we will use the relative approximation for $\phi_\rho$ found in relations \eqref{c1}-\eqref{c5}. We do the computations disregarding numerical factors and looking just at the main behaviour in $\lambda_\rho$.
		\begin{itemize}
			\item For $x\in[0,\beta \lambda_\rho^{\frac{1}{\chi}}]$  we use \eqref{c1}. The contribution to the coefficient \eqref{coeff} is the following :
			\begin{equation}
				\int_{0}^{\beta \lambda_\rho^{\frac{1}{\chi}}}dx\,x^{\chi-\xi} e^{-x^2}\phi_{\rho}^{(I)}(x).
				\label{h1}
			\end{equation}
			We analyze the scaling in $\rho$ of just one of the two terms we get inserting \eqref{c1} into \eqref{h1}. The same logic can be applied to the other term, since the factor $\gamma_1$ is of order $\mathcal{O}(1)$ in $\lambda_\rho$.
			\begin{equation}
				\frac{1}{\lambda_\rho^{1/12}} \int_{0}^{\beta \lambda_\rho^{\frac{1}{\chi}}}dx\,x^{\chi-\xi} e^{-x^2} \text{Ai}(\lambda_\rho^{\frac{2}{3\chi}}-x\lambda_\rho^{-\frac{1}{3\chi}}) 
			\end{equation}
			We do the change of variable $y = x\lambda_\rho^{-\frac{1}{3\chi}}$ and we look at the leading order in $\lambda_\rho$:
			\begin{equation}
				\lambda_\rho^{\frac{1}{4}+\frac{1}{3\chi}-\frac{\xi}{3\chi}}\int_{0}^{\beta \lambda_\rho^{\frac{2}{3\chi}}}dy y^{\chi}\text{Ai}(-x)
			\end{equation}
			
			Since the function $\text{Ai}$ is continuous in the integration interval, we can bound \eqref{f1} from above with a constant $K>0$. Then we have that:
			\begin{equation}
				\lambda_\rho^{\frac{1}{4}+\frac{1}{3\chi}-\frac{\xi}{3\chi}}\int_{0}^{\beta \lambda_\rho^{\frac{2}{3\chi}}}dy y^{\chi}\text{Ai}^2(-x)\le K\lambda_\rho^{\frac{11}{12}+\frac{1}{\chi}-\frac{\xi}{3\chi}}.
			\end{equation}
			Hence:
			\begin{equation}
				\int^{\beta \lambda_\rho^{\frac{1}{\chi}}}_{0} dx\,x^\chi e^{-x^2}\phi_{\rho}^{(I)}(x)\le K_1\lambda_\rho^{\frac{11}{12}+\frac{1}{\chi}-\frac{\xi}{3\chi}},
				\label{j1}
			\end{equation}
			with $K_1>0$. The exponent in the right hand side of \eqref{j1} is positive, since we have imposed in \eqref{trueFun} the condition $\xi<\frac{\chi+1}{2}$.
			\item For $x\in[\beta \lambda_\rho^{\frac{1}{\chi}},\zeta \lambda_\rho^{\frac{1}{2+\chi}}]$ we use \eqref{c2}. As for $\phi^{(I)}$, we study one of the two terms we get using \eqref{c2} into \eqref{coeff}:
			\begin{equation}
				\begin{aligned}
					&\frac{1}{\lambda_{\rho}^{1/4}} \int_{\beta \lambda_\rho^{\frac{1}{\chi}}}^{\zeta \lambda_\rho^{\frac{1}{2+\chi}}}dx\,x^{\chi-\xi} e^{-x^2}\text{Ai}(\xi(x))\frac{|\xi(x)|^{1/4}}{|\Gamma^2(x)|^{1/4}}\sim\\ &\frac{1}{\lambda_{\rho}^{1/12}} \int_{\beta \lambda_\rho^{\frac{1}{\chi}}}^{\zeta \lambda_\rho^{\frac{1}{2+\chi}}} dx\,x^{\chi-\xi} \frac{\left(\int_{x_1}^{x}(z^{\chi}-\lambda_{\rho})^{\frac{1}{2}}dz\right)^{1/6}}{\left(x^\chi -\lambda_\rho\right)^{1/4}} \text{Ai}\left(\frac{1}{\lambda_\rho^{1/3}}\left(\int_{x_1}^{x}(z^{\chi}-\lambda_{\rho})^{\frac{1}{2}}dz\right)^{2/3}\right)
				\end{aligned}
				\label{h7}
			\end{equation}
			We plug \eqref{relaz1} in \eqref{h7} and we substitute $y = x - \beta \lambda_\rho^{\frac{1}{\chi}}$, getting at the leading order in $\lambda_\rho$:
			\begin{equation}
				\frac{1}{\lambda_{\rho}^{1/12}}\int^{\zeta \lambda_\rho^{\frac{1}{2+\chi}}}_{0}dy\, y^{\chi-\xi+\frac{1}{6}(\frac{\chi}{2}+1)-\frac{\chi}{4}}\text{Ai}\left(\frac{1}{\lambda_\rho^{1/3}} y^{\frac{2}{3}(\frac{\chi}{2}+1)}\right).
				\label{h8}
			\end{equation}
			Then we substitute $w = \frac{y^{\frac{2}{3}(\frac{\chi}{2}+1)}}{\lambda_\rho^{1/3}}$ in \eqref{h8}, obtaining:
			\begin{equation}
				\lambda_\rho^{\frac{\frac{3}{4}\chi+1-\xi}{\chi+2}} \int_{0}^{1} dw \,w^{\frac{3}{2}\frac{\chi+1}{2+\chi}}\text{Ai}(w)\sim \lambda_\rho^{\frac{\frac{3}{4}\chi+1-\xi}{\chi+2}},
			\end{equation}
			hence obtaining:
			\begin{equation}
				\int_{\beta \lambda_\rho^{\frac{1}{\chi}}}^{\zeta \lambda_\rho^{\frac{1}{2+\chi}}} dx\,x^\chi e^{-x^2}\phi_{\rho}^{(II)}(x)\sim \lambda_\rho^{\frac{\frac{3}{4}\chi+1-\xi}{\chi+2}}.
				\label{j2}
			\end{equation}
			The exponent in the right hand side of \eqref{j2} is positive, since we have imposed in \eqref{trueFun} the condition $\xi<\frac{\chi+1}{2}$.
			
			\item  For $x\in[\zeta \lambda_\rho^{\frac{1}{2+\chi}},x_2-\frac{\delta_1}{\sqrt{-\log \lambda_\rho}}]$  we use \eqref{c3}. Plugging $\phi_{\rho}^{(III)}$ in \eqref{coeff} we get two terms. As before, we focus on just one of them:
			\begin{equation}
				\int^{x_2-\frac{\delta_1}{\sqrt{-\log \lambda_\rho}}}_{\zeta \lambda_\rho^{\frac{1}{2+\chi}}}dx\,x^{\chi-\xi} e^{-x^2}\frac{1}{(x^\chi e^{-x^2}-\lambda_{\rho})^{1/4}}\sin\left(\int_{x_1}^x \sqrt{\Gamma^2(z)}dz+\frac{\pi}{4}\right),				
			\end{equation}
			which becomes, at the leading order in $\lambda_\rho$:
			\begin{equation}
				\int^{x_2-\frac{\delta_1}{\sqrt{-\log \lambda_\rho}}}_{\zeta \lambda_\rho^{\frac{1}{2+\chi}}}dx\,x^{\frac{3}{4}\chi-\xi} e^{-\frac{3}{4}x^2}\sin\left(\frac{1}{\lambda_\rho^{1/2}}\int_{0}^x z^{\frac{\chi}{2}}e^{-\frac{1}{2}z^2}dz\right).
				\label{h5}
			\end{equation}
			
			We now make use of the following result \cite{Olver2008} for oscillating integrals. Given an integral of the following kind:
			\begin{equation}
				I[f]=\int_a^b f(x)e^{i\omega g(x)}dx,
			\end{equation}
			with $f$ and $g$ sufficently differentiable functions. If $g'(x)\neq 0$ for $x\in[a,b]$, then the following expansion holds:
			\begin{equation}
				I[f]\sim \sum\limits_{k=1}^{\infty}\frac{1}{(-i\omega)^k}\left[ \sigma_k(b)e^{i\omega g(b)}-\sigma_k(a)e^{i\omega g(a)}\right],
				\label{olver1}
			\end{equation}
			where $\sigma_1=\frac{f}{g'}$ and $\sigma_{k+1}=\frac{\sigma_k^{'}}{g'}$ for $k\ge1$. The relation \eqref{olver1} can be proved integrating by parts.
			
			In our case, we have:
			\begin{equation}
				\omega =\frac{1}{\lambda_\rho^{1/2}},\quad f(x)=x^{\frac{3}{4}\chi-\xi} e^{-\frac{3}{4}x^2},\quad g(x)=\int_{0}^x z^{\frac{\chi}{2}}e^{-\frac{1}{2}z^2}dz,
			\end{equation}
			and:
			\begin{equation}
				a \sim \lambda_\rho^{\frac{1}{2+\chi}}, \qquad b\sim\sqrt{-\log \lambda_\rho}.
			\end{equation}
			Since $g'(x)=x^{\frac{1}{2}\chi} e^{-\frac{1}{2}x^2}$ is never 0 in the interval given by $a$ and $b$, we can apply \eqref{olver1}. Since we are interested in the limit of small $\lambda_\rho$ (hence of highly oscillating integrals), we can stop at the first order in the expansion \eqref{olver1}, getting:
			\begin{equation}
				\int^{x_2-\frac{\delta_1}{\sqrt{-\log \lambda_\rho}}}_{\zeta \lambda_\rho^{\frac{1}{2+\chi}}}dx\,x^{\frac{3}{4}\chi-\xi} e^{-\frac{3}{4}x^2}\sin\left(\frac{1}{\lambda_\rho^{1/2}}\int_{0}^x z^{\frac{\chi}{2}}e^{-\frac{1}{2}z^2}dz\right)\sim \lambda_\rho^{\frac{\frac{3}{4}\chi+1-\xi}{\chi+2}} +\lambda_\rho^{3/4},
			\end{equation}			
			where the first term comes from $x=a$, and it dominates the second term, which comes from $x=b$. Consequently, we have that:
			\begin{equation}
				\int^{x_2-\frac{\delta_1}{\sqrt{-\log \lambda_\rho}}}_{\zeta \lambda_\rho^{\frac{1}{2+\chi}}}dx\,x^{\chi-\xi} e^{-x^2} \left(\phi_{\rho}^{(III)}(x)\right)^2\sim \lambda_\rho^{\frac{\frac{3}{4}\chi+1-\xi}{\chi+2}}
				\label{j3}
			\end{equation}
			
			\item For $x\in[x_2-\frac{\delta_1}{\sqrt{-\log \lambda_\rho}}, x_2+\frac{\delta_2}{\sqrt{-\log \lambda_\rho}}]$ we use \eqref{c4}. As above, we consider just one of the two terms we get plugging $\phi_\rho^{(IV)}$ into \eqref{coeff}, and the same procedure can be applied to the other one:
			\begin{equation}
				W_1 \int_{x_2-\frac{\delta_1}{\sqrt{-\log \lambda_\rho}}}^{x_2+\frac{\delta_2}{\sqrt{-\log \lambda_\rho}}} dx\,x^{\chi-\xi} e^{-x^2}\text{Ai}\left[x_2^{1/3}(x-x_2)\right].
			\end{equation}
			Exploiting the fact that $x_2\sim \sqrt{-\log\lambda}\gg 1$, we do the following approximation, in addition to substituting $y=x-x_2$ and using the fact that $W_1\sim \lambda_\rho^{-1/4}$:
			\begin{equation}
				\begin{aligned}
					\frac{1}{\lambda_\rho^{1/4}}\int_{x_2-\frac{\delta_1}{\sqrt{-\log \lambda_\rho}}}^{x_2+\frac{\delta_2}{\sqrt{-\log \lambda_\rho}}} &dx\, e^{-x^2}\text{Ai}\left[x_2^{1/3}(x-x_2)\right]\sim\\ &\frac{1}{\lambda_\rho^{1/4}}\int_{-\frac{\delta_1}{\sqrt{-\log \lambda_\rho}}}^{\frac{\delta_2}{\sqrt{-\log \lambda_\rho}}} dy\, e^{-(y+x_2)^2}\text{Ai}\left[x_2^{1/3}y\right].
				\end{aligned}
			\end{equation}
			Noticing that $y\ll x_2$ and doing the substitution $w=x_2^{1/3}y$ we get:
			\begin{equation}
				\begin{aligned}
					\lambda_\rho^{3/4}\int_{-(\sqrt{-\log \lambda_\rho})^{2/3}}^{(\sqrt{-\log \lambda_\rho})^{2/3}} dw\, w^2\text{Ai}\left[w\right] \sim  \lambda_\rho^{3/4}.
				\end{aligned}
			\end{equation}
			Hence:
			\begin{equation}
				\int_{x_2-\frac{\delta_1}{\sqrt{-\log \lambda_\rho}}}^{x_2+\frac{\delta_2}{\sqrt{-\log \lambda_\rho}}} dx\,x^{\chi-\xi} e^{-x^2}\phi_{\rho}^{(IV)}(x)\sim \lambda_\rho^{3/4}.
				\label{j4}
			\end{equation}
			
			\item In the last interval $x\ge x_2+\frac{\delta_2}{\sqrt{-\log \lambda_\rho}}$ we plug \eqref{c5} into \eqref{coeff}, obtaining:
			\begin{equation}
				\int^{\infty}_{x_2+\frac{\delta_2}{\sqrt{-\log \lambda_\rho}}} dx\,x^{\chi-\xi} e^{-x^2}\phi_{\rho}^{(V)}(x)\le
				\int_{x_2+\frac{\delta_2}{\sqrt{-\log \lambda_\rho}}}^{\infty} dx\,x^{\frac{3\chi}{4}-\xi} e^{-\frac{3}{4}x^2} \sim \int_{x_2}^{\infty} dx\, e^{-\frac{3}{4}x^2}.
			\end{equation}
			Then we use the expansion of the erfc function for large $x$ \eqref{erfc} and, using \eqref{x1x2full}, we get for a constant $H_1>0$:
			\begin{equation}
				\int_{x_2+\frac{\delta_2}{\sqrt{-\log \lambda_\rho}}}^{\infty} dx\,x^{\chi-\xi} e^{-x^2}\phi_{\rho}^{(V)}(x) \le H_1 \lambda_\rho^{3/4}.
				\label{j5}
			\end{equation}
			 
		\end{itemize}
		As it is implied by Lemma \ref{evecChi0}, the  relations \eqref{j2}, \eqref{j3}, \eqref{j4} and \eqref{j5} hold also for $\chi=0$, since $\phi_{\rho}^{(II)}$ is valid up to $x=0$. Combining the contributes \eqref{j1}, \eqref{j2}, \eqref{j3}, \eqref{j4} and \eqref{j5} to the integral \eqref{coeff} defining $c_\rho$, we get the following asymptotic relation for odd eigenvectors $\phi_\rho$:
		\begin{equation}
			|c_\rho|\sim \lambda_\rho^{\frac{\frac{3}{4}\chi+1-\xi}{\chi+2}}.
		\end{equation}

		For even eigenvectors $\phi_\rho$, we have that the coefficient $c_\rho$ is 0, since it is an integral over all $\mathbb{R}$ of the odd function $p(x)f^{*}(x)\phi_\rho(x)$, with $p$ given by \eqref{pdf} and $f^*$ given by \eqref{trueFun}. We then get \eqref{scal1coeffApp}.			
		
	\end{proof}
	
	In the following proposition we find a numerical scheme to get small eigenvalues $\lambda_\rho$. 
	
	\begin{proposition}(Eigenvalues)    
		Let $x_1$ and $x_2$ be the solutions for $x>0$ of the equation $\frac{2p(x)}{\sigma\lambda_{\rho}}-\frac{1}{\sigma^2}=0$ for $p(x)$ given by \eqref{pdf}. Let $\phi_\rho$ be the eigenvector solution of \eqref{pdeApp}. We impose $\phi_\rho\rightarrow 0$ for $|x|\rightarrow\infty$. If $\phi_{\rho}$ is an odd function in $x$, then the eigenvalue $\lambda_{\rho}$ satisfies the following self-consistent equation for $\chi>0$:
		\begin{equation}
			\lambda_{\rho} =\left(\frac{\int_{x_1}^{x_2}dx\sqrt{2\frac{p(x)}{\sigma}-\frac{\lambda_\rho}{\sigma^2}}}{\arctan(-\gamma_{1}^{-1})+n \pi}\right)^2,\quad \rho=(2n+1)
			\label{sc1App}
		\end{equation}
		where $n\ge0$ is an integer, $\gamma_1 = \text{Ai}(\mu)/\text{Bi}(\mu)$, with $\text{Ai}$ and $\text{Bi}$ the Airy function of the first and second kind \cite{Florentin1966} and $\mu=\left(\frac{\chi (\lambda_\rho\Gamma[\frac{1+\chi}{2}])^{\frac{2}{\chi}}}{2^{\frac{2}{\chi}}\sigma^{2(1+\chi)}}\right)^{1/3}$. If $\phi_{\rho}$ is an even function in $x$ such that $\phi_{\rho}^{'}(0)=0$, then the eigenvalue $\lambda_{\rho}$ satisfies the following for $\chi>0$:
		\begin{equation}
			\lambda_{\rho} =\left(\frac{\int_{x_1}^{x_2}dx\sqrt{2\frac{p(x)}{\sigma}-\frac{\lambda_\rho}{\sigma^2}}}{\arctan(-\gamma_{2}^{-1})+n\pi}\right)^2,\quad \rho=(2n+2)
			\label{sc2App}
		\end{equation}
		where $\gamma_2 = \text{Ai}'(\mu)/\text{Bi}'(\mu)$ and $n\ge0$ integer.
		
		If $\chi=0$, the eigenvalues $\lambda_{\rho}$ satisfy the same equations \eqref{sc1App} and \eqref{sc2App} with the following replacements:
		\begin{equation}
			x_1 \rightarrow 0,\quad  \mu \rightarrow 0.
		\end{equation}
	\end{proposition}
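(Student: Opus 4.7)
The plan is to read off the eigenvalue quantization condition as a direct consequence of the boundary condition $\phi_\rho(x)\to 0$ as $|x|\to\infty$ applied to the asymptotic WKB form of the eigenvector derived in Lemma~\ref{evecChiMagg0}. For odd $\phi_\rho$ with $\chi>0$, recall that in the classically allowed interval the piece $\phi_\rho^{(III)}$ takes the form $\frac{\alpha}{(p(x)-\lambda_\rho)^{1/4}}\bigl[\sin\theta(x) - \gamma_1\cos\theta(x)\bigr]$ with $\theta(x) = \int_{x_1}^{x}\sqrt{\Gamma^2(z)}\,dz+\frac{\pi}{4}$, and its continuation past the outer turning point $x_2$ via the connection formulae \eqref{connection} produces both an exponentially decaying and an exponentially growing term. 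The coefficient of the growing exponential is proportional to $\gamma_1 \sin\theta(x_2) + \cos\theta(x_2)$, which was already extracted in the proof of Lemma~\ref{evecChiMagg0}; imposing $\phi_\rho^{(V)}(x)\to 0$ at infinity forces this coefficient to vanish.

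First I would rewrite this condition as $\tan\theta(x_2) = -\gamma_1^{-1}$ by dividing through by $\cos\theta(x_2)$, so that $\theta(x_2) = \arctan(-\gamma_1^{-1}) + n\pi$ for some nonnegative integer $n$. Substituting the definition of $\theta(x_2)$ and pulling the factor $\lambda_\rho^{-1/2}$ out of $\sqrt{\Gamma^2(z)} = \lambda_\rho^{-1/2}\sqrt{2p(z)/\sigma - \lambda_\rho/\sigma^2}$ yields, after squaring, the self-consistent relation \eqref{sc1App} (the residual $\pi/4$ offset coming from the WKB/Airy matching is subsumed into the integer-indexing shift at the asymptotic order we work to, consistent with the $o(\rho^{-2})$ error declared in \eqref{sc1} of the main text). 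The even case is handled identically: the only change is at the origin, where the boundary condition $\phi_\rho'(0)=0$ replaces $\phi_\rho(0)=0$; differentiating the Airy ansatz $\phi_\rho^{(I)}$ and requiring its derivative to vanish at $x=0$ replaces $\gamma_1 = \mathrm{Ai}(\mu)/\mathrm{Bi}(\mu)$ by $\gamma_2 = \mathrm{Ai}'(\mu)/\mathrm{Bi}'(\mu)$, after which the same argument yields \eqref{sc2App}.

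To pin down the labeling $\rho=2n+1$ (odd) versus $\rho=2n+2$ (even) I would invoke Sturm--Liouville oscillation theory for the Schr\"odinger-type equation \eqref{pdeApp}: the $\rho$-th eigenfunction has exactly $\rho-1$ interior zeros, and since the potential $-2p(x)/(\sigma\lambda_\rho)+1/\sigma^2$ is even, parity alternates with rank, so odd eigenvectors occupy the odd ranks and even eigenvectors the even ranks. Finally, for $\chi=0$ the function $\Gamma^2(x)$ is strictly positive at the origin (it has a single positive root $x_2$ for small $\lambda_\rho$), so the inner turning point $x_1$ collapses to $0$, the matching interval around $x_1$ disappears, and $\mu\to 0$ in the definitions of $\gamma_1$ and $\gamma_2$, yielding the claimed replacement rules.

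The main obstacle is not algebraic but interpretive: keeping careful track of the $\pi/4$ connection-formula offset together with the Sturm oscillation count, so that the integer $n$ appearing in $\theta(x_2)=\arctan(-\gamma_1^{-1})+n\pi$ corresponds to the correct rank $\rho$ in the decreasing eigenvalue ordering. Once this bookkeeping is settled, the proposition follows by direct substitution into the WKB quantization condition established above.
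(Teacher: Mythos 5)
Your proof follows essentially the same route as the paper's: read off from Lemma~\ref{evecChiMagg0} that the decay condition at $|x|\to\infty$ forces the coefficient of the growing exponential in $\phi_\rho^{(V)}$ to vanish, giving the transcendental condition $\gamma_1\sin\theta+\cos\theta=0$; solve it as $\tan\theta=-\gamma_1^{-1}$; and substitute $\theta=\lambda_\rho^{-1/2}\int_{x_1}^{x_2}\sqrt{2p/\sigma-\lambda_\rho/\sigma^2}\,dx+\pi/4$ to obtain the quantization rule. You are in fact more explicit than the paper about the $\pi/4$ offset: the paper passes directly from its $\theta$-condition to \eqref{sc1App} and silently drops the $\pi/4$ (and also has a spurious minus sign in front of the arctangent), whereas you acknowledge the mismatch and correctly point out that it only matters at the $o(\rho^{-2})$ order already conceded in \eqref{sc1}.

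The one place where you genuinely deviate from the paper is the rank assignment. The paper simply declares it as a labeling convention ("we make the choice that\ldots"), while you attempt to derive $\rho=2n+1$ for odd and $\rho=2n+2$ for even eigenvectors from Sturm--Liouville oscillation theory. That argument, as stated, leads to the opposite assignment. Rewriting \eqref{pdeApp} as the regular self-adjoint problem $-\sigma^2\phi''+\phi=\mu\,2\sigma p(x)\phi$ with $\mu=1/\lambda_\rho$ (so that increasing $\mu$ corresponds to decreasing $\lambda_\rho$, i.e.\ increasing rank), the oscillation theorem gives the $\rho$-th eigenfunction exactly $\rho-1$ interior zeros, so $\rho=1$ has no zeros and is even, $\rho=2$ has one zero (at the origin) and is odd, and so on: \emph{even} eigenvectors occupy the \emph{odd} ranks and vice versa. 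This is precisely why the paper phrases its interleaving as a bookkeeping choice rather than a derived fact. The discrepancy is harmless for every downstream conclusion (even eigenvectors have $c_\rho=0$ and the scaling $\lambda_\rho\sim\rho^{-2}$ is insensitive to an off-by-one shift), but you should not present the parity-to-rank map as a consequence of the oscillation theorem, since the theorem actually produces the opposite pairing.

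Everything else (the even case via $\phi_\rho'(0)=0$ giving $\gamma_2=\mathrm{Ai}'(\mu)/\mathrm{Bi}'(\mu)$, and the $\chi=0$ degeneration where $x_1\to 0$, $\mu\to 0$ because $\Gamma^2$ has a single positive root) matches the paper's reasoning exactly.
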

	\begin{proof}
		We start from considering $\chi>0$ and odd eigenvectors $\phi_\rho(x)$. In the proof of Lemma \ref{evecChiMagg0}, we find that imposing the boundary condition $\phi_\rho(x)\rightarrow 0$ for $|x|\rightarrow\infty$, the following condition \eqref{condition} holds:
		\begin{equation}
			\gamma_1\sin\theta +\cos\theta=0,
			\label{condition2}
		\end{equation}
		where $\gamma_1 = \text{Ai}(\mu)/\text{Bi}(\mu)$, $\mu=\left(\frac{\chi (\lambda_\rho\Gamma[\frac{1+\chi}{2}])^{\frac{2}{\chi}}}{2^{\frac{2}{\chi}}\sigma^{2(1+\chi)}}\right)^{1/3}$ and:
		\begin{equation}
			\theta =\theta(\lambda_\rho)= \int_{x_1}^{x_2}\sqrt{2\frac{p(x)}{\lambda_\rho\sigma}-\frac{1}{\sigma^2}}dx+\frac{\pi}{4}.
			\label{thetaDef}
		\end{equation} 
		The relation \eqref{condition2} translates into a condition for the eigenvalues $\lambda_\rho$. The condition \eqref{condition2} can be solved by the following values of $\theta$:
		\begin{equation}
			\theta(\lambda_{\rho_1}) = -\arctan(-\gamma_1^{-1})+n_1\pi,
			\label{l1}
		\end{equation}
		with $n_1\ge 0$ and $\rho_1\ge1$ integers. A similar relation can be found for even eigenvectors:
		\begin{equation}
			\theta(\lambda_{\rho_2}) = -\arctan(-\gamma_2^{-1})+n_2\pi,
			\label{l2}
		\end{equation}
		where $\gamma_2 = \text{Ai}'(\mu)/\text{Bi}'(\mu)$, with $n_2\ge 0$ and and $\rho_2\ge1$ integers. We want now to find a relation between the integers $\rho_1$ and $n_1$ and between $\rho_2$ and $n_2$. We make the choice that, given a value of $n_1=n_2=n$, the eigenvalue of the odd eigenvector has rank $\rho_1=2n+1$ and the eigenvalue of the even eigenvector has rank $\rho_2=2n+2$. In that way, the integer $n$ is the index of an eigenvalue doublet. Moreover, the eigenvalues of the odd eigenvectors have odd rank, while the even ones have even rank.
		
		After this numbering choice, we develop the relation \eqref{l1} plugging \eqref{thetaDef} into it, getting then \eqref{sc1App}. The same can be done to obtain \eqref{sc2App}.
		
		Thanks to Lemma \eqref{evecChi0}, the same logic can be applied to the eigenvectors for $\chi=0$, just making the following replacements:
		\begin{equation}
			x_1 \rightarrow 0, \mu \rightarrow 0.
		\end{equation}
	\end{proof}
	
	\begin{corollary}
		Let $\lambda_{\rho}$ satisfy either the relation \eqref{sc1App} or \eqref{sc2App}, with $\gamma$ defined accordingly to $\chi$ as above. Then for large $\rho$, the following asymptotic relation holds:
		\begin{equation}
			\lambda_{\rho}\sim \rho^{-2}
			\label{scalEvalApp}
		\end{equation}
	\end{corollary}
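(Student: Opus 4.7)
The plan is to run a straightforward asymptotic analysis of the self-consistent equations \eqref{sc1App} and \eqref{sc2App} in the regime $\rho \to \infty$, equivalently $n \to \infty$, and show that this forces $\lambda_\rho \to 0$ with $\sqrt{\lambda_\rho} \sim 1/n$.

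First I would rewrite the numerator of \eqref{sc1App} in a form whose $\lambda_\rho \to 0$ limit is manifest. Since $x_1, x_2$ are the roots of $2p(x)/\sigma - \lambda_\rho/\sigma^2 = 0$, and since $p(x) = \Gamma(\tfrac{1+\chi}{2})^{-1}|x|^\chi e^{-x^2}$ has $p(0)=0$ (for $\chi>0$) and Gaussian tails, we have $x_1 \to 0$ and $x_2 \to \infty$ as $\lambda_\rho \to 0$. The next step is to bound the numerator uniformly and show it converges: the integrand is dominated by $\sqrt{2p(x)/\sigma}$, which is integrable on $\mathbb{R}_+$ because of the Gaussian decay, so by dominated convergence
\begin{equation}
I(\lambda_\rho) \;:=\; \int_{x_1}^{x_2}\!\!dx\,\sqrt{\tfrac{2p(x)}{\sigma}-\tfrac{\lambda_\rho}{\sigma^2}} \;\xrightarrow[\lambda_\rho\to 0]{}\; I_0 \;:=\; \int_{0}^{\infty}\!\!dx\,\sqrt{\tfrac{2p(x)}{\sigma}},
\end{equation}
with $0<I_0<\infty$. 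For $\chi=0$ the only change is $x_1=0$ from the outset, and the same finiteness argument applies.

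Next I would control the denominator. Recall $\mu=\bigl(\chi(\lambda_\rho\Gamma[\tfrac{1+\chi}{2}])^{2/\chi}/(2^{2/\chi}\sigma^{2(1+\chi)})\bigr)^{1/3}\to 0$ as $\lambda_\rho\to 0$ (and $\mu\equiv 0$ for $\chi=0$), so $\gamma_1=\mathrm{Ai}(\mu)/\mathrm{Bi}(\mu)\to\mathrm{Ai}(0)/\mathrm{Bi}(0)$ and $\gamma_2=\mathrm{Ai}'(\mu)/\mathrm{Bi}'(\mu)\to\mathrm{Ai}'(0)/\mathrm{Bi}'(0)$, both finite nonzero constants. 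Hence $\arctan(-\gamma_i^{-1})=O(1)$, and for large $n$ the denominator behaves as $n\pi(1+o(1))$.

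Combining these two facts in \eqref{sc1App} (the case \eqref{sc2App} is identical) gives, for large $n$,
\begin{equation}
\sqrt{\lambda_\rho} \;=\; \frac{I(\lambda_\rho)}{\arctan(-\gamma_1^{-1})+n\pi} \;=\; \frac{I_0}{n\pi}\,(1+o(1)),
\end{equation}
which in particular forces $\lambda_\rho\to 0$, justifying self-consistently the small-$\lambda_\rho$ approximations used above. Squaring and using $\rho=2n+1$ (odd case) or $\rho=2n+2$ (even case) yields $\lambda_\rho\sim (2I_0/\pi)^2\,\rho^{-2}$, establishing \eqref{scalEvalApp}. The only mild subtlety is the self-consistency step: one must check that $\lambda_\rho\to 0$ is indeed the correct branch of the equation for large $n$, which follows because the right-hand side, viewed as a function of $\lambda_\rho$, is bounded above by $(I_0/n\pi)^2$ for small $\lambda_\rho$ and therefore cannot admit an $O(1)$ fixed point once $n$ is large enough.
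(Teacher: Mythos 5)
Your proof is correct and follows essentially the same route as the paper's: show the numerator of the self-consistent relation is $O(1)$ while the denominator scales like $n\pi\sim\rho$, then square. You supply somewhat more justification than the paper (dominated convergence for the numerator, explicit finiteness of $\gamma_{1},\gamma_{2}$ via $\mathrm{Ai}(0)/\mathrm{Bi}(0)$ and $\mathrm{Ai}'(0)/\mathrm{Bi}'(0)$, and the closing self-consistency check that $\lambda_\rho\to 0$ is the relevant branch), but the underlying argument is identical.
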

	\begin{proof}
		Since we are looking at the limit of large ranks $\rho$ (small eigenvalues $\lambda_\rho$), we can just look at the relation \eqref{sc1App} and the same logic can be applied to \eqref{sc2App}. We rewrite \eqref{sc1App} expliciting the relation between $\rho$ and $n$:
		\begin{equation}
			\lambda_{\rho} =\left(\frac{\int_{x_1}^{x_2}dx\sqrt{2\frac{p(x)}{\sigma}-\frac{\lambda_\rho}{\sigma^2}}}{\arctan(-\gamma_{1}^{-1})+(\rho-1)\frac{\pi}{2}}\right)^2.
			\label{v1}
		\end{equation}
		Using \eqref{x1x2full} we have that, at the leading order in small $\lambda_\rho$, the numerator in \eqref{v1} is given by:
		\begin{equation}
			\left(\int_{x_1}^{x_2}dx\sqrt{2\frac{p(x)}{\sigma}-\frac{\lambda_\rho}{\sigma^2}}\right)^2= \mathcal{O}(1),
		\end{equation}
		while the denominator, for large $\rho$:
		\begin{equation}
			\left(\arctan(-\gamma_{1}^{-1})+(\rho-1)\frac{\pi}{2}\right)^2\sim \rho^{2},
		\end{equation}
		then giving the asymptotic relation \eqref{scalEvalApp}.
	\end{proof}
	\subsection{Numerics}
	
	The comparison between the eigenvalues obtained with the self-consistent numerical scheme \eqref{sc1App} and \eqref{sc2App} and the eigenvalues obtained diagonalising a large Gram matrix shows a good agreement. This is shown in Fig. \ref{eva_sc_gram}, realised in log-log scale and for $\sigma=100$ and for $\chi=0$ and $\chi=1$.	For very large $\rho$, the eigenvalues of the Gram matrix decay abruptly because of finite-size effects. The scaling \eqref{scalEval} captures the asymptotic behaviour of the eigenvalues $\lambda_\rho$, as we can notice in Fig. \ref{eva_sc_gram}.
	
	\begin{figure}[H]
		\centering
		\begin{subfigure}{.49\textwidth}
			\centering
			\includegraphics[width=1.\linewidth]{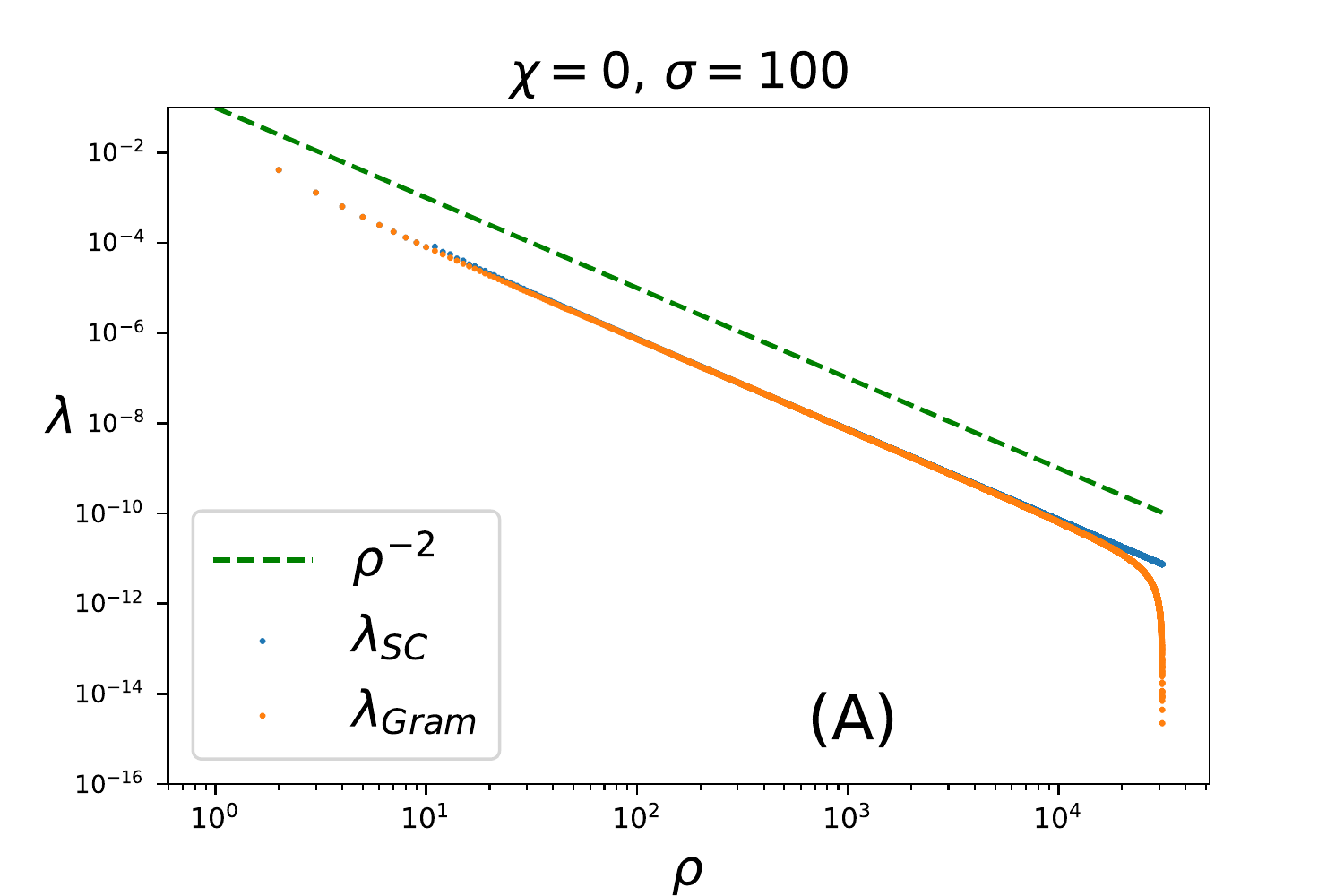}
			
		\end{subfigure}
		\begin{subfigure}{.49\textwidth}
			\centering
			\includegraphics[width=1.\linewidth]{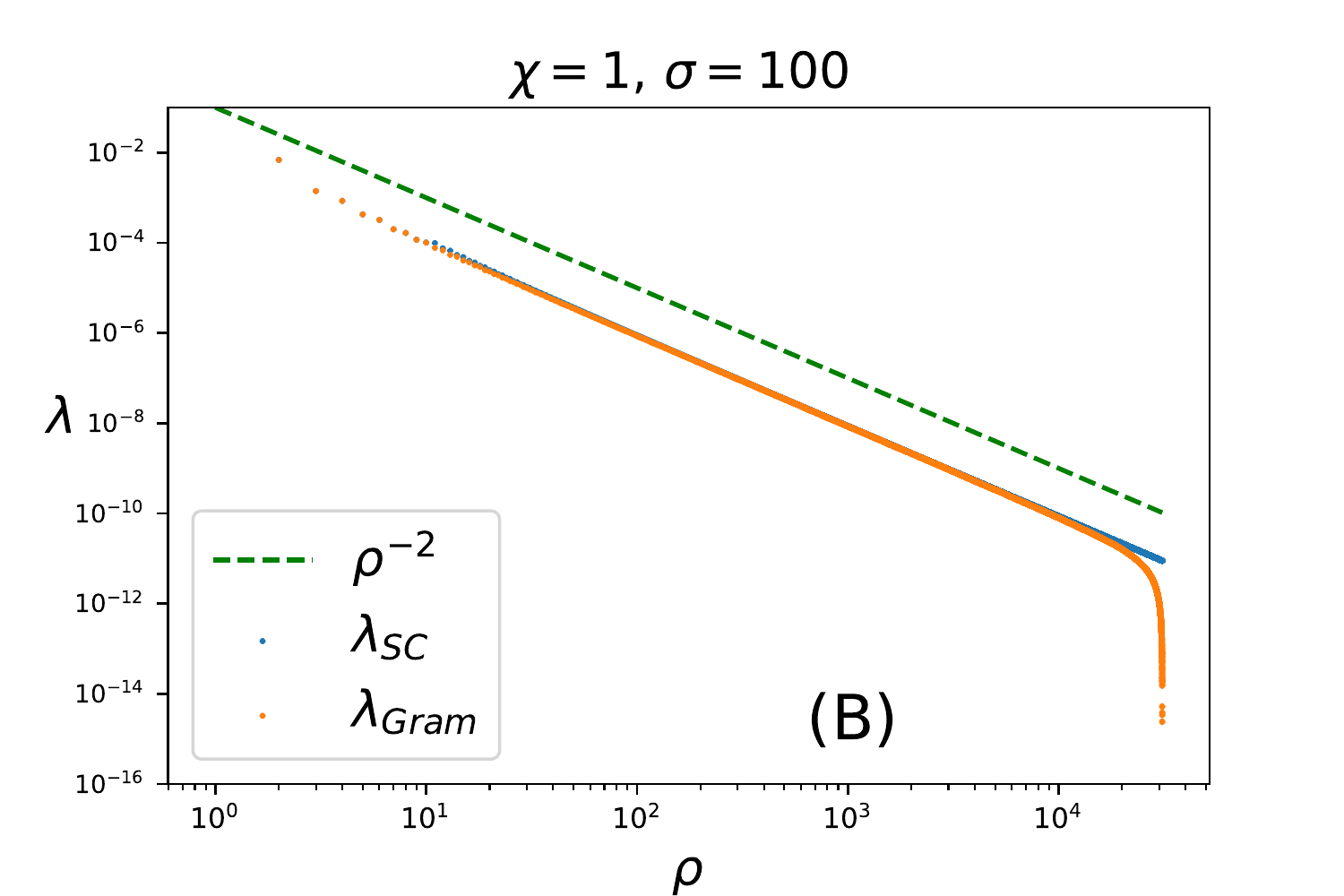}
			
		\end{subfigure}
		\caption{$d=1$, $\sigma=100$. Comparison of the eigenvalues $\lambda_{\rho}$ obtained via the self-consistent numerical scheme \eqref{sc1App} and \eqref{sc2App}, with label $\lambda_{SC}$ (blue points), with the eigenvalues obtained diagonalizing a large Gram Matrix $31k\times 31k$, with label $\lambda_{Gram}$ (orange points), for (A)  $\chi=0$ and (B) $\chi=1$. The dashed green line $\rho^{-2}$ indicates the predicted scaling $\lambda_{\rho}\sim \rho^{-2}$ in \eqref{scalEval}.}
		\label{eva_sc_gram}
	\end{figure}

		We can compute exactly the coefficients $c^2_\rho$  projecting the true function $f^*$ onto the eigenvectors $\phi_\rho$, obtained solving numerically the differential equation \eqref{pdeApp}, using as eigenvalues $\lambda_\rho$ the ones obtained from the numerical scheme \eqref{sc1} for ranks $\rho\ge 10^3$. For $\rho\le10^3$ we use the eigenvalues obtained diagonalising a large Gram matrix $31k\times 31k$. To solve the differential equation \eqref{pdeApp}, we use the method NDSolve in Mathematica. Once we compute them, we can compare their scaling with respect to $\rho$ with the one predicted in \eqref{coeffScalFinal} and the spectral bias prediction:
		\begin{equation}
		   c_{\rho}^2 \sim \rho^{-\frac{2\chi+2-\xi}{\chi+1}}, 
		   \label{coeffScalBias}
		\end{equation}
		obtained combining the test error scaling \eqref{scalTestError} and the spectral bias formula \eqref{sumP}. The comparison of the simulations shows a better agreement with the prediction \eqref{coeffScalFinal} than the spectral bias prediction, as shown in Fig. \ref{coeff_pde}, for $\xi=0$ and $\chi=0$ and 2. This suggests the \emph{non} applicability of the theory presented in \cite{Bordelon2020} in our setting, in the ridgeless case.
		
	\begin{figure*}
		\centering
		\begin{subfigure}{.49\textwidth}
			\centering
			\includegraphics[width=1.\linewidth]{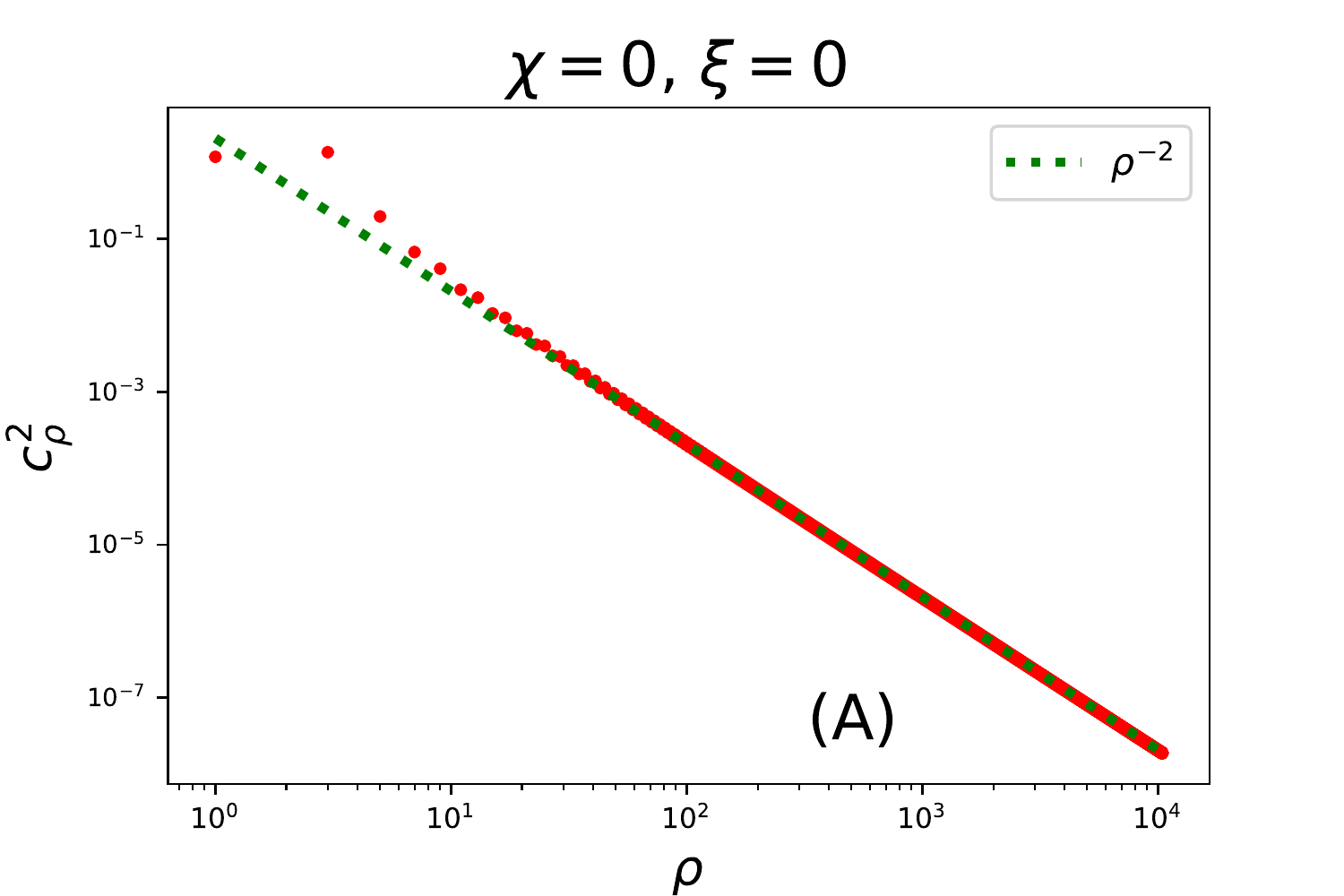}
			
		\end{subfigure}
		\begin{subfigure}{.49\textwidth}
			\centering
			\includegraphics[width=1.\linewidth]{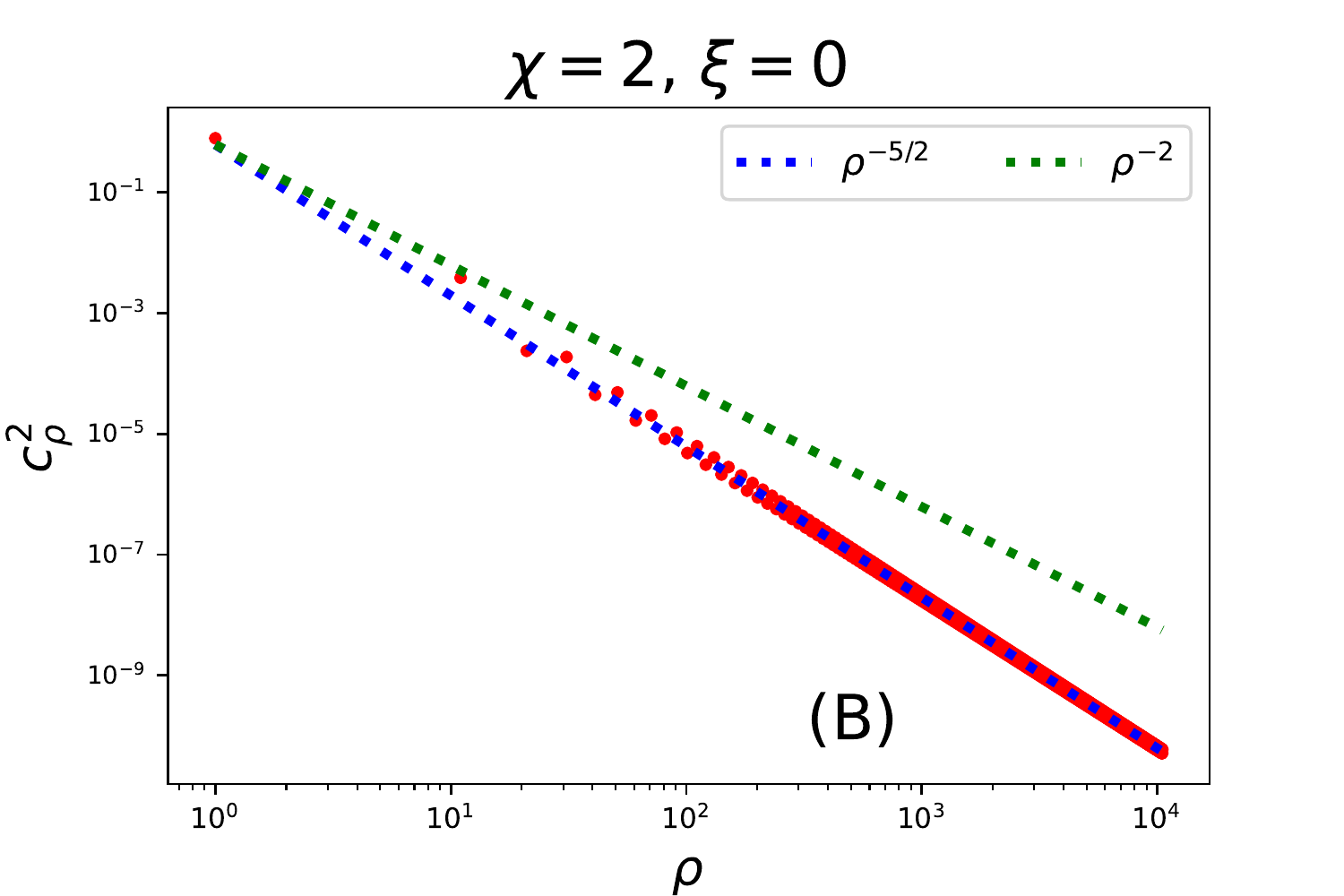}
			
		\end{subfigure}
		\caption{$d=1$, $\sigma=100$, $\xi=0$. Coefficients $c^2_{\rho}$ obtained projecting the true function $f^*$ onto the normalized eigenvector $\Psi_{\rho}$ solution of the PDE \eqref{pdeApp}. The coefficients are plotted with respect to the rank $\rho$ for (A) $\chi=0$ and (B) $\chi=2$. The green dashed line is the spectral bias prediction \eqref{coeffScalBias} and the blue dashed line is the theoretical prediction \eqref{coeffScalFinal}. For (A) $\chi=0$ the two predictions coincide, while for (B) $\chi=2$ they are different.}
		\label{coeff_pde}
	\end{figure*}

	\section{Proofs for finite ridge $\lambda$}\label{argumentPhase}
	\subsection{Case $d=1$}
	\label{scaling-one-d}
	
	\begin{proposition}
		Let $K$ be the Laplacian kernel with width $\sigma>0$: $K(x,y)=K(|x-y|)=\exp(-||x-y||_2 /\sigma)$. Let $f^*(x)$ be the true function and $p(x)$ the data distribution. Then the KRR predictor $f_P$ found via the minimisation problem \eqref{min} with ridge $\lambda$, in the limit of $P\rightarrow\infty$ and $\frac{\lambda}{P}$ finite, is given by the following differential equation:
		\begin{equation}
			\sigma^2 \partial_x^2 f_P(x) =\left(\frac{\sigma}{\lambda/P}p(x)+1\right)f_P(x)-\frac{\sigma}{\lambda/P}p(x)f^*(x).
			\label{sob2}
		\end{equation}
 	\end{proposition}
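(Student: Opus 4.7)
The plan is to pass to the continuum limit $P\to\infty$ with $\lambda/P$ fixed, rewrite the training loss as a functional of $f_P$ alone, and obtain the Euler--Lagrange equation.

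First I would rewrite the empirical risk as an integral against $p$. Since the samples $x_i$ are i.i.d.\ from $p(x)$, the law of large numbers gives
\begin{equation}
\frac{1}{P}\sum_{i=1}^P (f^*(x_i) - f_P(x_i))^2 \longrightarrow \int dx\, p(x)(f^*(x) - f_P(x))^2
\end{equation}
as $P\to\infty$, so the training loss \eqref{min} becomes (after multiplying by $P$ inside and rescaling)
\begin{equation}
\mathcal{L}[f_P] \;=\; P \int dx\, p(x)\,(f^*(x) - f_P(x))^2 \;+\; \lambda\, \|f_P\|_K^2 .
\end{equation}

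Second, I would identify $\|f_P\|_K^2$ explicitly for the Laplace kernel on $\mathbb{R}$. Using the Fourier representation $\widehat{K}(q) = \frac{2\sigma}{1+\sigma^2 q^2}$ together with the RKHS formula $\|f\|_K^2 = \int \frac{|\widehat{f}(q)|^2}{\widehat{K}(q)}\,dq$ (up to an overall convention factor), one gets, by Parseval,
\begin{equation}
\|f_P\|_K^2 \;=\; \frac{1}{2\sigma}\int \bigl(1+\sigma^2 q^2\bigr) |\widehat{f_P}(q)|^2\,dq \;=\; \frac{1}{\sigma}\!\left( \int f_P(t)^2\,dt + \sigma^2\! \int f_P'(t)^2\,dt\right),
\end{equation}
as quoted in the excerpt. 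This reduces the RKHS norm to a local Sobolev-type functional amenable to calculus of variations.

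Third, I would take the functional derivative of $\mathcal{L}$ with respect to $f_P$. Imposing $\delta \mathcal{L}/\delta f_P = 0$ yields
\begin{equation}
-2P\, p(x)\bigl(f^*(x)-f_P(x)\bigr) + \frac{2\lambda}{\sigma} f_P(x) - 2\lambda\sigma\, f_P''(x) \;=\; 0,
\end{equation}
where the last term comes from integration by parts of $\sigma^2\int (f_P')^2$, with boundary terms vanishing since $f_P$ decays at infinity (guaranteed by the ridge term). Dividing by $2\lambda/\sigma$ and rearranging gives exactly \eqref{sob2}.

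The main obstacle is really a clean statement rather than a hidden difficulty: one must justify (i) the replacement of the empirical sum by the expectation uniformly in $f_P$, which is essentially a self-averaging claim valid as $P\to\infty$ with $\lambda/P$ held fixed, and (ii) the Sobolev representation of the RKHS norm, together with the vanishing of boundary terms in the integration by parts. Once these two ingredients are in place, the Euler--Lagrange calculation is a one-line manipulation.
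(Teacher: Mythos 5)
Your proof is correct and follows essentially the same route as the paper's: pass to the continuum limit, write the objective as a Sobolev-type functional using $||f_P||_K^2 = \frac{1}{\sigma}(\int f_P^2 + \sigma^2\int (f_P')^2)$, and take the Euler--Lagrange equation. The only substantive difference is that you derive the Sobolev form of the RKHS norm directly from the Fourier transform $\widehat{K}(q)=\frac{2\sigma}{1+\sigma^2q^2}$, whereas the paper cites \cite{ThomasAgnan1996}; your route is self-contained, though you should pin down the normalization of the RKHS inner product (as you note with ``up to an overall convention factor,'' your Plancherel computation actually produces $\frac{1}{2\sigma}$, not $\frac{1}{\sigma}$, in front of the Sobolev expression --- a constant that does propagate into the coefficient of $p(x)$ in \eqref{sob2} and should be tracked consistently with whichever RKHS convention is used).
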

	
	\begin{proof}
		We start looking at a way to express the kernel norm $||.||_K$. Then, we find the KRR predictor $f_P$ taking the functional derivative of the minimisation problem \eqref{min} and imposing it to be zero.
		
		Let's now look at the kernel norm of the Laplace kernel $K(|x-y|)=\exp(-||x-y||_2 /\sigma)$. For a trial function $u(x)$ in the RKHS of the kernel $K$, the kernel norm is given by:
		\begin{equation}
			||u||_{K}^2=\int \, dx\,\int \, dy\, u(x)K^{-1}(x-y)u(y),
			\label{sobm1}
		\end{equation}
		where $K^{-1}(x-y)$ is the inverse kernel which satisfies $\int dy K^{-1}(x-y)K(y-z) = \delta(x-z)$. In \cite{ThomasAgnan1996} it is proven that the reproducing kernel $K_0(x,y)$ of the Sobolev space $S_{1,1}$ is given by $K_0(x,y)=e^{-|x-y|}$. This means that the kernel norm $||.||_{K_0}$ is given by the norm of $S_{1,1}$:
		\begin{equation}
			||u||_{K_0}^2 = ||u||_{S_{1,1}}^2 =\int dt\,u^2(t)+\int dt\,(u'(t))^2,
			\label{sob0}
		\end{equation}
		for any function $u(x)$ in $S_{1,1}$. 
		Following the proof of \eqref{sob0} in \cite{ThomasAgnan1996}, it is possible to prove that the kernel norm $||.||_{K}$ with $K$ given by the Laplace kernel $K(|x-y|)=\exp(-||x-y||_2 /\sigma)$ is very similar to \eqref{sob0}:
		\begin{equation}
			||u||_{K}^2 = \frac{1}{\sigma}\left(\int dt\,u^2(t)+\sigma^2\int dt\,(u'(t))^2\right).
			\label{sob1}
		\end{equation}
		For $P\rightarrow\infty$ and $\lambda/P$ fixed, we can restate the functional \eqref{min} which we want to minimise in KRR as follows:
		\begin{equation}
			\frac{\lambda/P}{\sigma}\left(\int dt\,u^2(t)+\sigma^2\int dt\,(u'(t))^2\right)+\int \,dx\,p(x)\left(f^*(x)-u\right)^2,
			\label{functional}
		\end{equation}
		for a trial function $u(x)$ in the RKHS of the kernel $K$. If we now take the functional derivative of \eqref{functional} with respect to $u$ and we put it equal to zero, we get the following differential equation for the KRR predictor $f_P$:
		\begin{equation}
			\sigma^2 f_P"(x) =\left(\frac{\sigma}{\lambda/P}p(x)+1\right)f_P(x)-\frac{\sigma}{\lambda/P}p(x)f^*(x).
		\end{equation}
		
	\end{proof}
	
	We want now to get the characteristic scale in $x$ of the predictor $f_P(x)$ with respect to $\lambda/P$. Since we are considering a $p(x)$ that is even in $x$ and an $f^*(x)$ that is odd in $x$, the predictor $f_P(x)$ obtained from Eq. \eqref{sob2} will be an odd function of $x$, therefore $f_P(0)=0$. We consider the characteristic scale $\ell$ of $f_P(x)$ for small $x$ and vanishing $\lambda/P$ as the scale over which the predictor grows from $f_P(0)=0$ to $f_P(\ell)\sim f^*(\ell)$. 
	
	\begin{lemma}
	    Let's consider the KRR predictor $f_P$ which solves the differential equation \eqref{sob2}. Its characteristic scale $\ell(\lambda,P)$, for $x\ll 1$ and $\lambda/P\rightarrow 0$, is given by:
	   \begin{equation}
 		\ell(\lambda,P)\sim \left(\frac{\lambda\sigma}{P}\right)^{\frac{1}{(2+\chi)}}.
    \label{ellAppLemma}
 	\end{equation}
	\end{lemma}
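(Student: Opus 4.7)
The plan is a dimensional-balance argument applied directly to \eqref{sob2}. Near $x=0$, $p(x)\sim |x|^\chi$ since the Gaussian factor $e^{-x^2}$ is effectively $1$ on the small scales we care about. I would introduce the rescaling $x=\ell y$ with $\ell$ to be determined, and plug $\partial_x^2 = \ell^{-2}\partial_y^2$ into \eqref{sob2} to obtain
\begin{equation*}
\frac{\sigma^2}{\ell^2}\,\partial_y^2 f_P(\ell y)=\left(\frac{\sigma\,\ell^\chi}{\lambda/P}\,|y|^\chi+1\right)f_P(\ell y)-\frac{\sigma\,\ell^\chi}{\lambda/P}\,|y|^\chi f^*(\ell y).
\end{equation*}
In the regime $\lambda/P\to 0^+$ one then seeks $\ell$ such that the diffusive term on the left and the dominant potential term on the right enter at the same order, giving $\sigma^2/\ell^2\sim \sigma\ell^\chi/(\lambda/P)$, hence $\ell^{2+\chi}\sim \sigma\lambda/P$, which is \eqref{ellAppLemma}. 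The subleading $+1$ in the coefficient is negligible provided $\ell\ll\sigma$, a condition satisfied a posteriori for small $\lambda/P$.

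Next I would argue that this $\ell$ is really the scale over which $f_P$ grows from $f_P(0)=0$ to $f_P(x)\approx f^*(x)$. The oddness of $f_P$ (and therefore $f_P(0)=0$) follows from the symmetries $p(-x)=p(x)$ and $f^*(-x)=-f^*(x)$ applied to \eqref{sob2}. For $|x|\gg\ell$, the coefficient $\sigma p(x)/(\lambda/P)$ in front of $f_P$ dominates both the constant $1$ and the $\sigma^2 \partial_x^2$ term, and the equation is solved at leading order by $f_P(x)\approx f^*(x)$. Consequently the only window in which $f_P$ can vary appreciably is $|x|\lesssim \ell$, identifying $\ell$ as the characteristic scale.

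An equivalent route, that connects naturally with Section \ref{sec:kernel_decomposition}, is to note that the homogeneous part of \eqref{sob2} is of the form \eqref{wkb1} with $\Gamma^2(x)$ proportional to $\sigma p(x)/(\lambda/P)+1$ and with $\lambda/P$ playing the role of the small WKB parameter. The point at which the WKB amplitude $|\Gamma^2(x)|^{-1/4}$ ceases to be slowly varying—i.e. where the linearisation-at-the-turning-point breaks the bulk WKB—is determined by exactly the same balance as in the computation of $\hat x$ in the proof of Lemma \ref{evecChiMagg0}, and recovers $|x|\sim (\sigma\lambda/P)^{1/(2+\chi)}$, confirming \eqref{ellAppLemma}.

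The main obstacle I anticipate is promoting this informal balance-of-terms heuristic to a clean statement. One must verify that the rescaled equation admits a bounded order-one solution on an $O(1)$ interval in $y$, which follows from the linearity and smoothness of the rescaled coefficients, together with the matching condition $f_P(\ell y)\to f^*(\ell y)$ as $|y|\to\infty$. Checking that the neglected Gaussian factor $e^{-x^2}$ and the dropped constant $1$ contribute only subleading corrections to $\ell$ is a minor book-keeping step, controlled respectively by $\ell\ll 1$ and $\ell\ll\sigma$, both of which hold for small $\lambda/P$.
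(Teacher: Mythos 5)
Your proposal is correct and arrives at \eqref{ellAppLemma} by a valid argument. The paper's formal proof of this lemma is more machinery-heavy than your primary route: it writes $f_P=A(x)u_1(x)+B(x)u_2(x)$ by variation of parameters, constructs $u_{1,2}$ by a WKB expansion of the homogeneous equation $\sigma^2 u''=(\sigma p/(\lambda/P)+1)u$, and reads off $\ell$ as the scale over which the exponent $\propto(\sigma\lambda/P)^{-1/2}x^{1+\chi/2}$ becomes of order one, then argues that $A(x)$ and $B(x)$ inherit the same scale. After that formal proof the paper adds a remark giving exactly your dimensional-balance argument: match $\sigma^2 f_P''\sim\sigma^2 f^*(\ell)/\ell^2$ against $(\sigma/(\lambda/P))p(\ell)f^*(\ell)\sim(\sigma/(\lambda/P))\ell^\chi f^*(\ell)$ to get $\ell^{2+\chi}\sim\sigma\lambda/P$. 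Your proposal thus elevates the paper's quick remark to the main argument and treats the WKB calculation as the secondary confirmation, while the paper does the opposite. Both routes are sound; the variation-of-parameters route gives more information (the actual asymptotic profile of $f_P$ near and away from the boundary), whereas the rescaling/balance route is shorter and makes the origin of the exponent $1/(2+\chi)$ more transparent. Your a-posteriori checks that $\ell\ll 1$ (to linearize $p$) and $\ell\ll\sigma$ (to drop the $+1$) are the right ones and are implicit in the paper's treatment. One optional tightening: your sentence ``for $|x|\gg\ell$, \dots{} the equation is solved at leading order by $f_P\approx f^*$'' should be read as an outer-region statement; strictly the WKB amplitude $|\Gamma^2(x)|^{-1/4}\sim(p(x))^{-1/4}$ still varies, but that does not affect the crossover scale $\ell$, which is set by the exponent, so the conclusion stands.
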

	
	\begin{proof}
	We notice that the differential equation \eqref{sob2} solved by $f_P$ is an inhomogeneous version of the following homogeneous equation:
		\begin{equation}
			\sigma^2 u"(x) =\left(\frac{\sigma}{\lambda/P}p(x)+1\right)u(x),
			\label{sob10}
		\end{equation}
	solved by a generic function $u(x)$. Using the variation of parameters method \cite{Teschl2004}, the general solution $f_P$ of the inhomogeneous equation \eqref{sob2} is given by a linear combination in two independent solutions $u_1(x)$ and $u_2(x)$ of the homogeneous \eqref{sob10}:
	\begin{equation}
	    f_P(x) = A(x)u_1(x) + B(x)u_2(x),
	    \label{sob11}
	\end{equation}
	where the functions $A(x)$ and $B(x)$ satisfy the following relation:
	\begin{equation}
	    A'(x)u_1(x)+B'(x)u_2(x)=0.
	\end{equation}
	Imposing that $f_P$, in the form \eqref{sob11}, solves \eqref{sob10}, the following expressions for $A(x)$ and $B(x)$ are obtained:
	\begin{equation}
	\begin{aligned}
	  A(x) &= \frac{\sigma}{\lambda/P}\int_0^x \,dy\, \frac{1}{W(y)}u_2(y) p(y)f^*(y)+a\\
	  B(x) &= -\frac{\sigma}{\lambda/P}\int_0^x \,dy\, \frac{1}{W(y)}u_1(y) \frac{\sigma}{\lambda/P}p(y)f^*(y)+b,
	  \label{sob13}
	\end{aligned}
	\end{equation}
	where $a$ and $b$ are integration constants and $W$ is the Wronskian of $u_1$ and $u_2$:
	\begin{equation}
	    W(y) = u_1(y)u'_2(y)-u'_1(y)u_2(y),
	\end{equation}
	which is different from 0 since $u_1$ and $u_2$ are independent. We now obtain an expansion in $\lambda/P$ for the solutions $u_1$ and $u_2$ . The homogeneous equation \eqref{sob10} belongs to the type of second-order equations that can be solved by the WKB method \cite{Ghatak1991} in the limit of small $\lambda/P$. It is a method of multi-scale analysis and  the idea behind it is described in Section \ref{1d1}. The generic WKB solution which is proposed to solve \eqref{sob10} has the form:
	\begin{equation}
	    u(x) = e^{\pm i \frac{S(x)}{\lambda/P}},\quad S(x) =S_0(x)+(\lambda/P)S_1(x)+ o\left(\frac{\lambda}{P}\right),
	    \label{sob15}
	\end{equation}
	where $S(x)$ has been expanded in powers of the small parameter $\lambda/P$. Looking at the order 0 in $\lambda/P$, we get the following two independent solutions:
	\begin{equation}
	    u_{1,2}(x) = C e^{\pm \sqrt{\frac{\sigma}{\lambda/P}}\int_0^x\,d\eta\sqrt{p(\eta)+\frac{\lambda/P}{\sigma}}} +O(\lambda/P),
	    \label{sob12}
	\end{equation}
	where $C$ is a constant. We stop at the order 0 in $\lambda/P$ in the expansion \eqref{sob15} since we are interested only in the characteristic scale of $u(x)$ with respect to $x$, which can be extracted by the exponential in \eqref{sob12}. Indeed, considering higher orders in $\lambda/P$, we would get a polynomial factor multiplying the exponential in \eqref{sob12}, as shown in \eqref{wkb4} in the main text. Plugging \eqref{sob12} into the expression of $A(x)$ in \eqref{sob13} we get:
	\begin{equation}
	    A(x) = \sqrt{\frac{\sigma}{\lambda/P}} \int_0^x\,dy\, u_2(y) \sqrt{p(y)}f^*(y)+a.
	    \label{sob14}
	\end{equation}
	We now extract the characteristic scale of the first term $A(x)u_1(x)$ in the relation defining $f_P$ in \eqref{sob11}. The same analysis will apply for the second term in that relation. The exponent in \eqref{sob12}, for small $\lambda/P$ and small $x$, is given by $\sqrt{\frac{\sigma}{\lambda/P}}\int_0^{x}dy\  y^{\chi/2} \propto \sqrt{\frac{\sigma}{\lambda/P}}x^{1+\chi/2}$, yielding the following scale for the functions $u_1(x)$ and $u_2(x)$ at small $x$:
 	\begin{equation}
 		\ell(\lambda,P)\sim \left(\frac{\lambda\sigma}{P}\right)^{\frac{1}{(2+\chi)}}.
    \label{ellApp}
 	\end{equation}
	The characteristic scale of $A(x)$ in \eqref{sob14} is given by the scale of $u_2$, which is again that of \eqref{ellApp}. Indeed, for small $x$ and $y$, the factor $\sqrt{p(y)}f^*(y) $ in \eqref{sob14} is just a polynomial factor $y^{\frac{\chi}{2}-\xi}$, which does not affect the fact that the main scale of $A(x)$ is the one given by $u_2$. As a consequence, the characteristic scale of the predictor $f_P$ is given by \eqref{ellApp}.
	\end{proof}
	We remark that it exists a second, and quicker, way to obtain the characteristic scale $\ell$ with respect to $x$ of $f_P$. We notice that the left hand side of Eq. \eqref{sob2} scales as:
	\begin{align}
	    \sigma^2 f_P''(x)\sim \sigma^2\frac{f^*(\ell)}{\ell^2}    
	\end{align}
    and the right hand side of \eqref{sob2} scales as:
    \begin{align}
        \frac{\sigma}{\lambda/P}p(x)\left(f_P(x)-f^*(x)\right)\sim \frac{\sigma}{\lambda/P}p(\ell)f^*(\ell) \sim \frac{\sigma}{\lambda/P}\ell^\chi f^*(\ell)
    \end{align} 
    Comparing the two sides we obtain again the characteristic scale \eqref{ellApp}.
	

	\subsection{Case $d>1$}
    \label{scaling-high-d}
    
    For a given kernel $K(x-y)$, we consider the predictor $f_P(x)$ that minimizes the functional $\lambda/P||f_P||^2_{K} + \int \,dx\,p(x)\left(f^*(x)-f_P(x)\right)^2$.    
    The predictor $f_P(x)$ can be written as $f_P(x)=\int d^d\eta \frac{p(\eta) f^*(\eta)}{\lambda/P} G(x,\eta)$, where the Green function $G(x,\eta) = G_{\eta}(x)$ satisfies the equation
    $\int d^dy K^{-1}(x-y) G_{\eta}(y) = \frac{p(x)}{\lambda/P} G_{\eta}(x) + \delta(x-\eta)$, where  $\int d^dy K^{-1}(x-y)K(y-z)=\delta(x-z)$.
    Taking the Fourier transform $\mathcal{F}[...]$ we get
    \begin{align}
        \mathcal{F}[K](q)^{-1} \mathcal{F}[G_{\eta}](q) = \frac{1}{\lambda/P} \mathcal{F}[p\  G_{\eta}](q) + e^{-i q \eta}
        \label{fourier}
    \end{align}
    where $q$ is the Fourier frequency. We now estimate each of this term, in the limit of small $x$, large $q$ and vanishing $\lambda/P$. Since the term $e^{-i q \eta}$ is such that $|e^{-i q \eta}|=1$, we drop the dependence of $G_{\eta}$ from $\eta$.
    
     For small $x\ll 1$, the transform on the right hand side of Eq. \eqref{fourier} becomes $\mathcal{F}[p(x)\  G(x)](q) \sim \mathcal{F}[x^{\chi}\  G(x)](q) \sim \partial_q^{\chi} \mathcal{F}[G](q)$. Assuming a power-law behavior (to be confirmed self-consistently) of this quantity for large $q$, we have $\partial_q^{\chi} \mathcal{F}[G](q)\sim q^{-\chi}\mathcal{F}[G](q)$.
    
    Furthermore, for a Laplace kernel we have $\mathcal{F}[K](q)^{-1} \sim q^{1+d}$. 
    
    Comparing these two terms, we  obtain two regimes: 
    \begin{eqnarray}
    \mathcal{F}[G](q)&\sim& q^{-1-d}\ \ \  \hbox{for}\ \ \ q\gg q_c\\
    \mathcal{F}[G](q)&\sim& \frac{\lambda}{P}q^\chi\ \ \  \hbox{for}\ \ \ q\ll q_c\\
      \hbox{with}\ \ \   q_C&\sim& \left(\frac{\lambda}{P}\right)^{-\frac{1}{1+d+\chi}}
    \end{eqnarray}
    \vspace{-.2em}
     Thus in magnitude, $\mathcal{F}[G](q)$ is maximum for $q\sim q_c$. It implies that in real space, $G(x)$ is characterized by a length scale:
       \begin{equation}
           \ell(\lambda,P)\sim 1/q_c\sim \left(\frac{\lambda}{P}\right)^{\frac{1}{1+d+\chi}}
       \end{equation} 
    
    
    
	
	\vspace{-1em}
	\section{Sampling scheme details} \label{num}
	In this appendix we give further details about our sampling scheme for the training and test sets used for KRR simulations. \subsection{One dimension}
	
	\textbf{Training set} We sample $P$ points from the PDF \eqref{pdf}, in the interval $x\in [-x_{\text{max}},x_{\text{max}}]$. We do it using the rejection sampling algorithm. We choose $x_{\text{max}}=3$. 
	
	\textbf{Test set} Given $\lambda$ and $P$, we find the characteristic length of the predictor $f_P$, computed from the training set, as follows. We compute the derivative of $f_P$ on a fine grid over $x\in[0,x_{\text{max}}]$. We take as estimate of the characteristic length of $f_P$ the point $\tilde{x}$ such that $f_P'(\tilde{x})=\frac{1}{10}f_P'(0)$. Then, we divide the interval $x\in[0,x_{\text{max}}]$ in $m$ bins $[x_j,x_{j+1}]$ of width given by $\tilde{x}$, with $j\in{0,...,m-1}$. Then we compute the contribute of the test error $\varepsilon_t$ as an integral over a grid made by $10^5\cdot e^{-j}+q$ points, where $q$ is given by $\text{max}[\frac{1}{m}10^5,2000]$. The number $q$ states the minimum number of points per bin. If $\frac{1}{m}10^5 >2000$ then we sample at least $10^5$ points (in addition to $10^5\cdot e^{-j}$), otherwise we sample at maximum 2000 points per bin. We sum the contributes over $j$ to get the full $\varepsilon_t$. 
	\vspace{-1em}
	\subsection{$d>1$ case}
	In the case of generic dimension $d$ described in Section \ref{setting}, the sampling along the informative direction $x_1$ is the same as in the one-dimensional case above. For the other $d$ coordinates, we first sample from a $d-$dimensional standard Gaussian distribution. Secondly, we normalize these coordinates by their $d-$dimensional $L_2$ norm, to collocate the points on a cylindrical surface.
	

	\vspace{-1em}
	\section{Additional Figures} \label{additionalFigures}

	In Fig. \ref{realdata_mnist} we repeat the analysis done for CIFAR10 in the main text in Fig. \ref{realdata_cifar10} for a binary version of the dataset MNIST.

    \begin{figure*}[b]
		\centering
		\includegraphics[width=0.85\linewidth]{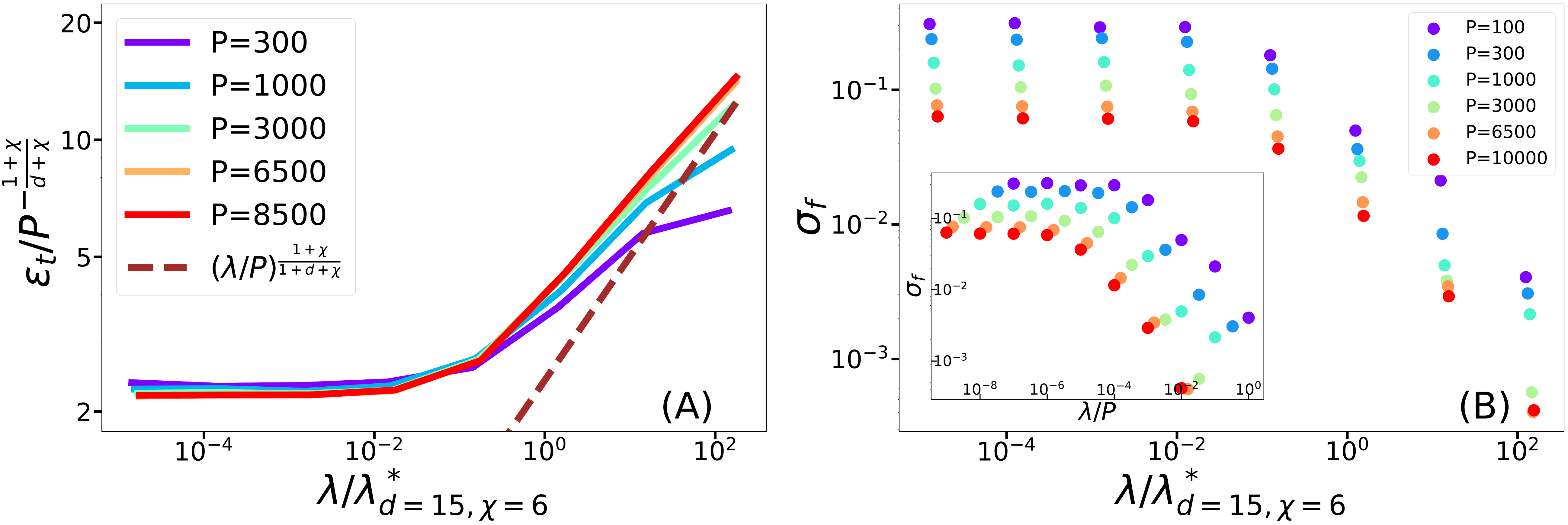}
		\caption{Binary MNIST. (A): Empirical test error $\varepsilon_t$ {\it v.s. } ridge. Each quantity is rescaled by our predictions  \eqref{errT-scaling} and \eqref{lstarD} for $d=15$ and $\chi=6$. The dashed brown line is the scaling prediction of the test error with respect to $\lambda$ of \eqref{errBlambda}. (B) Inset: variance of the predictor $\sigma_f$ {\it v.s.}   re-scaled ridge $\lambda/P$. Main plot: After rescaling the ridge by $\lambda^*_{d=15,\chi=6}$, curves nearly collapse.}
		\label{realdata_mnist}
	\end{figure*}

\end{document}